\documentclass[10pt]{article} % For LaTeX2e

\usepackage[accepted]{rlj} % Should be uncommented for the camera-ready
\usepackage{amssymb}            % Defines common symbols like \mathbb R
\usepackage{mathtools}          % Extends amsmath, providing common math tools
\usepackage{mathrsfs}           % Enables \mathscr, which can work in cases that \mathcal does not
\usepackage{graphicx}           % For including images
\usepackage{subcaption}         % Allows for the use of subfigures and subcaptions
\usepackage[space]{grffile}     % For spaces in image names
\usepackage{url}                % For displaying URLs
\usepackage{lipsum}             % For placeholder text

\RequirePackage{algorithm}
\usepackage{algorithmic}
\input{defs.tex}
\newenvironment{proof}{\par\noindent{\bf Proof:\ }}{\eproof} 
\title{Value Bonuses using Ensemble Errors\\ for Exploration in Reinforcement Learning}

\setrunningtitle{Value Bonuses using Ensemble Errors for Exploration in Reinforcement Learning}
\author{Abdul Wahab\textsuperscript{1}, Raksha Kumaraswamy\textsuperscript{2}, Martha White\textsuperscript{1,3}}

\emails{\{wahab1, whitem\}@ualberta.ca, \ raksha.kumaraswamy@sony.com}

\affiliations{
$^{1}$\textbf{Department of Computing Science, University of Alberta, Canada}\\
$^{2}$\textbf{Sony AI}\\
$^{3}$\textbf{Alberta Machine Intelligence Institute (Amii)}\\
}

\contribution{
    Proposes a new approach for estimating Value Bonuses using Ensemble Errors that allows for first-visit optimism and deep exploration.
    }
    {
    Many prior works utilize the idea of \textit{value bonuses} to estimate optimistic values for exploration \citep{osband2019deep}. Typically, many estimate an additional value function that propagates \textit{reward bonuses} in order to estimate the value bonus \citep{burda2018exploration}. This work proposes a variant of value bonuses that does not rely on propagating additional reward bonuses; this allows for desirable features like first-visit optimism. We show our framework allows for Optimistic Initial Values with high probability. Additionally, the value bonuses have a similar timescale of learning as the main value function, therefore, potentially allowing for deep exploration.
    }

\contribution{
    Provides insight into how our proposed value bonuses are similar to and different from some relevant widely-used approaches. 
    }
    {
    In specific, we contrast how the proposed bonuses capture MDP-specific properties like transition dynamics, unlike those of RND \citep{burda2018exploration}. Additionally, we highlight the similarity between quantities estimated by the proposed algorithm and BDQN \citep{osband2019deep}, despite the difference that BDQN utilizes a Thompson sampling approach to induce optimism. 
    }

\contribution{
    Empirically evaluate the utility of the proposed value bonuses for inducing exploration in classic-control problems, and demonstrate scalability through Atari.
    }
    {
    We demonstrate how existing methods lack first-visit optimism in a controlled setting designed to test state coverage. We show that the proposed algorithm can extend to more complex environments like Atari without design choices that alter the underlying algorithm.
    }

\keywords{Reinforcement Learning, Exploration, Value bonuses, Ensembles, Uncertainty estimates} % Your keywords

\summary{
Optimistic value estimation can be useful to direct exploration and improve sample efficiency in Reinforcement Learning. Despite many such methods in literature, simpler, undirected approaches like $\epsilon$-greedy still continue to be widely used. One potential reason for this is that many existing methods can be onerous to use as they may not be compatible with the base learning algorithm, or can be hard-to-use as many design choices need to be made to make them effective in practice. This paper proposes a simple approach to address these limitations. Building on ideas that utilize an ensemble for optimistic value estimation, this work proposes an algorithm called Value Bonuses using Ensemble Errors (VBE) that is easy to use and compatible with any base Reinforcement Learning algorithm, with a small additional computational foot-print. VBE's similarity and difference to existing approaches is discussed, and the algorithm is evaluated extensively. Our code is available at: \href{https://github.com/mirzaabdulwahab1612/VBE}{\texttt{https://github.com/mirzaabdulwahab1612/VBE}}
}

\newcommand{\esize}{k}
\newcommand{\targetq}[1]{f_{#1}}
\newcommand{\ensembleq}[1]{f_{w_{#1}}}
\newcommand{\ensembleqt}[1]{f_{\tilde{w}_{#1}}}
\newcommand{\ensembler}[1]{r_{#1}}

\newcommand{\vb}{b}
\newcommand{\bscale}{c}

\begin{document}

\makeCover  % Create the cover page
\maketitle  % Make the title section

\begin{abstract}
Optimistic value estimates provide one mechanism for directed exploration in reinforcement learning (RL). The agent acts greedily with respect to an estimate of the value plus what can be seen as a \emph{value bonus}. The value bonus can be learned by estimating a value function on \emph{reward bonuses}, propagating local uncertainties around rewards. However, this approach only increases the value bonus for an action retroactively, after seeing a higher reward bonus from that state and action. Such an approach does not encourage the agent to visit a state and action for the first time. In this work, we introduce an algorithm for exploration called Value Bonuses with Ensemble errors (VBE), that maintains an ensemble of random action-value functions (RQFs). VBE uses the errors in the estimation of these RQFs to design value bonuses that provide first-visit optimism and deep exploration. The key idea is to design the rewards for these RQFs in such a way that the value bonus can decrease to zero. We show that VBE outperforms Bootstrap DQN and two reward bonus approaches (RND and ACB) on several classic environments used to test exploration and provide demonstrative experiments that it can scale easily to more complex environments like Atari. We provide code for VBE \href{https://github.com/mirzaabdulwahab1612/VBE}{here}.
\end{abstract}

\section{Introduction}
\label{sec:introduction}
A typical approach to incorporate exploration into a value-based reinforcement learning (RL) agent is to obtain optimistic value estimates. The agent takes greedy actions according to this optimistic value estimate, leading it to take actions that look good either because they have high uncertainty or because the action is actually high value. This approach has been well-developed for the contextual bandit setting, with a variety of algorithms and theoretical results on optimality \citep{li2010acontextual, abbasi2011improved}. Understanding is growing about how to soundly extend these ideas to reinforcement learning, though the theoretical results on estimating and using optimistic values are limited to the linear function approximation setting \citep{grande2014sample,osband2016generalization,abbasi2019exploration,wang2019optimism}. 

Though the theory is difficult to extend, there has been a concerted effort to develop and empirically evaluate such optimistic value estimation approaches for the deep RL setting. Bootstrap DQN with priors, for example, maintains an ensemble of action-values, which reflect uncertainty in the value estimates \citep{osband2018randomized,osband2019deep}. It takes a Thompson sampling approach---which can be seen as optimistic---by sampling one action-value in the ensemble and following it for an entire episode. 
%Another common approach to obtain optimistic value estimates are \emph{reward bonuses} \citep{Bellemare2016unifying,ostrovski2017count,burda2018exploration,ash2022anticoncentrated}, where an uncertainty for the transition is added to the reward, increasing the value for states and action where the outcome was uncertain 
Another common approach to obtain optimistic value estimates employs the usage of \emph{reward bonuses} \citep{Bellemare2016unifying,ostrovski2017count,burda2018exploration,ash2022anticoncentrated}. A reward bonus, reflecting uncertainty with respect to the transition, is added to the reward, increasing the estimated value proportionally for the corresponding states and action.

Most works, however, eschew these directed exploration approaches in favor of simpler, undirected exploration approaches like $\epsilon$-greedy. One potential reason for this is that reward bonus approaches 
%inherently have a key limitation that they 
do not encourage \emph{first-visit optimism}. They encourage revisiting a state, if the reward bonus was high in that state; namely, they retroactively reason about uncertainty of states they have seen. The reward bonus cannot encourage visiting a state for the first time.
% Do we even end up taling about epistemic uncertainty?
%meaning an important part of epistemic uncertainty is not accounted for. 
Bootstrap DQN with priors (BDQN), on the other hand, does not have this issue, using fixed additive priors to provide first-visit optimism. Unlike reward bonuses, though, BDQN is more onerous to use. It requires completely changing the algorithm to one that maintains and updates an ensemble, and making key choices like how often to follow one of the value functions in the ensemble before switching. 
% MARTHA: I am having a hard time believe this as I write it
%Additionally, it couples the return estimates and uncertainty estimates into one value function, making it difficult to assess the current level of uncertainty .  making choices a shift from a simpler base algorithm like DQN  One disadvantage of this approach is that couples return estimates and uncertainty estimates into one value function, rather than learning uncertainty separately. 
% MARTHA: Do we know it makes learning harder somehow? I would think it doesnt actually
%This approach is stringent as the exploration part is coupled with value learning, i.e., the value function has to adjust around the fixed priors. 
%BDQN also relies on a large ensemble size to ensure that the randomly initialized fixed prior provides optimism. It is also not clear when to switch to a different value function, especially in the non-episodic setting. 
%There is mixed empirical support for Bootstrap DQN in the literature \citep{find} \raksha{maybe remove this part? can't find good cites except bsuite paper for bdqn-p}, and 
Recent work suggests it is key to have a large ensemble for BDQN \citep{janz2019successor,osband2023approximate}. Epinets \citep{osband2023approximate} can match the performance of BDQN with much less compute, but are arguably even more onerous to implement than BDQN. Our goal is to develop an easy-to-use exploration approach for deep RL, that can easily be added to an existing algorithm, making it less onerous to displace the default $\epsilon$-greedy approach. 

To do so, we explore how to directly estimate a \emph{value bonus}. The agent acts greedily according to the value estimate plus this separate value bonus $\vb$, namely $\argmax_a q(s,a) + \vb(s,a)$. The value bonus should ideally represent the uncertainty for that state and action. Though this may be the first time this term is used,\footnote{Usually, $\vb$ would be called a confidence interval, with $q(s,a) + \vb(s,a)$ an upper confidence bound. However, we do not use that term here, because for the heuristics we use, it is not clear we get a valid upper confidence bound. Instead, it is a bonus added to the value when deciding which action looks promising.} there are some works that estimate value bonuses. One simple approach is to separate out the reward bonuses and learn them with a second value function, as was proposed for RND \citep{burda2018exploration} and later adopted by ACB \citep{ash2022anticoncentrated}. This approach, however, still suffers from the fact that reward bonuses are only retroactive, and the resulting $\vb$ is unlikely to be high for unvisited states and actions. For the contextual bandit setting, the ACB algorithm actually directly estimates the value bonus using the maximum over an ensemble of functions, which is high for unvisited states and actions; but the extension to deep RL with reward bonuses loses this first-visit optimism. UCLS \citep{kumaraswamy2018context} and UBE \citep{odonoghue2018uncertainty,janz2019successor} both directly estimate value bonuses, but are limited to linear function approximation. Dora \citep{choshen2018dora} uses value bonuses that are inversely proportional to visitation counts, which is again difficult to extend to the general function approximation setting.

In this work, we introduce a new approach to obtain value bonuses for reinforcement learning, with an algorithm we call Value Bonuses with Ensemble errors (VBE). Similarly to ACB, we use a maximum over an ensemble, but directly use that maximum as the value bonus, rather than indirectly through reward bonuses. 
%The key to directly using the ensemble for the value bonus is to treat the functions in the ensemble as value functions. 
%We introduce a novel update for the value functions in the ensemble, that better mimics the learning dynamics under bootstrapping. 
The idea is to sample a random action-value function (RQF)---such as a random neural network---and extract the implicit random reward function underlying this RQF target. The RQF predictor in the ensemble is updated using temporal difference learning on this random reward. 
Because the RQF target is sampled from the same function class as the RQF predictor, the error can eventually reduce to zero, allowing the value bonus to shrink to zero. These value bonuses are learned separately from the main action-values, and so can be layered on top of many algorithms. In our experiments, for example, we simply use Double DQN \citep{van2016deep}, and modify the step where the agent selects an action from $\epsilon$-greedy to instead taking the greedy action in the value estimate plus the value bonus. We show that this simple approach is an effective, and scalable method for exploration that improves sample efficiency of learning in a range of domains: from hard exploration gridworlds, to image-based Atari domains.

\section{Background}
\label{sec:background}
We focus on the problem of an agent learning optimal behaviour in an environment, whose interaction process is modelled as a Markov Decision Process (MDP). A MDP consists of $(\States, \Actions, P, \Rfcn,\gamma)$
where
$\States$ is the set of states;
$\Actions$ is the set of actions;
$P: \States \times \Actions \times \States \rightarrow [0,\infty)$ provides the transition probabilities;
$\Rfcn: \States \times \Actions \times \States \rightarrow \RR$ is the reward function;
and $\gamma:  \States \times \Actions \times \States \rightarrow [0,1]$ is the transition-based discount function which enables either continuing or episodic problems to be specified \citep{white2017unifying}.
At each step, the agent selects action $A_t$ in state $S_t$,  and transitions to $S_{t+1}$, according to $P$, receiving reward
$R_{t+1} \defeq \Rfcn(S_t, A_t, S_{t+1})$ and discount $\gamma_{t+1} \defeq \gamma(S_t, A_t, S_{t+1})$.

For a policy $\pi: \States \times \Actions \rightarrow [0,\infty]$,
the value for taking action $a$ in state $s$
is the expected discounted sum of future rewards, with actions selected according to $\pi$ in the future,
\begin{equation*}
q^\pi(s, a) = \E_\pi\left[R_{t+1} + \gamma_{t+1} q^\pi(S_{t+1}, A_{t+1}) \Big| S_t = s, A_t = a \right]
\end{equation*}
where $\E_\pi$ means that actions are selected according to $\pi$ in the expectation. The policy $\pi$ can be progressively improved by making it greedy in $q^\pi(s, a)$, then updating the action-values for the new policy, then repeating until convergence. 

In practice, these steps are approximated. The action-values $q^\pi$ are approximated using $q_w$ parameterized by $w \in \wspace \subset \RR^{\xdim}$. One algorithm to estimate $q_w$ is Double DQN (DDQN). DDQN is an off-policy algorithm, meaning that it uses a different behavior policy $\pi_b$ to select actions from the policy it evaluates, which is greedy in $q_w$. This algorithm uses a target network $q_{\tilde{w}}$ for bootstrapping, giving the following update for one transition $(s,a,r,s',\gamma)$:
\begin{equation}
w \gets w + \eta \delta \nabla q_w(s,a) \quad \text{ for } \delta \defeq r + \gamma q_{\tilde{w}}(s', \argmax_{a'} q_w(s',a')) - q_w(s,a) \label{eq_ddqn}
\end{equation}

The behavior policy is typically defined to be $\epsilon$-greedy in $q_w$, but can be any policy that promotes exploration. 
In this work, we consider an alternative choice for the behavior policy: one that uses a value bonus $\vb$, $\pi_b(s) = \argmax_a q_w(s,a) + \vb(s,a)$. The value bonus should reflect uncertainty in the action-value estimate, encouraging the behavior policy to take an action in a state if it has high uncertainty. It might have high uncertainty if $(s,a)$ is quite different from what it has seen before---meaning it has never been visited---or because the agent has not yet visited it sufficiently often to be certain about its value. The focus of this work is a new approach for obtaining $\vb$ for the deep RL setting.
% MARTHA: Too much, implued by insufficiently often
%The amount that the agent has to revisit $(s,a)$ will depend on the stochasticity in the rewards and transitions, as well as the uncertainty in the states that $(s,a)$ leads to.   

% How to effectively estimate $\vb(s,a)$ in the general reinforcement learning setting with nonlinear function approximation for $q_w$ remains an open question. %There are some suggestions, as outlined in the introduction, but more work needs to be done here. 
% We introduce a new approach to estimate $\vb$ in the next section.

\section{Value Bonuses with Ensemble Errors}\label{sec_rqfs}

In this section, we first motivate why we use an ensemble of value functions, rather than simply using supervised learning for the ensemble. We then discuss how to appropriately define the rewards for the ensemble value functions, and %finally provide the VBE algorithm that uses this ensemble. 
contrast the unique property of the bonuses produced by these ensemble of value functions.

The most straightforward approach to get an error from an ensemble is to use a random target, as is done in RND. 
For an ensemble of size $\esize$, we can generate random neural networks $f_1, \ldots, f_\esize$ and update the learned functions $\hat{f}_1, \ldots, \hat{f}_\esize$ in the ensemble using a squared error: for each $(s,a)$, update each $\hat{f}_i$ using loss $(f_i(s,a) - \hat{f}_i(s,a))^2$. The value bonus for any $(s,a)$ can be set to 
% \begin{equation}
%     \vb(s,a) \doteq \max_{i \in [\esize]} (f_i(s,a) - \hat{f}_i(s,a))^2. 
% \end{equation}
\begin{equation}
\label{eq:bonus_form}
    \vb(s,a) \doteq \max_{i \in [\esize]} |\hat{f}_i(s,a) - f_i(s,a)| 
\end{equation}
\citet{ciosek2020conservative} show that fitting random prior functions serve as a computationally tractable approach towards estimating uncertainty in the supervised learning setting.
Unfortunately, in the reinforcement learning setting, this is likely to concentrate too quickly, and will not do what has been called deep exploration \citep{osband2019deep}. We want the agent to reason not just about uncertainty for this state and action, but also about the uncertainty of the state that it leads into.\footnote{Note that RND do not use these errors directly for exploration. Instead, they used them as reward bonuses, which can retroactively promote deep exploration, with the issue that they do not promote first-visit optimism.}

Instead, we want an ensemble of value functions that are more likely to promote deep exploration. More specifically, we want to generate random rewards $r_i$ for each $\ensembleq{i}$, where the $\ensembleq{i}$ are updated using standard temporal difference learning bootstrapping approaches. We want the learning dynamics for these value functions to resemble the primary value function, so that they learn at a similar timescale and are more likely converge to zero once the primary value function has also converged. 

We need to define rewards and target functions that are consistent with each other and that allow us to easily measure the errors. Consider if we again do the simplest thing: generate a random neural network $r_i$ for each $\ensembleq{i}$. Let us assume for now that we have a fixed policy, $\pi$. First, it is not clear how we would actually measure the error since we do not know the true value function $\targetq{i}$, namely the expected return using $r_i$ under policy $\pi$. Further, this true value function may not be representable by $\ensembleq{i}$.

Instead, our proposed approach is to generate a random action-value function (RQF) $\targetq{i}$, and then define rewards consistent with that $\targetq{i}$. Define the stochastic ensemble reward from $(S_t, A_t)$ to be
\begin{equation}
\label{eq:rvf_reward}
R_{i,t+1} \defeq \targetq{i}(S_t,A_t) - \gamma_{t+1} \targetq{i}(S_{t+1},A_{t+1}),
\end{equation}
where $A_{t+1} \sim \pi(\cdot |S_{t+1})$ and $\gamma_{t+1} \defeq \gamma(S_t,A_t,S_{t+1})$ is defined by the environment. Further, by definition, the action-values of the random prediction function is:
\begin{equation}
\label{eq:random_value_target}
q^\pi_i(s,a) \defeq \E_\pi\left[ R_{i,t+1} + \gamma_{t+1} q^\pi_i(S_{t+1},A_{t+1}) \big| S_t=s, A_t=a\right].
\end{equation}
We show in the following proposition that $q^\pi_i = \targetq{i}$.
\begin{proposition}\label{prop_main}
For all $i \in [\esize]$, we have $q^\pi_i = \targetq{i}$.
\end{proposition}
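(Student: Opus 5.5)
The plan is to show that $\targetq{i}$ satisfies the Bellman equation that defines $q^\pi_i$ in \eqref{eq:random_value_target}, and then invoke uniqueness of the fixed point of the policy-evaluation Bellman operator. Fix $i$ and $\pi$. Define the operator $T^\pi_i$ on bounded functions $q:\States\times\Actions\to\RR$ by
\begin{equation*}
(T^\pi_i q)(s,a) \defeq \E_\pi\left[ R_{i,t+1} + \gamma_{t+1} q(S_{t+1},A_{t+1}) \big| S_t=s, A_t=a\right].
\end{equation*}
By definition, $q^\pi_i$ is the fixed point of $T^\pi_i$, equivalently the expected discounted sum of the rewards $R_{i,t+1}$ under $\pi$.

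\textbf{Step 1 (verification).} Substitute the reward \eqref{eq:rvf_reward} and use the same next action $A_{t+1}\sim\pi(\cdot|S_{t+1})$ in both the reward and the bootstrap term:
\begin{equation*}
(T^\pi_i \targetq{i})(s,a) = \E_\pi\left[ \targetq{i}(s,a) - \gamma_{t+1}\targetq{i}(S_{t+1},A_{t+1}) + \gamma_{t+1}\targetq{i}(S_{t+1},A_{t+1}) \big| S_t=s,A_t=a\right] = \targetq{i}(s,a).
\end{equation*}
The two $\gamma_{t+1}\targetq{i}(S_{t+1},A_{t+1})$ terms cancel pointwise, not only in expectation. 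So $\targetq{i}$ is a fixed point.

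\textbf{Step 2 (uniqueness).} Show that $T^\pi_i$ has a unique fixed point. Take two fixed points $q$ and $q'$ and set $d=q-q'$. The reward terms cancel, so
\begin{equation*}
d(s,a)=\E_\pi[\gamma_{t+1} d(S_{t+1},A_{t+1})\mid s,a].
\end{equation*}
Under the standard assumption that discounting is proper, $\sup_{s,a}\E_\pi[\gamma_{t+1}\mid s,a]<1$ (as when $\gamma<1$ is constant), $T^\pi_i$ is a sup-norm contraction. Then $d=0$, giving $q^\pi_i=\targetq{i}$.

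An alternative route is a telescoping argument. Expanding the return $\sum_{k\ge0}\left(\prod_{j=1}^{k}\gamma_{t+j}\right) R_{i,t+k+1}$ telescopes to
\begin{equation*}
\targetq{i}(S_t,A_t) - \lim_{n\to\infty}\left(\prod_{j=1}^{n}\gamma_{t+j}\right)\targetq{i}(S_{t+n},A_{t+n}).
\end{equation*}
The limit vanishes when $\targetq{i}$ is bounded and the discount products go to zero, for example on termination where $\gamma=0$.

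\textbf{Main obstacle.} The algebra is trivial; the real issue is justifying uniqueness, or equivalently the vanishing tail. This needs boundedness of $\targetq{i}$ (e.g.\ a neural network with bounded outputs on the state space) and a proper-discounting assumption on the transition-based $\gamma$. I would state these as standing assumptions, the same ones under which $q^\pi$ itself is well defined.
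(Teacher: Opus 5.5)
Your proof is correct, but your main route differs from the paper's. The paper unrolls the Bellman equation \eqref{eq:random_value_target} repeatedly, substitutes the rewards, and observes that the terms telescope, ``leaving only the first term $f_i(s,a)$''. That is essentially your alternative route.

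Your primary route is different. You verify in one step that $f_i$ satisfies the Bellman equation; the cancellation is pointwise because the same $A_{t+1}$ appears in the reward and the bootstrap term. All the remaining content then goes into uniqueness of the fixed point.

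The two arguments are closely linked. After $n$ unrollings, the paper's expression is exactly $f_i(s,a) + \mathbb{E}_\pi[(\prod_{j=1}^n\gamma_{t+j})(q^\pi_i - f_i)(S_{t+n},A_{t+n}) \mid s,a]$. This is your identity $d(s,a)=\mathbb{E}_\pi[(\prod_{j=1}^n \gamma_{t+j})\, d(S_{t+n},A_{t+n})\mid s,a]$, iterated. The paper silently assumes this tail vanishes.

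What your version buys:
\begin{itemize}
\item It makes the needed assumptions explicit: boundedness of $f_i$ and $q^\pi_i$, and proper discounting.
\item It treats $q^\pi_i$ as the unique solution of \eqref{eq:random_value_target}, which the paper's definition leaves implicit.
\end{itemize}
The paper's version buys directness and the return interpretation.

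One caveat concerns your condition $\sup_{s,a}\mathbb{E}_\pi[\gamma_{t+1}\mid s,a]<1$. It holds for constant $\gamma<1$. It fails for episodic tasks encoded with transition-based discounts equal to $1$ on nonterminal transitions. In that case one-step contraction is unavailable, and you need one of two alternatives:
\begin{itemize}
\item the tail form you sketched, where discount products tend to zero almost surely (e.g.\ termination with probability one) and dominated convergence applies via $|d|\le\|d\|_\infty$;
\item a multi-step contraction argument for proper policies.
\end{itemize}
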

\vspace{-0.3cm}
\begin{proof}
\begin{align*}
q^\pi_i (s,a) 
&= \E_\pi\left[ R_{i,t+1} + \gamma_{t+1} q^\pi_i(S_{t+1},A_{t+1}) \big| S_t=s, A_t=a\right] \\
&= \E_\pi\left[ R_{i,t+1} + \gamma_{t+1} R_{i,t+2} +\gamma_{t+1}\gamma_{t+2} q^\pi_i(S_{t+2},A_{t+2}) \big| S_t=s, A_t=a\right] \\
&= \E_\pi\Big[ [\targetq{i}(s,a) - \gamma_{t+1} \targetq{i}(S_{t+1},A_{t+1})] + \gamma_{t+1} [\targetq{i}(S_{t+1},A_{t+1}) - \gamma_{t+2} \targetq{i}(S_{t+2},A_{t+2})]  \\
&\quad\quad\quad\quad+\gamma_{t+1}\gamma_{t+2} q^\pi_i(S_{t+2},A_{t+2}) \big| S_t=s, A_t=a\Big] \\
&= \E_\pi\Big[ [\targetq{i}(s,a) \underbrace{-\gamma_{t+1} \targetq{i}(S_{t+1},A_{t+1})]}_{\text{cancels}} \underbrace{+\gamma_{t+1} \targetq{i}(S_{t+1},A_{t+1})}_{\text{cancels}}\\ 
& \quad\quad- \gamma_{t+1} \gamma_{t+2} \targetq{i}(S_{t+2},A_{t+2})]  +\gamma_{t+1}\gamma_{t+2} q^\pi_i(S_{t+2},A_{t+2}) \big| S_t=s, A_t=a\Big]. 
\end{align*}
We can keep unrolling this, and these terms will continue to telescope, leaving only the first term $\targetq{i}(s,a)$, completing the proof. 
\end{proof} 

Therefore, updating $\ensembleq{i}$ with rewards $\ensembler{i}$ should converge to $q^\pi_i$---and so to $\targetq{i}$---because $\targetq{i}$ is in the function class of $\ensembleq{i}$. This convergence ensures the value bonuses go to zero, which is desired if we want the agent to stop exploring and converge to the greedy policy.  
Even with a fixed policy, however, this convergence will only occur under certain conditions. Primarily, the failure would be that $\ensembleq{i}$ gets stuck in a local minima or even that it diverges, due to know issues with temporal difference (TD) learning algorithms combined with neural networks and with off-policy update. 

There is fortunately a large (and growing) literature understanding the convergence behavior of TD algorithms. Under linear function approximation, we know least-squares TD converges at a rate of $1/\sqrt{T}$ to the global solution, even under off-policy sampling \citep{tagorti2015rate}. 
%We can specialize this result to our realizable setting, removing any errors due to the true value function not being in the function class. 
With the advent of theory for overparameterized networks, TD with a particular neural network function class has been shown to converge to the global solution, under on-policy sampling \citep{cai2019neural}. In general, we know that a class of modified TD algorithms, called gradient TD methods, converge even under off-policy sampling and nonlinear function approximation \citep{dai2017learning,patterson2022generalized}. 
% no need, the journal papers covers this ghiassian2020gradient
Convergence under off-policy sampling is key in our setting, because the behavior policy is optimistic but the target policy may be greedy. We expect that under certain conditions on the neural network it might be possible to say that these gradient TD methods converge to global solutions, though to the best of our knowledge, no such work yet exists. 
% In fact, let's just omit these since we don't talk about them
%Such convergence results also exist for other off-policy TD methods \citep{yu2015convergence,liu2020finite}. \raksha{just moved this sentence to be after discussion of gradient-TD, from before}. 
We provide a more complete discussion in Appendix of how this existing theory on convergence of TD applies to our setting.  

\subsection{Bonuses that reflect MDP-specific properties}
While the form of the bonus proposed above in Equation~\ref{eq:bonus_form} looks similar in principle to RND, the main difference as mentioned previously is that we consider the ensemble to be composed of random value functions, in contrast to methods like RND which consider the ensemble to be composed of random functions. This simple change provides an interesting property to the bonuses derived from this ensemble: they can reflect the stochasticity in the transition dynamics of the MDP.
\begin{proposition}
Let $\hat{f}_i(s,a) \defeq \E[R_{i, t+1} + \gamma_{t+1} \hat{f}_i(S_{t+1},A_{t+1}) | S_t=s, A_t=a] + \epsilon_i(s,a)$, where $\epsilon_i(s,a)$ denotes the Bellman error for $(s,a)$ in $\hat{f}_i$. Then
\begin{equation*}
b(s,a) = \max_{i\in[k]}\big|\E[\gamma_{t+1}(\hat{f}_i(S_{t+1}, A_{t+1}) - f_i(S_{t+1}, A_{t+1}))| S_t=s, A_t=a]  + \epsilon_i(s,a) \big|.
\end{equation*}
\end{proposition}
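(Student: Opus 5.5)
The argument is a direct algebraic substitution: express $\hat{f}_i(s,a)$ through the given Bellman-error decomposition, insert the definition of the ensemble reward from Equation~\ref{eq:rvf_reward}, and then take the absolute value and the maximum over the ensemble as in Equation~\ref{eq:bonus_form}.

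First, I substitute $R_{i,t+1} = \targetq{i}(S_t,A_t) - \gamma_{t+1}\targetq{i}(S_{t+1},A_{t+1})$ into the assumed identity for $\hat{f}_i(s,a)$. Conditioned on $S_t=s, A_t=a$, the first term $\targetq{i}(S_t,A_t)=f_i(s,a)$ is deterministic. The expectation is over $S_{t+1}\sim P(\cdot|s,a)$ and $A_{t+1}\sim\pi(\cdot|S_{t+1})$, which are the same random variables used in both the reward and the bootstrap term. This gives
\begin{equation*}
\hat{f}_i(s,a) = f_i(s,a) + \E\big[\gamma_{t+1}\big(\hat{f}_i(S_{t+1},A_{t+1}) - f_i(S_{t+1},A_{t+1})\big) \big| S_t=s, A_t=a\big] + \epsilon_i(s,a),
\end{equation*}
where I have used linearity of expectation to merge the $-\gamma_{t+1} f_i$ and $+\gamma_{t+1}\hat f_i$ terms.

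Second, I subtract $f_i(s,a)$ from both sides to get an exact expression for the error $\hat{f}_i(s,a) - f_i(s,a)$. Then I take absolute values and the maximum over $i\in[\esize]$. By the definition of $b$ in Equation~\ref{eq:bonus_form}, this yields the stated formula.

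There is no real obstacle here. The only point requiring care is consistency of the expectation: the action $A_{t+1}$ in the reward definition must be drawn from the same policy $\pi$ as in the bootstrap target, so that the two $\gamma_{t+1}$ terms live under one expectation and combine. I would state this explicitly, noting that $\epsilon_i$ is defined relative to that same $\pi$, and also note that the expectations are assumed finite so that linearity applies.
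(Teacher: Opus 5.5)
Your proof is correct and is essentially the paper's argument. The paper invokes the one-step identity $f_i(s,a)=\E[R_{i,t+1}+\gamma_{t+1}f_i(S_{t+1},A_{t+1})\mid S_t=s,A_t=a]$ and subtracts it from the decomposition of $\hat{f}_i(s,a)$ inside the absolute value so the reward terms cancel, whereas you substitute the reward definition in Equation~\eqref{eq:rvf_reward} directly, which amounts to that same identity (it also needs only that the next action in the reward and in the bootstrap term share the same conditional distribution under $\pi$).
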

\begin{proof}
By the assumption that $f_i$ is a value-function, we know that
\begin{equation*}
f_i(s,a) \defeq \E[R_{i, t+1} + \gamma_{t+1} f_i(S_{t+1},A_{t+1}) | S_t=s, A_t=a].
\end{equation*}
Plugging the definitions into Equation~\ref{eq:bonus_form}, we get
\begin{align*}
    b(s,a)  &= \max_{i\in[k]} |\hat{f}_i(s,a) - f_i(s,a)| \\
    &= \max_{i\in[k]} \big|\E[R_{i,t+1} +\gamma_{t+1}\hat{f}_i(S_{t+1},A_{t+1}) | S_t=s, A_t=a] + \epsilon_i(s,a) \\
    &\qquad- \E[R_{i,t+1} +\gamma_{t+1}f_i(S_{t+1},A_{t+1}) | S_t=s, A_t=a]\big| \\
    &= \max_{i\in[k]}\big|\E[\gamma_{t+1}(\hat{f}_i(S_{t+1}, A_{t+1}) - f_i(S_{t+1}, A_{t+1}))| S_t=s, A_t=a]  + \epsilon_i(s,a) \big|.
\end{align*}
\end{proof}
\section{Using the Ensemble of Value Functions}
We provide pseudocode in Algorithm \ref{alg_vbe}, for the case where the base algorithm is Double DQN, but it is possible to swap in many different off-policy value-based algorithms. Even actor-critic, which explicitly maintains a critic $q_w$, could easily incorporate the value bonuses by using an optimistic critic. For the purposes of this paper, however, we restrict our focus to Double DQN.  

The ensemble value functions are updated on the same target policy as Double DQN, namely the greedy policy in $q_w$. This choice comes from the fact that we want to understand uncertainty in the values for the target policy. We further investigate the impact of this choice in Appendix \ref{sp_tp_exp}. The update is similar to Double DQN, except the actions are sampled according to $q_w$ rather than $\ensembleq{i}$, and we use the ensemble reward $r_i$ defined above in Equation \eqref{eq:rvf_reward}:
\begin{equation}
w_i \gets w_i + \eta \delta_i \nabla \ensembleq{i}(s,a) \quad \text{ for } \delta \defeq r_{i} + \gamma \ensembleqt{i}(s', \argmax_{a'} q_w(s',a')) - \ensembleq{i}(s,a) \label{eq_evf}
\end{equation}
On each step, we only update one RQF predictor. Updating the entire ensemble is expensive, and arguably unnecessary. There are multiple ways to control the magnitude of the value bonus, and how quickly it decays. One way is the size of the ensemble, where the larger the ensemble, the more slowly this bonus should decay. Updating each RQF predictor less frequently, however, will also cause the bonus to decay more slowly. It both allows us to make the ensemble smaller, and ensure that regardless of the ensemble size, the computation per-step is simply double that of Double DQN: one update to the main value function and one update to an RQF predictor. 

%Finally, there is also a scale parameter on the value bonus. Previous work has proposed to keep running sums 

\begin{algorithm}[t]
	\caption{Value Bonuses with Ensemble Errors (VBE) }
	\begin{algorithmic}[1]
		\STATE \textbf{Parameters:} ensemble size $\esize$, bonus scale $\bscale$, target net update frequency $\tau$, batch size $m$
		\STATE Initialize empty buffer: \(B \leftarrow \emptyset\), action-value function: $q_w$, target RQFs: $\targetq{i}, \ldots \targetq{\esize}$, predictor RQFs: $\ensembleq{1}, \ldots, \ensembleq{\esize}$, and target networks: $q_{\tilde{w}}$, $\ensembleqt{1}, \ldots, \ensembleqt{\esize}$
            \STATE \textbf{Optimistic behavior policy:}\\ $\pi_b(s)  \leftarrow \argmax_{a \in \mathcal{A}} q_w(s, a) + \bscale \ \vb(s,a)$\\ where $\vb(s,a) \leftarrow \max_{i \in [\esize]} |\ensembleq{i}(s,a) - \targetq{i}(s,a)|$ 
		\STATE Get the initial state $s_{0}$
		\FOR{environment interactions $t = 0, 1, \ldots$}
		\STATE Take action $a \leftarrow \pi(s_t)$ and observe $r_{t+1}, s_{t+1}, \gamma_{t+1}$
		      \STATE Add $(s_t, a_t, r_{t+1}, s_{t+1}, \gamma_{t+1})$ to the buffer $B$
		    \STATE // Update action-values using DDQN update
		    \STATE Sample mini-batch from $B$, update $q_w$ using Eq. \eqref{eq_ddqn}
		    \STATE // Update one randomly selected RQF
		      \STATE Sample $i$ from $[\esize]$ uniform randomly
		    \STATE Sample mini-batch from $B$, update $\ensembleq{i}$ using Eq. \eqref{eq_evf} where\\
		    for each $(s,a,r,s',\gamma)$ replace $r$ with\\ $r_i \defeq \targetq{i}(s,a) - \gamma \targetq{i}(s',\argmax_{a' \in \Actions} q_w(s',a'))$
                	\IF{\(t+1 \mod \tau == 0\)}
	\STATE $\tilde{q}_w \gets q_w$ and for all $i$, $\ensembleqt{i} \gets \ensembleq{i}$
			\ENDIF
		\ENDFOR
	\end{algorithmic}
    \label{alg_vbe}
\end{algorithm}

\subsection{Guaranteeing Optimistic Initial Values}

In this section we show the ensemble size and bonus scale $\bscale$ can be set to obtain optimistic initial values with high probability. The result motivates that value bonuses with ensemble errors provide sufficient first-visit optimism. 
%We additionally want to guarantee that these bonuses shrink to zero eventually; we do not provide such a result in this work, but briefly discuss it at the end of this section. 

We assume that we initialize our action-values $q$ and target and predictor RQFs with a standard random neural network initialization. Given a fixed initialization, we can jointly reason about the deep setting and the linear setting. In the linear setting, for a given state, we have $n$-dimensional features $\phi(s), \phi_1, \ldots, \phi_\esize(s) \in \mathbb{R}^n$ for the $q$ and RQFs respectively. In the deep setting, the last layer of the fixed neural network specifies the features. We additionally assume that the features from this last layer are all normalized, $\| \phi_i(s) \|_2 = 1$ for all $s$, to simplify the proof; but this is not strictly necessary and a similar result can be obtained without normalization.  
\begin{assumption}
The feature functions $\phi(s), \phi_1, \ldots, \phi_\esize(s) \in \mathbb{R}^n$ are all unit length---have an $\ell_2$ norm of 1 for every state. All weights are iid sampled from $\mathcal{N}(0, 1/n)$, to get initializations $q(s,a) = \phi(s)^\top w_a$,  $\targetq{i}(s,a) = \phi_i(s)^\top w^*_{a,i}$ and $\ensembleq{i}(s,a) = \phi_i(s)^\top w_{a,i}$
\end{assumption}

\begin{proposition}
Let $z(\delta)$ be the z-value for a Gaussian $X \sim \mathcal{N}(0,1)$ where $\text{Pr}(X > z(\delta)) = 1-\delta$ for $\delta \in (0,1)$. Take any $q_{\text{max}} \in \mathbb{R}$ and any $\delta \in (0,1)$. Then for any $s,a$, if 
\begin{equation*}
    \bscale \ge \sqrt{\tfrac{n}{\pi}} \left( q_{\text{max}} - z(\delta/2)/\sqrt{n}\right) / \left(\log(\esize/2) - \log \log(2/\delta)  \right) 
\end{equation*}
then 
\begin{equation*}
    \quad \quad q(s,a) + \bscale \max_{i \in {1, \ldots, \esize}} |\targetq{i}(s,a) - \ensembleq{i}(s,a)| > q_{\text{max}}
\end{equation*}
with probability $1-\delta$.
\end{proposition}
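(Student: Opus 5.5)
Fix $(s,a)$. Under the Assumption, $q(s,a)=\phi(s)^\top w_a$ with $w_a\sim\mathcal{N}(0,I/n)$ and $\|\phi(s)\|_2=1$, so $q(s,a)\sim\mathcal{N}(0,1/n)$. Likewise, for each $i$, $\targetq{i}(s,a)-\ensembleq{i}(s,a)=\phi_i(s)^\top(w^*_{a,i}-w_{a,i})\sim\mathcal{N}(0,2/n)$. The $\esize$ differences are independent of each other and of $q(s,a)$, since all weight vectors are sampled independently. The plan is to split the failure probability $\delta$ evenly between two events. The first is that $q(s,a)$ is too small. The second is that the maximum of the $\esize$ absolute Gaussians is too small. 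A union bound then gives the result with probability $1-\delta$.

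\textbf{Step 1 (the $q$ term).} By the definition of $z(\cdot)$, $\Pr(\sqrt{n}\,q(s,a) > z(\delta/2)) = 1-\delta/2$. So with probability $1-\delta/2$ we have $q(s,a) > z(\delta/2)/\sqrt{n}$. It therefore suffices to show that $\bscale \max_i |\targetq{i}(s,a)-\ensembleq{i}(s,a)| \ge q_{\text{max}} - z(\delta/2)/\sqrt{n}$ with probability at least $1-\delta/2$.

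\textbf{Step 2 (lower tail of the maximum).} Write $Y_i = |\targetq{i}(s,a)-\ensembleq{i}(s,a)|$, a half-normal variable with scale $\sqrt{2/n}$. By independence, $\Pr(\max_i Y_i \le t) = \Pr(Y_1\le t)^{\esize} \le \exp(-\esize\Pr(Y_1>t))$. I want this to be at most $\delta/2$, which means choosing $t$ so that $\esize \Pr(Y_1>t) \ge \log(2/\delta)$. For this I need a lower bound on the Gaussian tail of the form $\Pr(Y_1>t)\ge$ something like $\exp(-ct\sqrt{n/\pi})$, or a piecewise bound from the density. Taking logs would then give admissible thresholds $t \approx \sqrt{\pi/n}\,(\log \esize - \log\log(2/\delta) + \text{const})$. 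Matching this to the form in the statement explains the factors $\sqrt{n/\pi}$ and $\log(\esize/2)-\log\log(2/\delta)$. Specifically, I would aim to show that
\[
t^\ast = \sqrt{\tfrac{\pi}{n}}\bigl(\log(\esize/2)-\log\log(2/\delta)\bigr)
\]
satisfies $\Pr(\max_i Y_i\le t^\ast)\le\delta/2$.

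\textbf{Step 3 (combine).} Given the condition on $\bscale$, we have $\bscale\, t^\ast \ge q_{\text{max}} - z(\delta/2)/\sqrt{n}$. So on the intersection of the two good events, which has probability $\ge 1-\delta$ by the union bound, $q(s,a)+\bscale\max_i Y_i > q_{\text{max}}$. Two degenerate cases need no extra work. If $q_{\text{max}} - z(\delta/2)/\sqrt{n}\le 0$, the claim follows from Step 1 alone. The denominator $\log(\esize/2)-\log\log(2/\delta)$ is implicitly assumed positive.

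\textbf{Main obstacle.} The hard part is Step 2: getting a Gaussian tail lower bound that gives exactly the constant $\sqrt{\pi}$ and the $\esize/2$ inside the log. I would first try the bound $\Pr(|X|>x)\ge 1-x\sqrt{2/\pi}$ for a standard normal, or the half-normal density bound on $[0,x]$, combined with a convexity estimate such as $1-u\ge e^{-2u}$ or $\Pr(|X|>x)\ge \tfrac12 e^{-x\sqrt{\cdot}}$. The $\tfrac12$ in the latter is what would produce the $\log(\esize/2)$. If the constants do not match exactly, the argument still goes through with adjusted constants in the condition on $\bscale$.
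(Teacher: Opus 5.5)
Your decomposition is the paper's: fix $(s,a)$, use $q(s,a)\sim\mathcal{N}(0,1/n)$ and $f_i(s,a)-f_{w_i}(s,a)\sim\mathcal{N}(0,2/n)$ independently across $i$, spend $\delta/2$ on the event $q(s,a)>z(\delta/2)/\sqrt{n}$ and $\delta/2$ on a lower bound for the maximum, then take a union bound. The genuine gap is Step 2, and it is not a matter of constants. Write $L=\log(k/2)-\log\log(2/\delta)$. Your target threshold $t^\ast=\sqrt{\pi/n}\,L$ is linear in $L$. Your inequality $\Pr(\max_i Y_i\le t)\le\exp(-k\Pr(Y_1>t))$ is correct, but reaching $\Pr(\max_i Y_i\le t^\ast)\le\delta/2$ through it would need $\Pr(|X|>x)\ge\tfrac12 e^{-x\sqrt{2/\pi}}$ for $X\sim\mathcal{N}(0,1)$. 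That bound is false: at $x=2$ the left side is about $0.046$ and the right side about $0.10$. More fundamentally, the maximum of $k$ half-normals grows like $\sqrt{\log k}$, not $\log k$, so $\Pr(\max_i Y_i\le t^\ast)\to 1$ as $k\to\infty$. None of your fallback bounds can help. Any lower bound exponential in $x$ fails for large $x$, and $1-x\sqrt{2/\pi}$ is vacuous beyond $x=\sqrt{\pi/2}$.

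The missing ingredient is a Gaussian-shaped tail lower bound, which is what Lemma~1 of Ash et al.\ (used by the paper) supplies. P\'olya's inequality $\Pr(|X|\le x)\le\sqrt{1-e^{-2x^2/\pi}}$, together with $1-\sqrt{1-u}\ge u/2$, gives $\Pr(|X|>x)\ge\tfrac12 e^{-2x^2/\pi}$. Your own inequality then gives $\Pr(\max_i|X_i|\le x)\le\exp\bigl(-\tfrac{k}{2}e^{-2x^2/\pi}\bigr)=\delta/2$ at $x=\sqrt{\pi/2}\,\sqrt{L}$. That is where the $\pi$ and the $k/2$ come from. It yields $\max_i|f_i(s,a)-f_{w_i}(s,a)|\ge\sqrt{\pi/n}\,\sqrt{L}$ with probability $1-\delta/2$, which is exactly the paper's intermediate bound.

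Consequently the condition on $c$ must have $\sqrt{L}$ in the denominator. The paper's final ``plugging in'' step only balances with $\sqrt{L}$, so the printed $L$ is a typo. As printed, the claim is false once $L>1$. Take $\delta<1/2$, $q_{\text{max}}=0$, and $c$ equal to the printed threshold. Then $c\max_i|f_i-f_{w_i}|=O_P\bigl(1/\sqrt{\log k}\bigr)\to0$, so the event's probability tends to $\Pr(q(s,a)>0)=1/2<1-\delta$. With the printed denominator, your argument can only be completed when $0<L\le1$, where $L\le\sqrt{L}$ makes the printed condition imply the corrected one. Separately, your degenerate case $q_{\text{max}}-z(\delta/2)/\sqrt{n}\le0$ needs $c\ge0$, which the stated condition does not force.
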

\begin{proof}
Pick any $s \in \States, a \in \Actions$. Notice that because $w^*_{a,i,j} \sim \mathcal{N}(0,1/n)$ for all $j \in \{1, \ldots, n\}$, that the linear combination of these Gaussians $\targetq{i}(s,a) = \phi_i(s)^\top w^*_{a,i}$ is distributed as $\mathcal{N}(0,\| \phi_i(s)\|_2^2/n) = \mathcal{N}(0,1/n)$ because $\| \phi(s)\|_2^2 = 1$. Define $\delta_i \doteq \targetq{i}(s,a) - \ensembleq{i}(s,a) = \phi_i(s)^\top w^*_{a,i} - \phi_i(s)^\top w_{a,i}$, which is the difference of two $\mathcal{N}(0, 1/n)$ random variables and itself is again Gaussian with double the variance, $\mathcal{N}(0,\tfrac{2}{n})$. We can rewrite $\delta_i = \sqrt{\tfrac{2}{n}} Y_i$ where $Y_i \sim \mathcal{N}(0,1)$. 
Then we can leverage the result from \citep[Lemma 1]{ash2022anticoncentrated} that gives a lower bound on the maximum of the absolute value of $\esize$ standard normal Gaussians to get that with probability $1-\delta/2$
\begin{align*}
&\max_{i \in \{1,\ldots,\esize\}} |\delta_i|  = 
\sqrt{\tfrac{2}{n}} \max_{i \in \{1,\ldots,\esize\}} |Y_i|\\
&\ge \sqrt{\tfrac{2}{n}}  \sqrt{\tfrac{\pi}{2}} \sqrt{\log(\esize/2) - \log \log(2/\delta)}\\
&= \sqrt{\tfrac{\pi}{n}} \sqrt{\log(\esize/2) - \log \log(2/\delta)}
\end{align*}
Similarly to the RQFs, because $w_{a,j} \sim \mathcal{N}(0,1/n)$, we also have that $q(s,a) = \phi(s)^\top w_a$ is Gaussian with distribution $\mathcal{N}(0,\| \phi(s)\|_2^2/n) = \mathcal{N}(0,1/n)$. 
We can therefore say with probability $1-\delta/2$ that $q(s,a) > z(\delta/2)/\sqrt{n}$. Taking the union over these two high probability events, we get that with probability $1-\delta$, 
\begin{align*}
    &q(s,a) + \bscale \max_{i \in {1, \ldots, \esize}} |\targetq{i}(s,a) - \ensembleq{i}(s,a)| >\\ 
    & z(\delta/2)/\sqrt{n} + \bscale \sqrt{\tfrac{\pi}{n}} \sqrt{\log(\esize/2) - \log \log(2/\delta)}
\end{align*}
where plugging in the above $\bscale$ makes this second term equal $q_{\text{max}}$, giving us the desired result. 
\end{proof}

We can also ask what happens to the bonuses in VBE after initialization. Ideally, they eventually converge to zero, with the action-values converging and the behavior and target policies both converging to a greedy policy. This scenario goes beyond the convergence conditions discussed above in Section \ref{sec_rqfs} for fixed policies. In VBE, both our behavior policy and target policy are changing with time. Unfortunately, theory around TD does not address this scenario. There are some results for a fixed behavior policy for double Q-learning under linear function approximation \citep{zhao2021faster}, or for a variant of DQN with a fixed dataset \citep{wang2022convergent}. The issue with a changing behavior policy is that it changes the relative importance of states in the objective, and so the best value function may change as it changes how it trades off errors across states. In our realizable setting, this changing importance may be less important, because our RQF predictor can perfectly represent the target. In our own experiments, we found the value bonuses did always converge to zero. Nonetheless, we know of no theory that would allow us to guarantee this. 

\textbf{Connection to BDQN:} Though not obvious at first glance, there is a connection between RQFs and random prior functions in BDQN. In BDQN, the value function is $q_\theta = f_\theta + \bscale p$ for a random prior function $p$ that is not updated, prior scale $\bscale$, and a learned function $f_\theta$. Random priors were developed for stationary state distributions---though then applied to control---so let us consider the update for a fixed policy $\pi$. The update uses $a' \sim \pi(\cdot|s')$, giving 
% $r + \gamma (f_\theta(s', a') + \bscale p(s',a')) - (f_\theta(s, a) + \bscale p(s,a))
% = r - \bscale (p(s,a) - \gamma p(s',a')) + \gamma f_\theta(s', a')  - f_\theta(s, a)$.
\begin{align*}
% q_\theta(s, a) = f_\theta(s, a) + \bscale p(s,a) &= r + \gamma (f_\theta(s', a') + \bscale p(s',a'))\\
r - \bscale \underbrace{(p(s,a) - \gamma p(s',a'))}_{\text{RQF's reward}} + \gamma f_\theta(s', a') - f_\theta(s, a)
\end{align*}
This is a standard update with reward bonus $\bscale (p(s,a) - \gamma p(s',a'))$, and this bonus is a scaled negation of our reward in Equation \eqref{eq:rvf_reward}. With a fixed policy, we can separate the value function learning into $q^\pi$ that estimates the values for the rewards and $b^\pi$ that estimates the values for the reward bonuses. Namely, $f_\theta$ consists of $q^\pi+b^\pi$. As these functions converge, $b^\pi(s,a)$ approaches $-\bscale p(s,a)$ using the exact same argument to the one in our Proposition \ref{prop_main}, just negating the function $p$. Consequently, $f_\theta(s,a) + \bscale p(s,a) = q^\pi(s,a) + b^\pi(s,a) + \bscale p(s,a) = q^\pi(s,a) + (b^\pi(s,a) + \bscale p(s,a))$ goes to $q^\pi$ since $b^\pi(s,a) + \bscale p(s,a)$ eventually cancels. 

This argument is not how randomized priors are presented, but provides another intuitive interpretation. Further, it highlights a key difference between BDQN and VBE: BDQN takes a Thompson sampling approach to induce optimism, whereas VBE acts greedily with respect to optimistic value estimates. Another key point is that BDQN's prior-based bonus is scaled by the $\bscale$ parameter. The prior-based bonus can be seen as adding a fixed noise to the targets in the updates. Scaling the bonus term with $\bscale$ would increase the variance in the targets by a factor of $\bscale ^2$. This can make the optimization problem harder and may cause BDQN to be sensitive towards higher learning rates. Unlike BDQN, VBE does not incorporate the $\bscale$ parameter when estimating the RQFs, and only uses the bonus scale in the behavior policy.
%this interpretation highlights that separating out estimation of RQFs and taking a maximum over errors is a sensible way to incorporate random functions than is done inside BDQN. The above argument for the role of priors only holds for a fixed policy. But, if we switch to estimating values and bonuses separately, then it is more appropriate to change the value update to a Q-learning update with a maximum. The bonus continues to simply play the role of deep exploration, and focuses the ensemble on uncertaThere is a jump from the posterior interpration, to including these random functions inside the bootstrap update with a maximum over actions. The reason for this is that 

%\section{Random Value Functions and Optimism}
%\input{optimism.tex}

%\section{Control with Ensemble Errors}
%\input{algoname.tex}

% \section{OOPI}
% \label{sec:oopi}
% \input{oopi.tex}

\section{Experiments}
\label{sec:experiments}
We evaluate our proposed algorithm on four classic exploration environments and six Atari environments, particularly in comparison to BDQN and the reward bonus approaches ACB and RND. We first investigate the algorithms in a pure exploration setting, on Deepsea, where we evaluate state coverage. 
% Then we compare performance on the classic environments, and investigate the impact of the bonus scale and number of RQFs in the ensemble. We conclude with experiments in Atari, particularly highlighting that VBE scales successfully to this setting.
Then we compare performance on the classic environments, and conclude with experiments in Atari, highlighting that VBE scales successfully to this setting. We provide more analysis of VBE's sensitivity to its two parameters -- ensemble size, and bonus scale in Appendix~\ref{es_x_bs}.

% Changes since the environments section did not talk about atari
\subsection{Experimental Settings}\label{sec_algs_and_experiments}

The four classic exploration environments are Sparse Mountain Car, Puddle World, River Swim and Deepsea. 
%These four environments have varying requirements for exploration: Deepsea and River Swim are considered hard exploration environments, whereas Puddle World and Mountain Car require less exploration. 
Full details are in Appendix \ref{app_details}, but we list a few key details here. 
Mountain Car has two-dimensional continuous inputs with sparse rewards: the agent only receives a reward of 1 at the goal and 0 otherwise. Puddle World also has two-dimensional continuous inputs, noisy actions and highly negative rewards in puddles along the way to the goal. 
%River Swim and Deepsea were both designed as hard exploration problems, requiring persistent behavior with likely failure under dithering, $\epsilon$-greedy exploration. 
River Swim resembles a problem where a fish tries to swim upriver, with high reward (+1) upstream which is difficult to reach and, a lower but still positive reward (+0.005), which is easily reachable downstream. This environment has a single continuous state dimension in $[0,1]$, with stochastic displacement when taking actions left or right.\footnote{One seemingly innocuous but important point to highlight is that we flipped the observation such that the high reward is at observation $0$ and the lower reward is at observation $1$. We did so because the standard random initialization and ReLU activation often results in a higher value for a higher input, thus favouring the correct action in the standard variant and making algorithms like BDQN look artificially good. Our modified variant removes this inadvertent bias without changing the problem structure or difficulty in any way.} 
% We found deep RL agents trivially solving this original variant, but not for the reasons we would like! 
%
Deepsea is similar to River Swim, but is a two-dimensional grid world. Reaching the high-reward state requires the agent to take the action to go right every time. There is a penalty of \(\frac{0.01}{N} \) for taking the action right, except in the bottom right corner where there is a reward of 1 for taking the  action right. A policy that explores uniform randomly has the probability of \(2^{-N}\) of reaching the goal state in each episode. 

We use slightly different evaluation metrics for the various environments. River Swim is continuing, so we report accumulated reward over learning. For both Deepsea and Puddle World, we report the undiscounted episodic return. For Mountain Car, we report the discounted return, because for every successful episode, the undiscounted return is 1 and so not meaningful in this sparse variant. For all episodic environments, we report steps on the x-axis and the corresponding episodic return on y-axis. All results in the classic environments use 50000 steps and 30 runs, except Deepsea which uses 10000 episodes and 5 runs.
\begin{figure}[ht]
\vspace{-0.3cm}
\captionsetup[subfigure]%
     {justification=centering}
     \centering
    \begin{subfigure}{1.0\textwidth}
        \includegraphics[width=1\hsize]{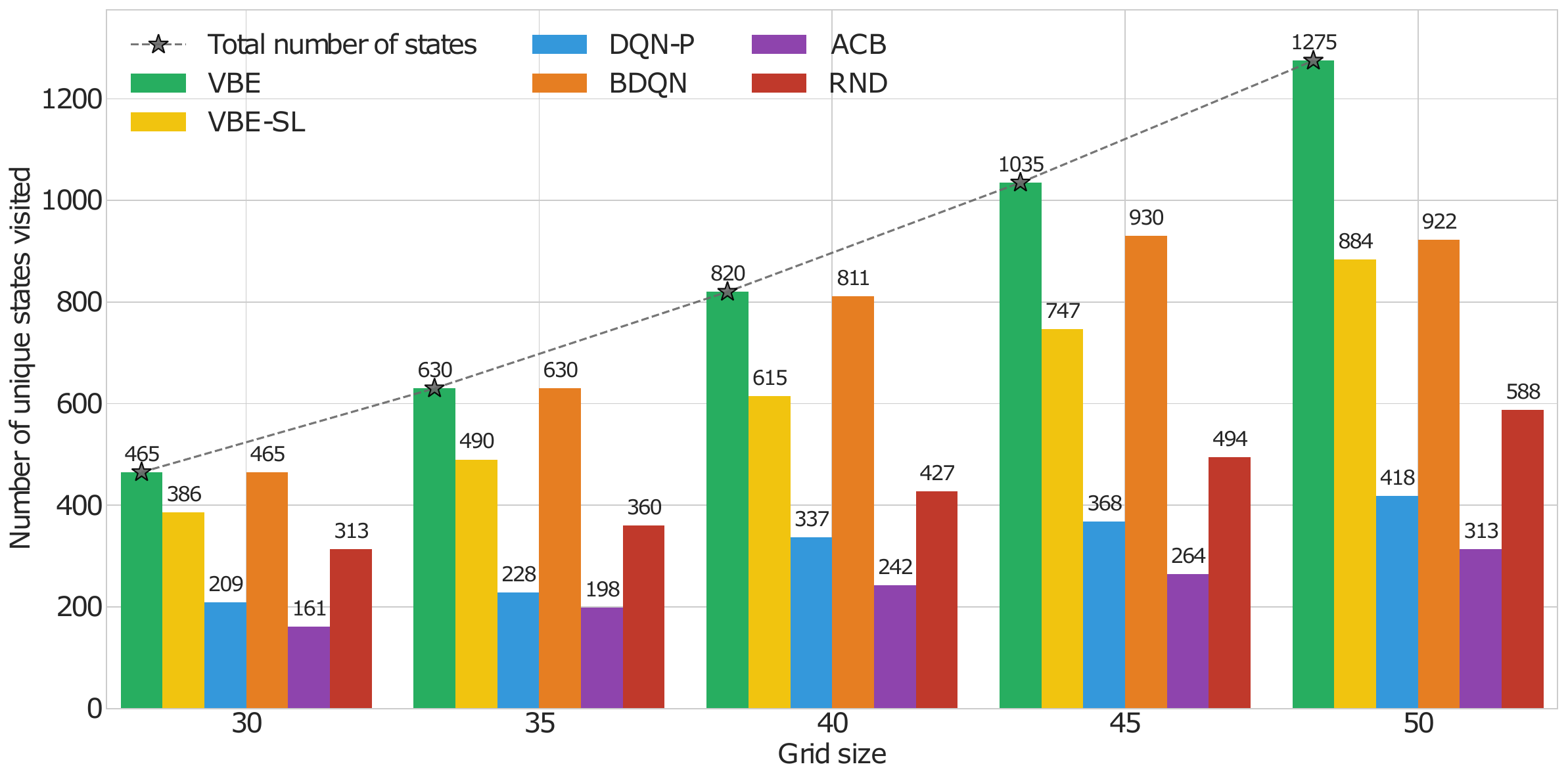}
        \subcaption{State coverage in Deepsea of different grid sizes}
        \label{fig:ds_pe_l_best}
    \end{subfigure}
    \hfill%
    \begin{minipage}{0.5\textwidth}
        \includegraphics[width=1\linewidth]{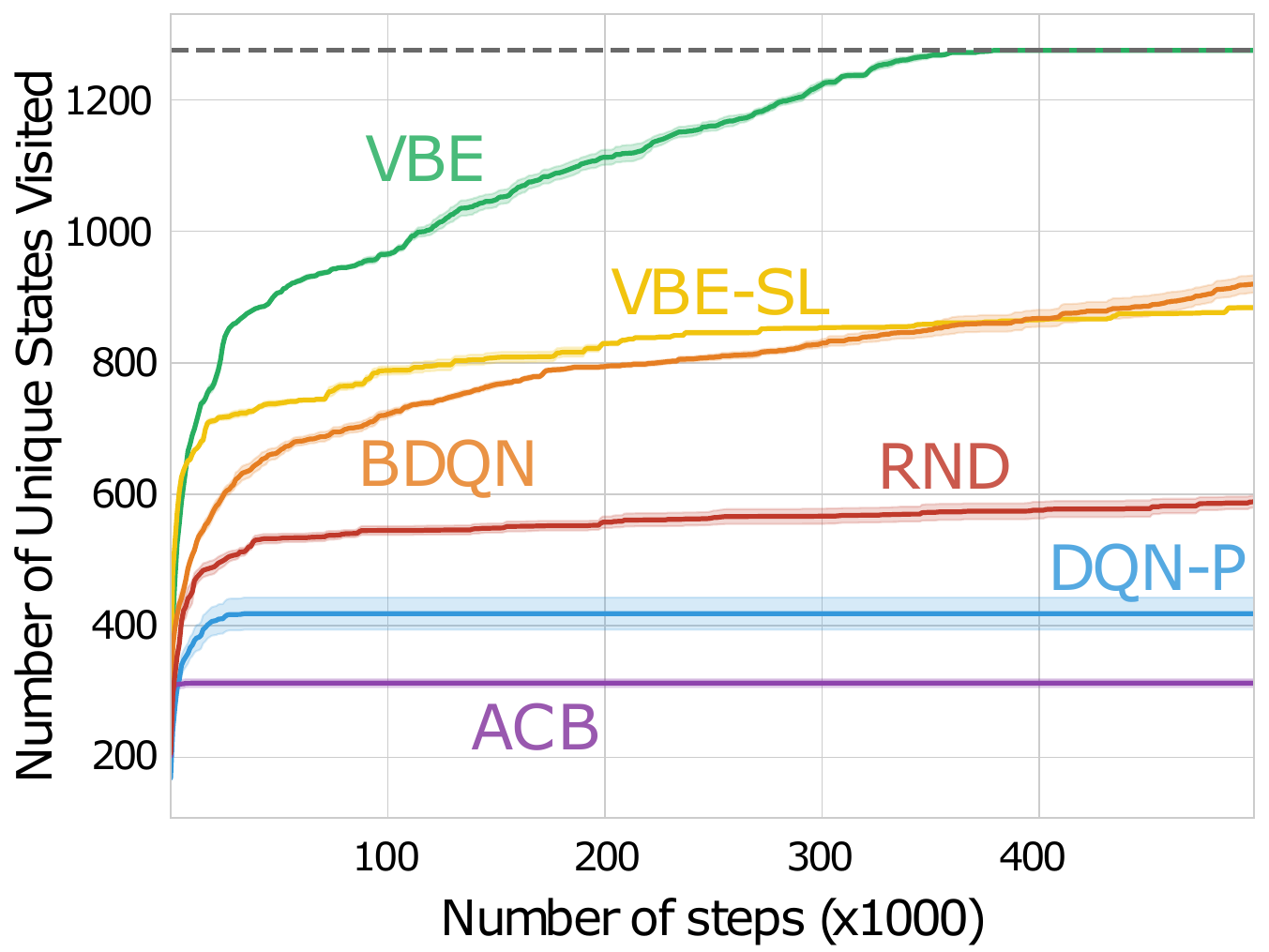}
        \subcaption{Progression of unique states visited (grid size 50)}
        \label{fig:ds_pe_l_50}
    \end{minipage}
    \hfill%
    \begin{minipage}{0.49\textwidth}
        \caption{Contrasting the state coverage abilities of exploration algorithms in DeepSea. In (a) each bar corresponds to the total number of unique states visited by an agent after completing 10,000 episodes. The black stars indicate the total number of unique states for each grid size. Notably, VBE covers the entire state space, even for the larger grid sizes. (b) displays the progression of unique states visited by agents over the course of learning for Deepsea with grid size 50. The dotted line represents the total number of unique states (1275) in this environment. It provides evidence that VBE consistently explores new states at a significantly higher rate.}
    \end{minipage}
\end{figure}
% \begin{figure*}
% \centering
% \vspace{-0.3cm}
% \includegraphics[width=1.0\hsize]{Figures/Pure_exploration/DeepSea Linear best performance grids barplot.pdf}
% \vspace{-0.3cm}
% \caption{Contrasting the state coverage abilities of exploration algorithms in Deepsea. Each bar corresponds to the total number of unique states visited by an agent after completing 10,000 episodes. The black stars indicate the total number of unique states for each grid size. Notably, VBE covers the entire state space, even for the larger grid sizes.}
% \label{fig:ds_pe_l_best}
% \end{figure*}

Across problems we compare VBE with DDQN-based variants of ACB and RND, DQN with additive priors (DQN-P) and BDQN. ACB, RND and VBE only differ in their value bonuses; we use the reward bonuses underlying ACB and RND to learn their respective value bonuses. As originally proposed, we make the reward-bonus value function non-episodic for ACB and RND. We also compared VBE with the released variants of ACB and RND that use PPO in Appendix \ref{app_ppo}; in general, we find this PPO version to be less sample efficient than the DDQN versions. DQN-P simply adds an additive prior to DQN, like BDQN; it can be seen as BDQN with one value function in the ensemble. We evaluate the algorithms using 1, 2, 8 and 20 value functions in the ensembles and bonus scales of 1, 3 and 10. To match their original implementation RND uses two deep neural networks with multiple (64) nodes in the final layer as the target and predictor network for the reward bonus. All methods use the same neural network architectures, detailed in Appendix \ref{app_details_classic_control}.

We also include VBE-SL, that uses supervised learning instead of a TD update for the RQFs, to ablate this component of VBE. We discussed in Section \ref{sec_rqfs} that the errors for VBE-SL likely reduce too quickly, resulting in insufficient exploration; we test that hypothesis here. Note that both VBE and VBE-SL only update their ensemble, with errors defining their value bonus, whereas ACB and RND both have to update their ensembles to get reward bonuses and learn a second value function to get the value bonus. 

\subsection{Pure Exploration}
\label{sec:pure-exploration}
% \begin{figure}
%     \begin{minipage}{0.5\textwidth}
%         \includegraphics[width=1\linewidth]{Figures/explore/deepsea_50.pdf}
%     \end{minipage}
%     \hfill%
%     \begin{minipage}{0.49\textwidth}
%         \caption{The progression of unique states visited by agents over the course of learning in Deepsea with grid size 50 -- dotted line represents the total number of unique states (1275). It provides evidence that VBE consistently explores new states at a significantly higher rate.}
%         \label{fig:ds_pe_l_50}
%     \end{minipage}
% \end{figure}
We first test how effectively the agents cover the state space in Tabular Deepsea with increasing grid sizes. In this setting, the agents observe no reward from the environment; thus, there are no harder to reach states, as in the original DeepSea environment. This allows us to evaluate agent's ability to do both directed deep exploration and employ first-visit optimism. For this tabular setting, the agents are otherwise the same as the other experiments, except the function approximation is linear in a one-hot encoding. 

Figure~\ref{fig:ds_pe_l_best} shows that VBE covers the entire state space for all the grid sizes. BDQN is able to cover the state space for a grid size of 30 and 35, but fails on bigger grids. Both ACB and RND fail to cover the state space, with ACB covering even less than DQN-P. This outcome is not surprising, given that neither approach ensures first-visit optimism. VBE-SL visits more unique states compared to ACB, RND and DQN-P. This is because VBE-SL provides first-visit optimism, encouraging the agent to take an action in a state if it has not done so before. But, as expected, it does not explore as much as VBE, likely as its value bonuses decay too quickly. 

These suboptimal behaviors are emphasized in Figure~\ref{fig:ds_pe_l_50} for a grid size of 50. All methods initially explore at a similar rate, easily reaching around 300 unique states. ACB, RND and DQN-P largely stop visiting new states very early in learning, though RND is slowly increasing the number of states it visits. BDQN and VBE-SL are similar across in their behavior, with VBE-SL exploring more early, possibly due to better first-visit optimism. Over time, however, BDQN starts to catch up and then surpasses VBE-SL. VBE is the only algorithm that maintains a consistent increase until it has seen all states. It is interesting to note that VBE is able to cover the state-space with just 1 RQF in the ensemble, whereas the rest of the algorithms require much bigger ensembles and still fail to cover the state-space (Table \ref{tab:pe_best_params}). This highlights the ability of VBE to provide first-visit optimism and do deep exploration.

\subsection{Comparison in Classic Environments}
\label{sec:classic_control}
\newcommand{\figthreewidth}{0.24}
\begin{figure}[H]
\centering
    \begin{subfigure}{\figthreewidth\textwidth}
        \includegraphics[width=\hsize]{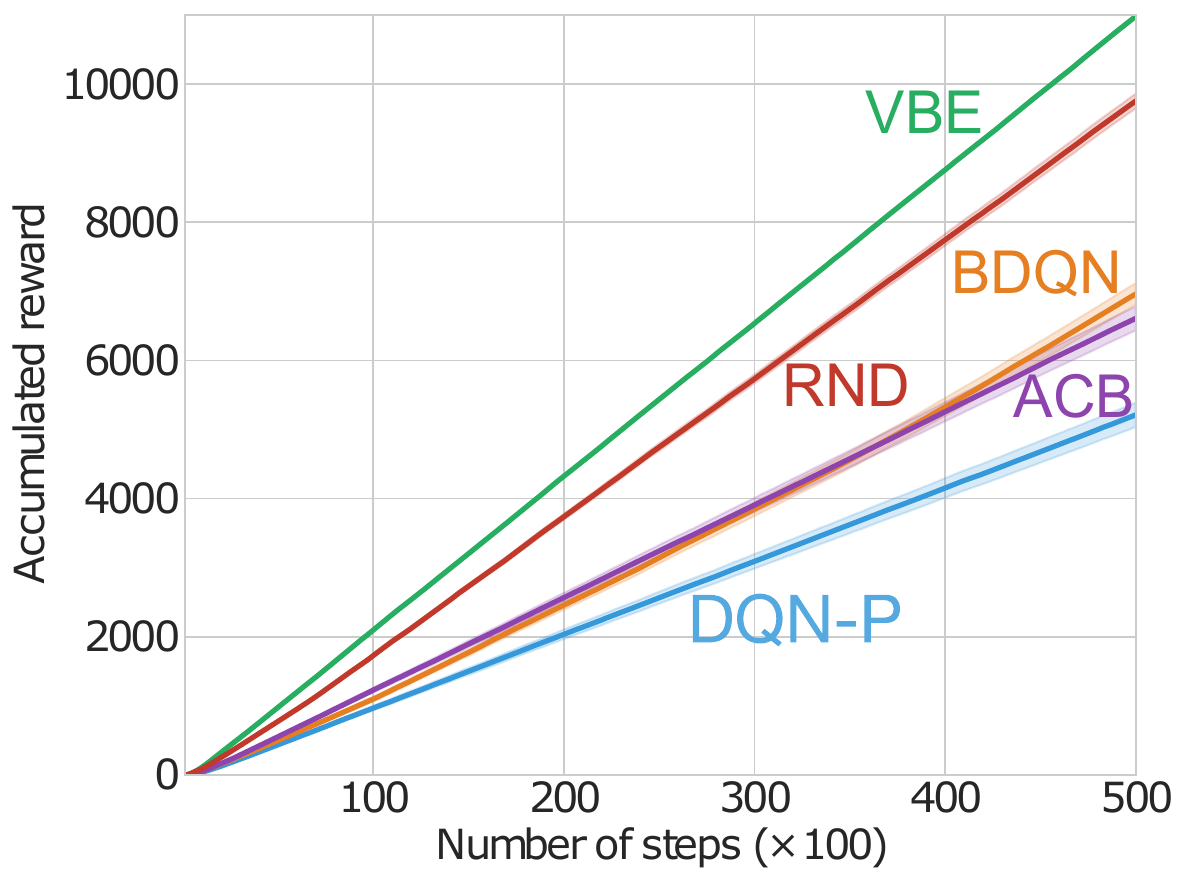}
        \caption{River Swim}
        \label{fig:rs_nn_best}
    \end{subfigure}
    \begin{subfigure}{\figthreewidth\textwidth}
        \includegraphics[width=\hsize]{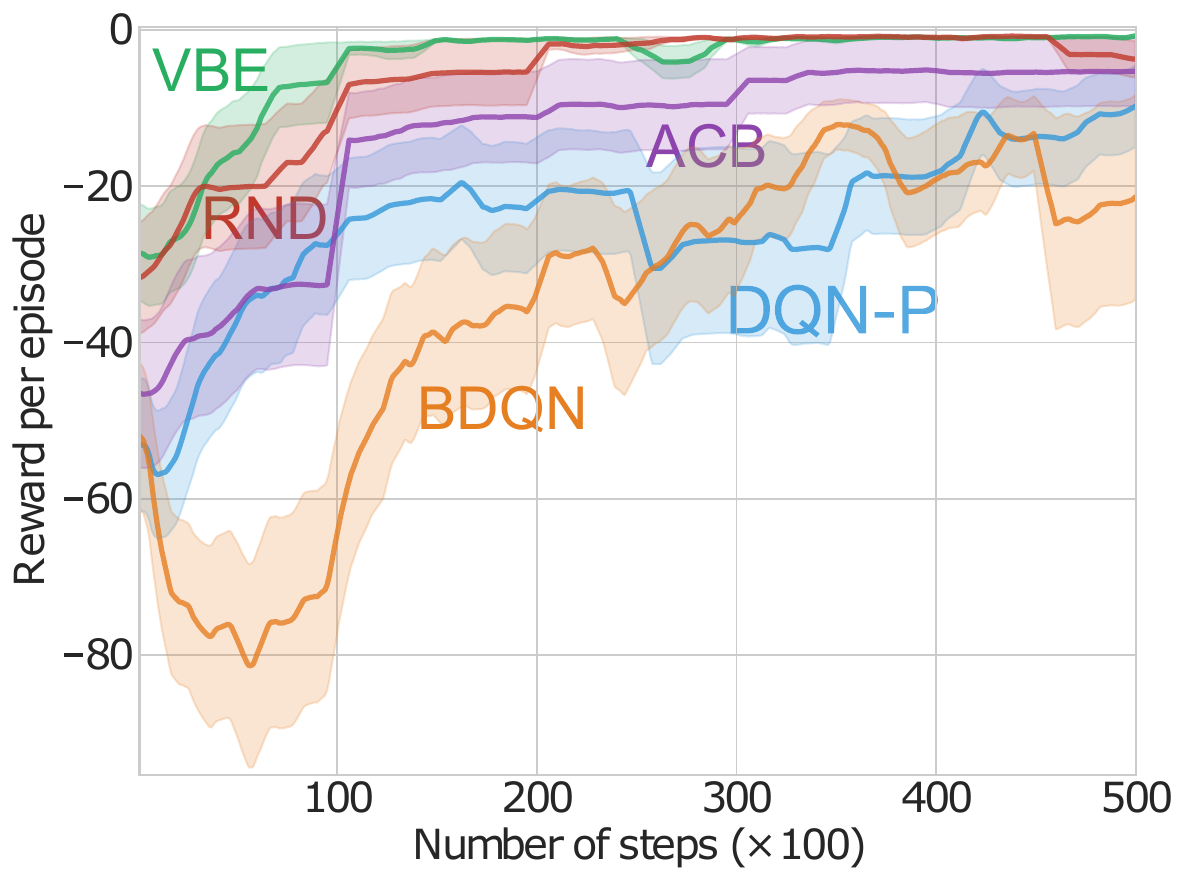}
        \caption{Puddle World}
        \label{fig:pw_nn_best}
    \end{subfigure}
    \begin{subfigure}{\figthreewidth\textwidth}
        \includegraphics[width=\hsize]{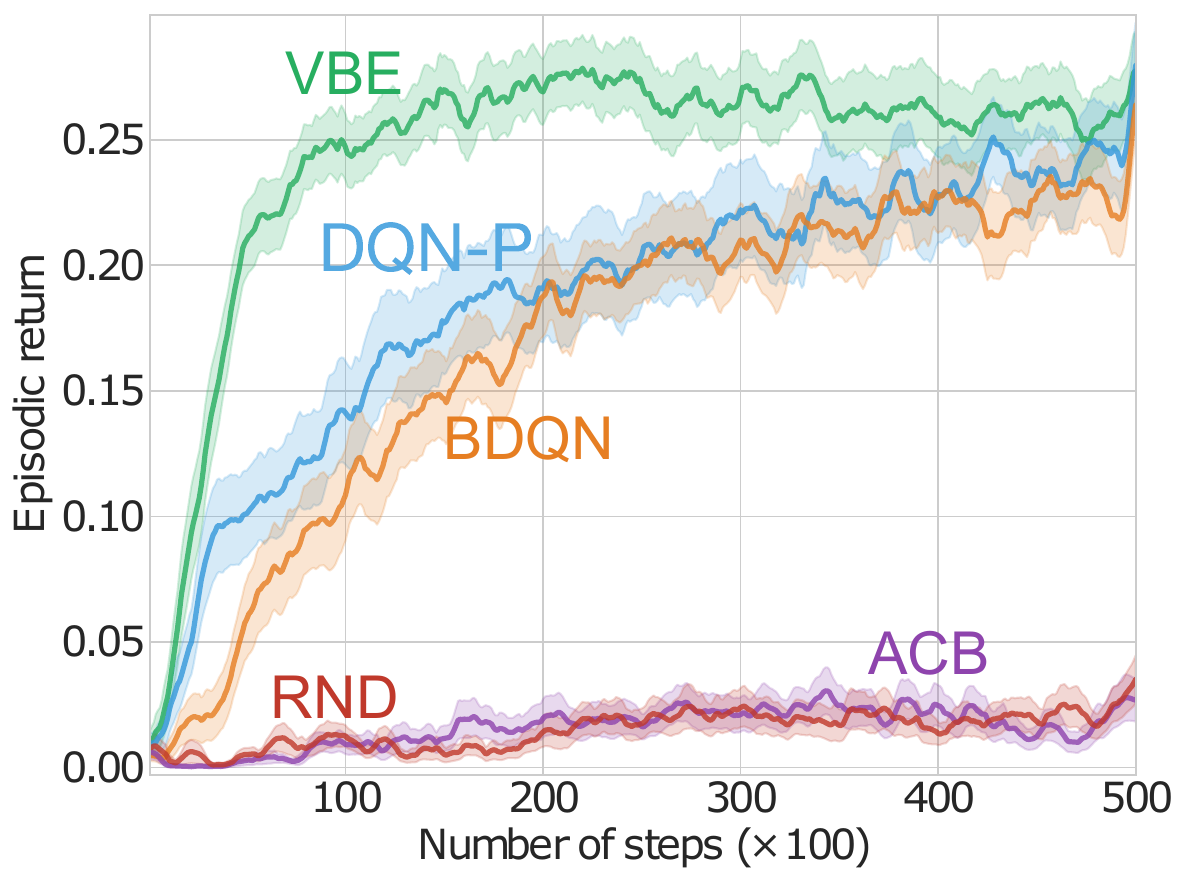}
        \caption{Mountain Car}
        \label{fig:mc_nn_best}
    \end{subfigure}
    \begin{subfigure}{\figthreewidth\textwidth}
        \includegraphics[width=\hsize]{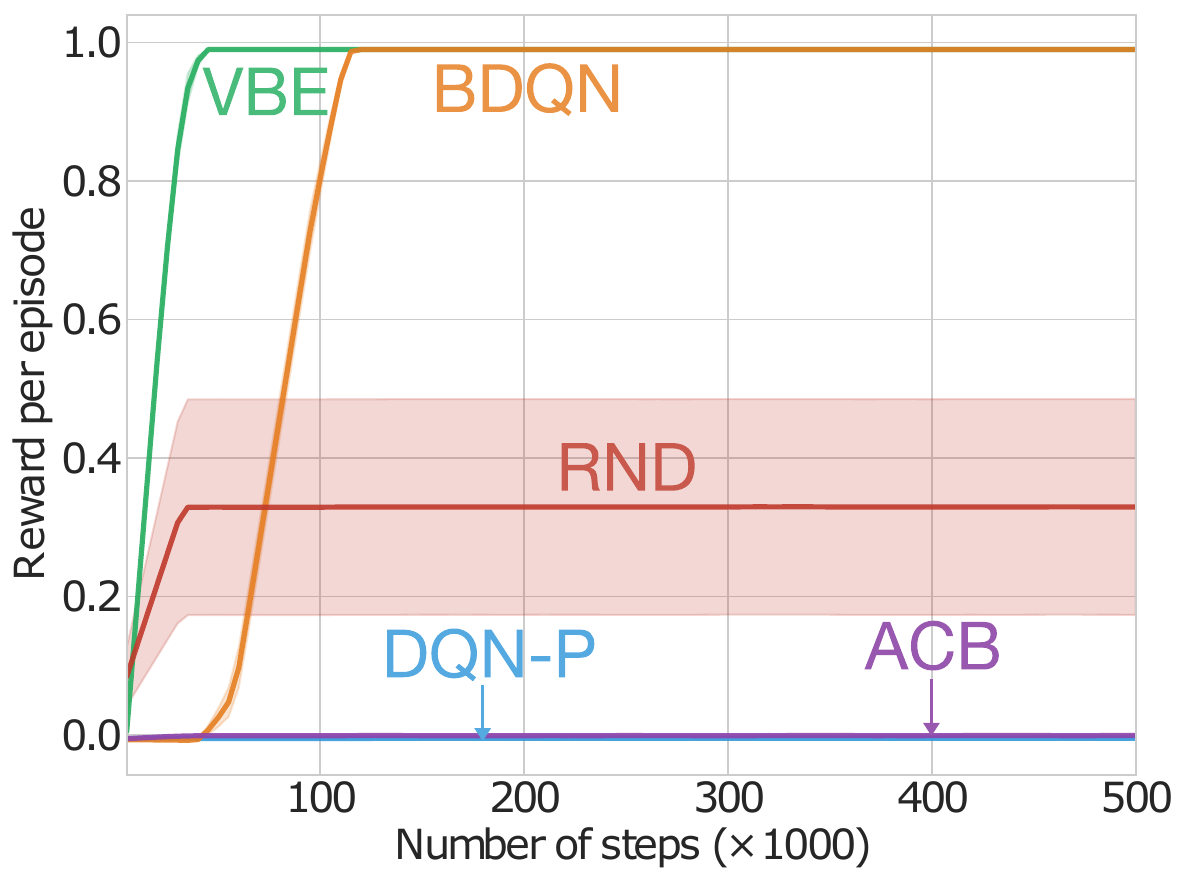}
        \caption{Deepsea}
        \label{fig:ds_nn_best}
    \end{subfigure}
    \vspace{-0.3cm}
    \caption{Online performance in River Swim, Puddle World, Mountain Car, and Deepsea. Higher on the y-axis is better. The x-axis denotes the number of interaction steps with the environment. The shaded region corresponds to standard errors.
    }
    \label{fig:nns_best}
\end{figure}
In this section we compare VBE with DQN-P, BDQN, ACB, and RND on the four classic control environments. In Figure~\ref{fig:nns_best}, we see that VBE learns faster and reaches the best final performance in all four environments. Surprisingly, DQN-P is competitive with BDQN in three out of the four environments. 
% MARTHA: Maybe we dont need to say more about DQN-P
In Deepsea, where persistent optimism is essential to reach the state with high reward, DQN-P fails. 
ACB and RND both fail to learn in the sparse reward domain Mountain Car, whereas, in Puddle World which has a denser reward structure, they perform better. 
RND is competitive in River Swim and Puddle World, however, it fails to learn the optimal policy in the majority of runs in Deepsea. ACB is competitive in Puddle World but fails in Deepsea. We compare VBE with PPO-based variants of ACB and RND in classic control environments in Appendix \ref{app_ppo_control}, and their linear counterparts in Appendix \ref{sec:apdx_lfa_control}.
\subsection{Atari}
\label{sec:atari}
\begin{figure*}[t]
    \centering
    \vspace{-0.3cm}
    \begin{subfigure}{0.32\textwidth}
        \includegraphics[width=\hsize]{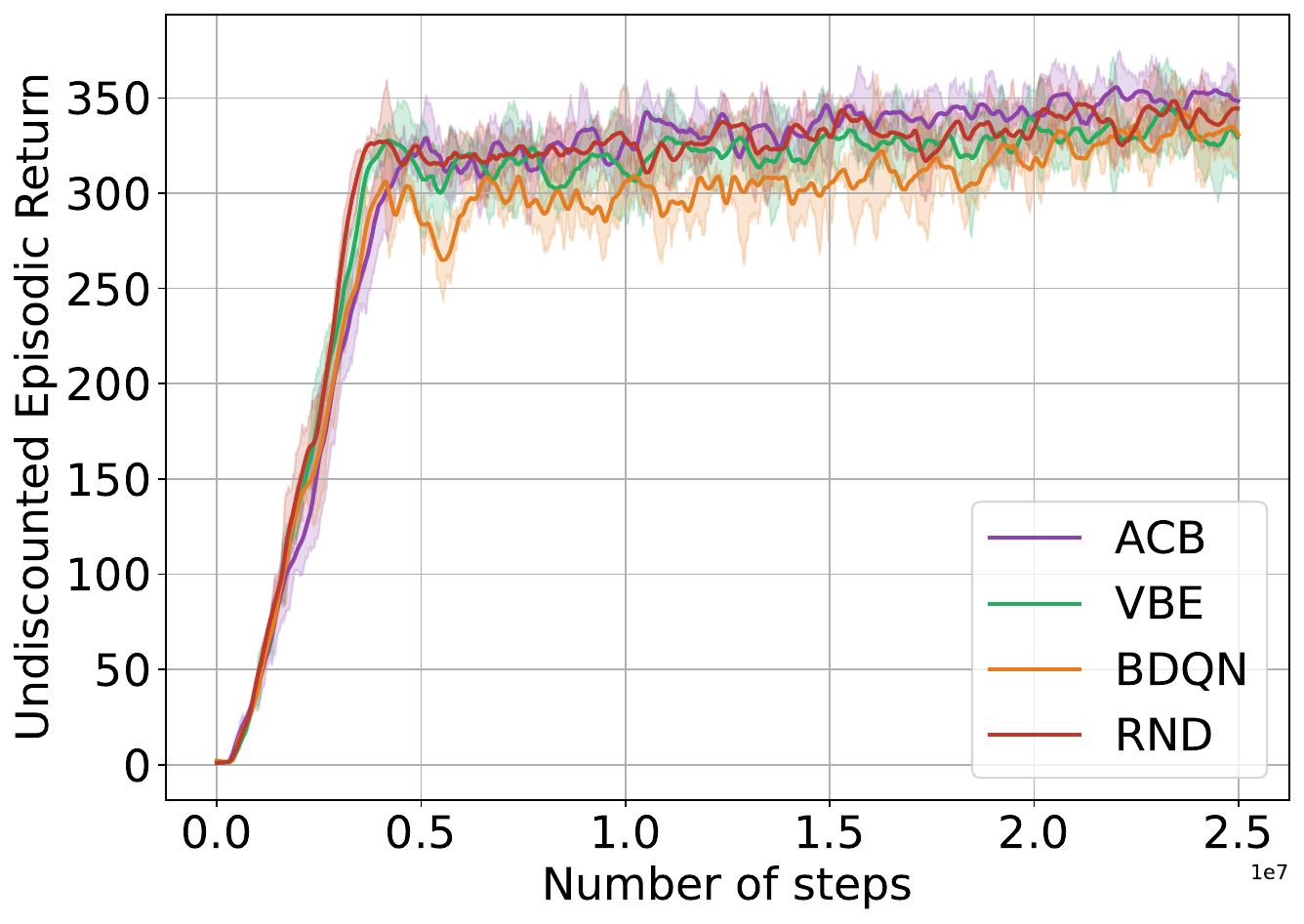}
        \caption{Breakout}
        \label{fig:vbe_breakout}
    \end{subfigure}
    \begin{subfigure}{0.32\textwidth}
        \includegraphics[width=\hsize]{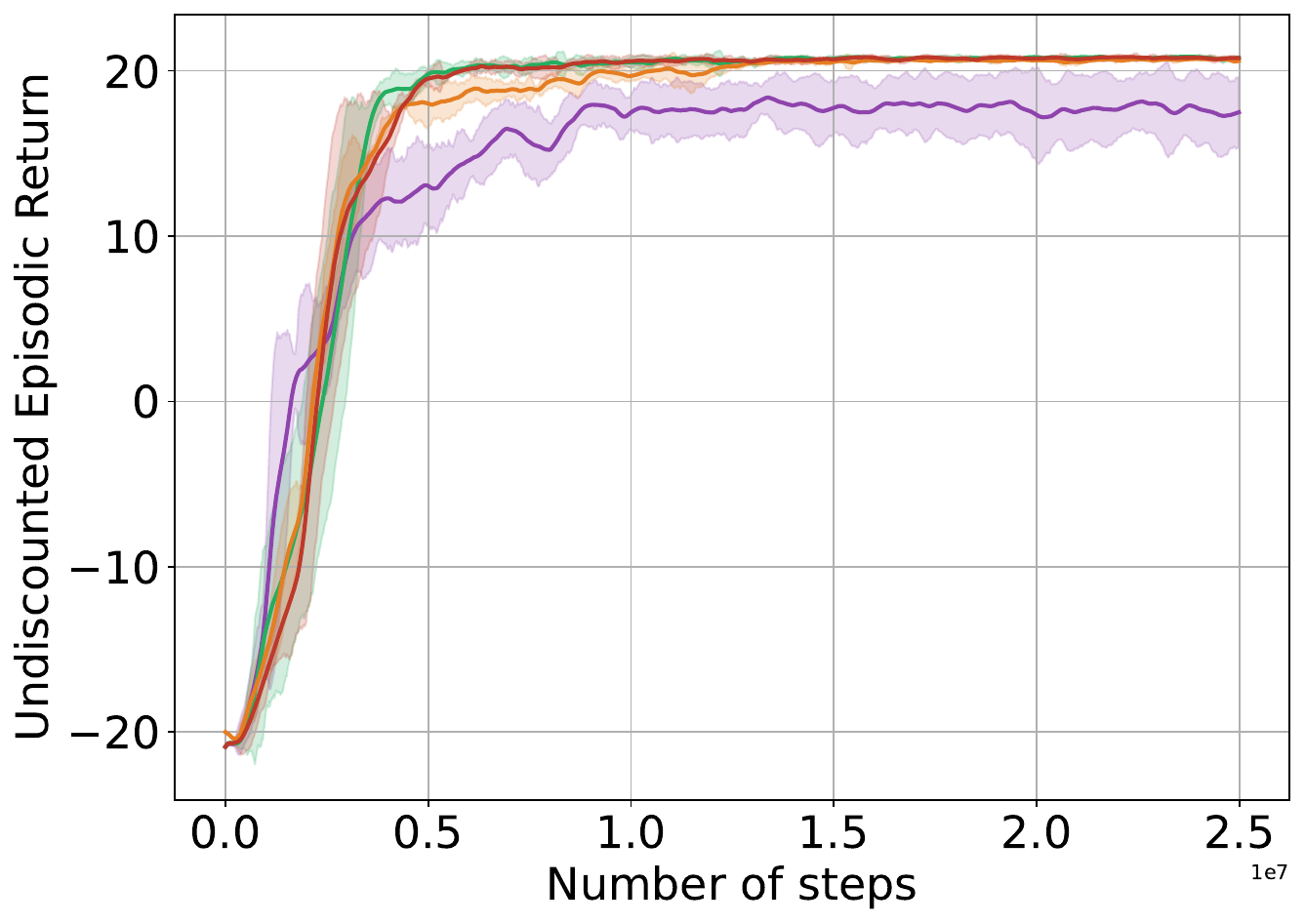}
        \caption{Pong}
        \label{fig:vbe_pong}
    \end{subfigure}
    \begin{subfigure}{0.32\textwidth}
        \includegraphics[width=\hsize]{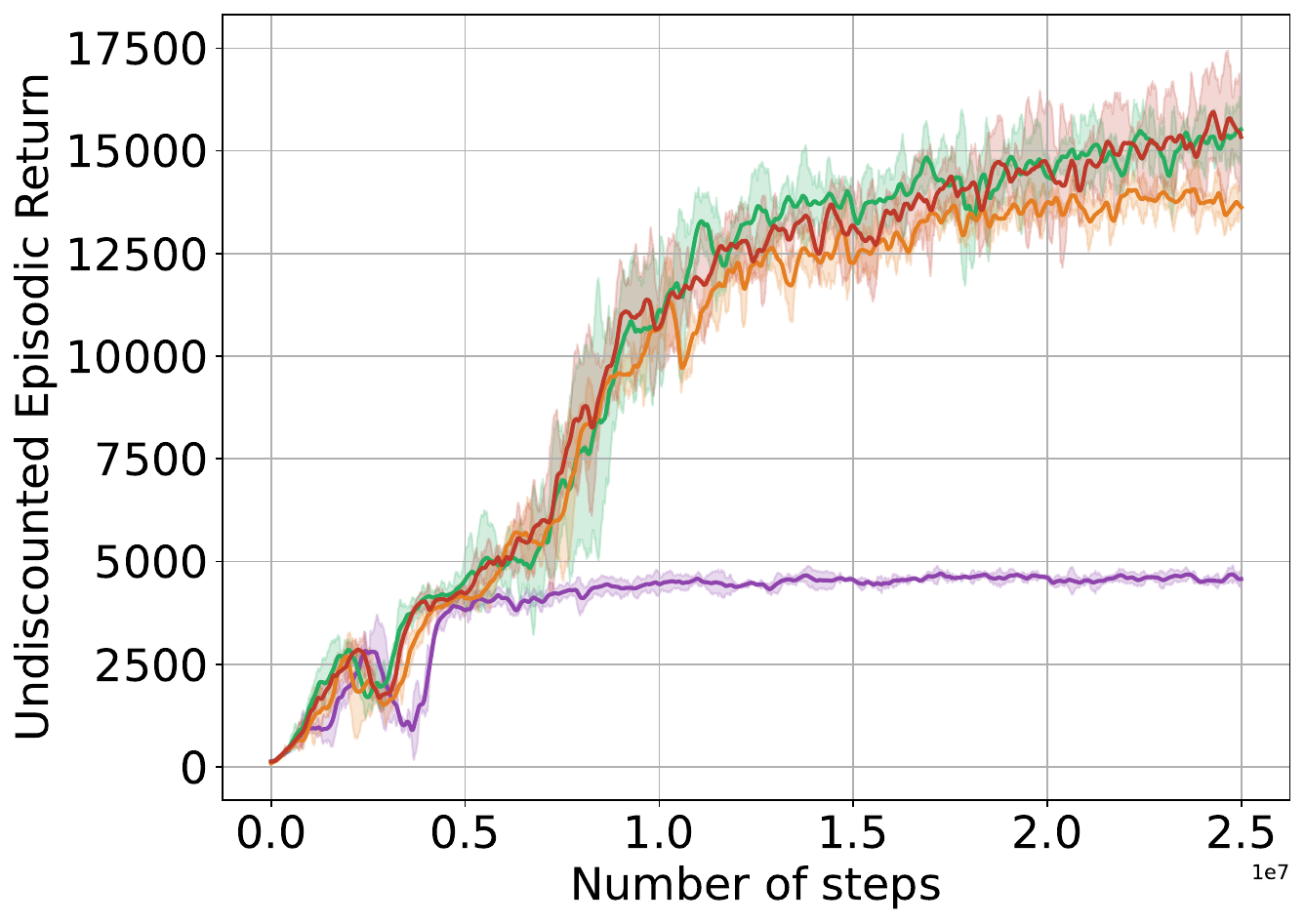}
        \caption{Q$^*$bert}
        \label{fig:vbe_qbert}
    \end{subfigure}
    \begin{subfigure}{0.32\textwidth}
        \includegraphics[width=\hsize]{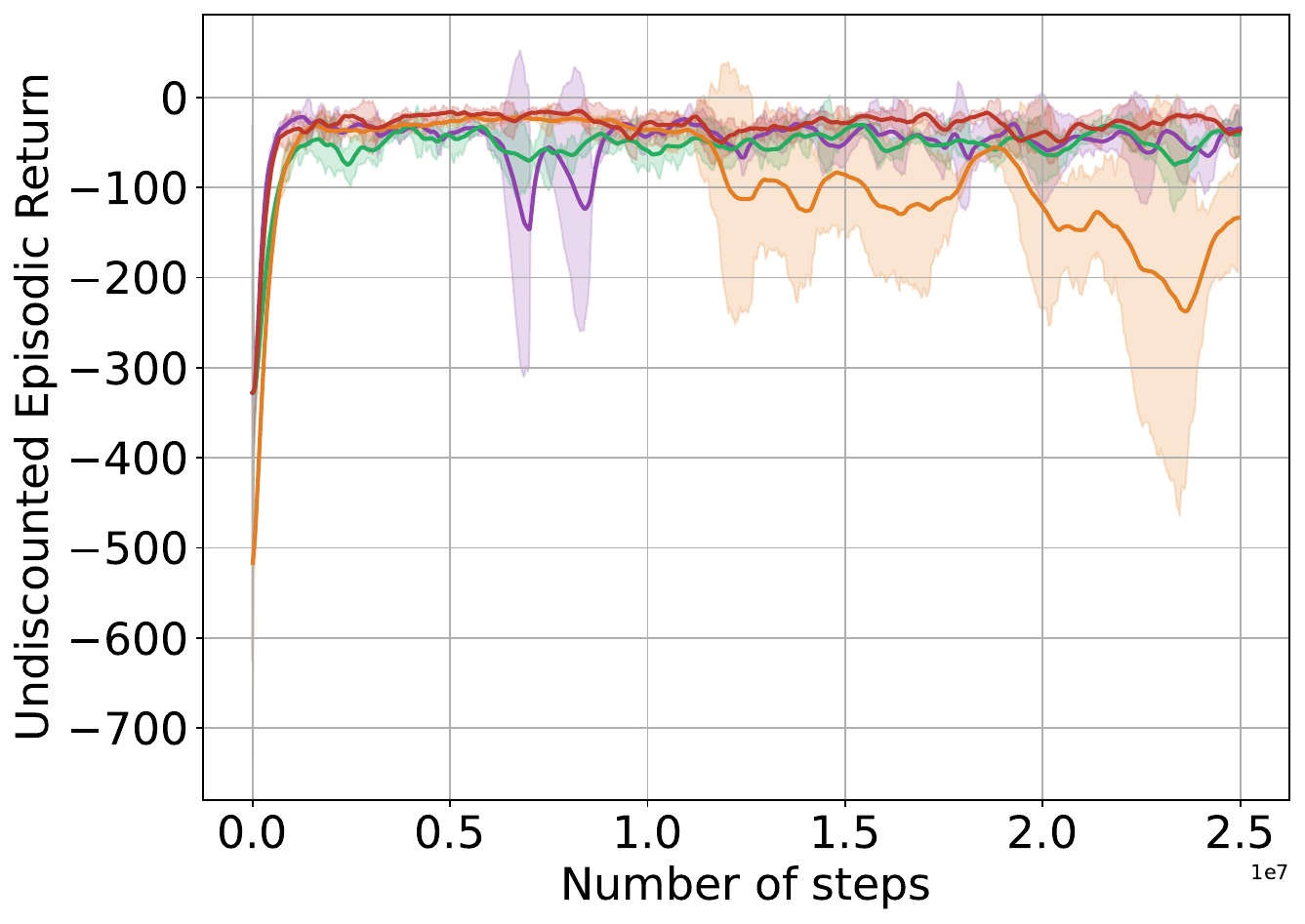}
        \caption{Pitfall}
        \label{fig:vbe_pitfall}
    \end{subfigure}
    \begin{subfigure}{0.32\textwidth}
        \includegraphics[width=\hsize]{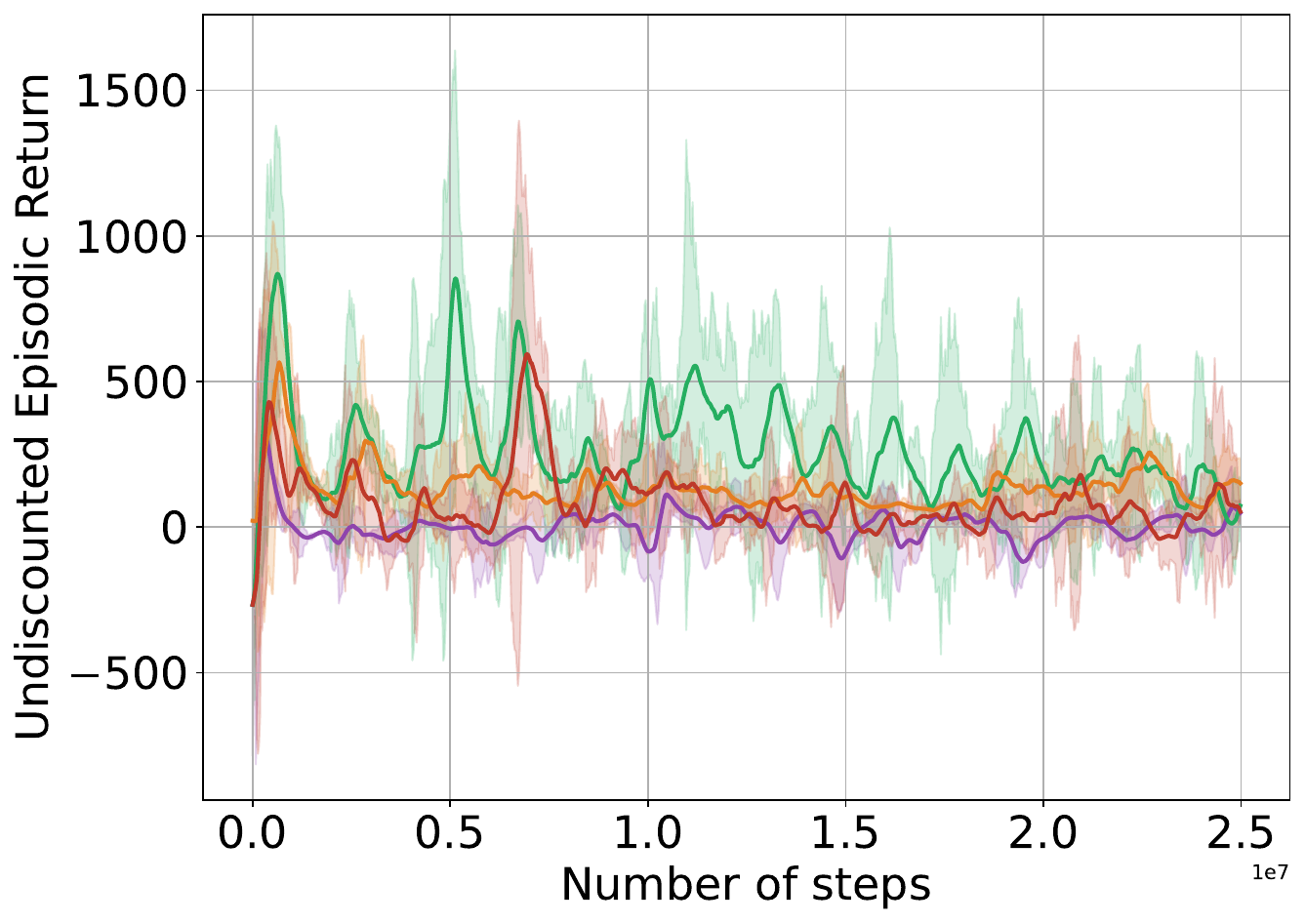}
        \caption{Private-Eye}
        \label{fig:vbe_privateeye}
    \end{subfigure}
    \begin{subfigure}{0.32\textwidth}
        \includegraphics[width=\hsize]{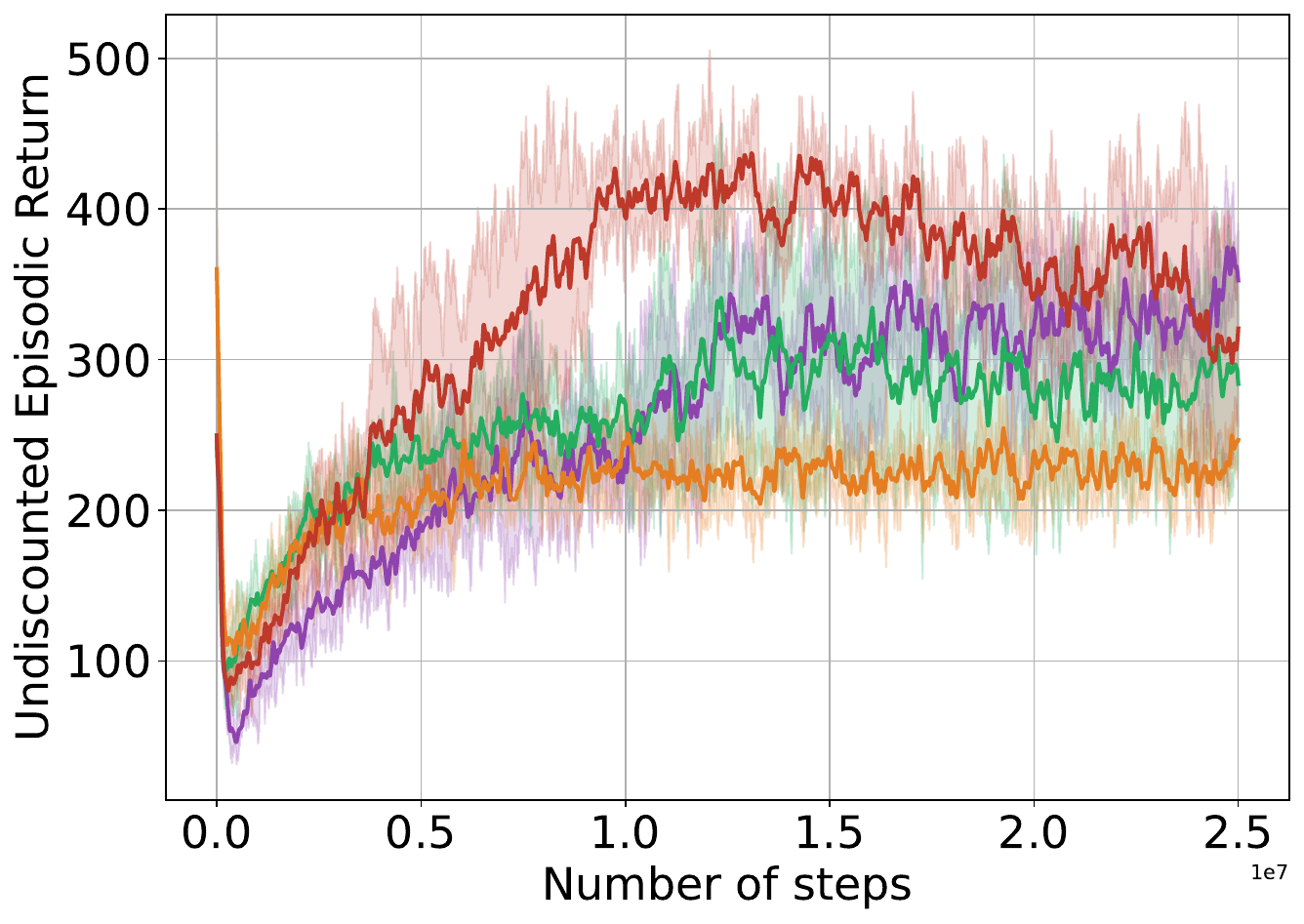}
        \caption{Gravitar}
        \label{fig:vbe_gravitar}
    \end{subfigure}
    \vspace{-0.3cm}
    \caption{Online performance in six Atari games, with shaded regions corresponding to standard errors. The x-axis is the number of environment interaction steps in millions, and the y-axis is the online Undiscounted Episodic Return, where higher is better. The environments in the second row are considered to be more challenging in terms of exploration.}
    \label{fig:vbe_best}
    \vspace{-0.5cm}
\end{figure*}
In this section we test VBE on several hard exploration Atari games, namely Private-Eye, Pitfall, Gravitar \citep{burda2018exploration}, and also on Breakout, Pong and Qbert. We chose this set to ensure a good mix of both hard and easy exploration environments \cite{taiga2021bonus}. As is standard, we combine four consecutive frames to make the observation ($4\times84\times84$), and update agents ever four steps. We clip the rewards between $[-1, 1]$, and do 3 runs for all agents for 25 million steps. 
%We use standard choices, similar experimental setup as used in previous work BDQN, ACB, and RND, i.e., combining four consecutive frames to make the observation ($4\times84\times84$), and skip four frames after the agent takes an action. 
% We run the agents for a total of 4 million steps. 
% %, and show the online performance for each environment in Figure~\ref{fig:vbe_best}. 

% In VBE the target and predictor RQFs have 3 CNN-layers, followed by 2 non-linear layers (representation network) and a final linear output layer. We only update the final layer of the predictor RQFs, and initialize the predictors to have the same representation network as the target RQFs. In practice we found that even $1$ RQF works well for VBE, so that is what we use in our experiments. 
% We chose this configuration because it was faster to run and was more stable than updating the whole network. We include the comparison to the variant of VBE where we update the whole network for the RQFs in Appendix ~\ref{sec:apdx_atari_vbe_vbe_cnn}.

For BDQN, we follow the choices from \citep{osband2016deep}, and use an ensemble size of $10$, without bootstrapping.\footnote{The original BDQN implementation for Atari does not use bootstrapping, that is, all members of the ensemble see the same data.} As in the original BDQN implementation, each value function in the ensemble uses a shared representation network. We use a DDQN update for BDQN to be fair and consistent with the rest of the agents. For VBE, we also use an ensemble size of $10$ and use a shared representation for the RQFs. To further improve the computational complexity we only update the RQF heads. ACB uses an ensemble size of $k=128$ for computing the reward bonus, and RND uses a CNN-based target and predictor with $512$ nodes in the final layer. We use $\bscale$ = 10 for all the agents in all the environments. Additional implementation details for Atari are mentioned in Appendix \ref{app_details_atari}

In Figure~\ref{fig:vbe_best} we see that VBE performs consistently better than BDQN in all Atari environments. In Breakout and Pong, it takes longer for BDQN to achieve the same level of performance as the other agents. This could be attributed to random sampling of the value function for the behavior policy, thus requiring more time for all value functions to adjust. In Qbert and Gravitar, BDQN plateaus before the other agents, suggesting less exploration in these environments compared to the other agents. In Pitfall, BDQN does well in the beginning, but then fails to maintain a good policy in the later stages of training. ACB shows early learning in Pong, but then fails to converge to an optimal policy. Similarly in Qbert, ACB converges to a suboptimal policy, suggesting that ACB's value bonus is not suitable in these environments. In Breakout and Pitfall, ACB performs similarly to VBE, with ACB being slightly better in Breakout towards the end. In Gravitar, ACB is slow in the early stages of the training but then surpasses VBE in the later stages. RND performs very similar to VBE in Breakout, Pong, and Qbert, and is slightly better in Pitfall. However, in Gravitar, RND seems to do much better than all the other agents initially, but then fails to maintain that performance and drops down matching VBE. In Private-Eye none of the agents do well; however, VBE seems to be collecting a higher reward more often than the other agents. 

The exact reason for an agent's performance in complex environments like Atari is hard to discern, and is something that requires additional understanding. However, after showcasing the efficacy of VBE in more controlled settings, the purpose of this experiment is to demonstrate that VBE can easily be extended to more complex environments and can even surpass or be at par with other exploratory baselines. In Appendix \ref{sec:apdx_atari_vbe_vbe_cnn} we demonstrate that variants of VBE can further improve on these results.

\section{Conclusion}
In this work we introduced a new approach to do directed exploration in deep RL, called Value Bonuses with Ensemble errors (VBE). The utility of value bonuses is that it is simple to layer on top of an existing algorithm: the value bonuses are separately estimated and only impact the behavior policy. Improving how we estimate value bonuses, therefore, provides a promising path to replacing simple, but undirected exploration strategies like $\epsilon$-greedy. To date, the primary way to estimate value bonuses has been to estimate a separate value function on reward bonuses, as was done for ACB and RND. However, this approach does not encourage first-visit optimism; it only encourages revisiting an action once a reward bonus is observed. We show that, in general, ACB and RND fail to provide effective exploration in classic control environments. In contrast, on Atari, VBE scales well to more complex environments and performs on par with, if not better than, existing baselines.
% \clearpage
% \subsubsection*{Acknowledgments}
% Use unnumbered third level headings for the acknowledgments. All
% acknowledgments, including those to funding agencies, go at the end of the paper

\clearpage
\appendix
\section*{Appendix: A Discussion on Convergence Criteria for Value Bonuses}\label{app_theory}

First let us discuss how the theory for LSTD applies to our setting. The result from \citep[Corollary 1]{tagorti2015rate} bounds the error of the value function learned under LSTD to the true value function, assuming features are linearly independent (Assumption 1) and a mixing assumption for the environment and behavior policy (Assumption 2). This bound includes an error to the best linear solution, for infinite data, and the error between the best linear solution and the true value function. Because we are in the realizable case and the objective is convex for linear function approximation, the best linear solution is the true value function and in the limit of data the LSTD solution will reach this best linear solution. We can write this as a corollary of their result. Note their result is written by value functions, but automatically extends to action-value function by considering state-action features and stationary distribution $\mu_b(s,a) = \mu(s) \pi_b(a|s)$.

\begin{corollary}[Corollary following from [Theorem 1]\citep{tagorti2015rate}] Assume we are given behavior policy $\pi_b$ with stationary distribution $\mu$ and target policy $\pi$ and the rewards are defined using a randomly sampled $\targetq{i}$ from the set of linear functions on features $\phi(s,a)$ and the formula in Equation \eqref{eq:rvf_reward}. Under Assumption 1 and 2 from \citep{tagorti2015rate}, for a large enough number of samples $T$ given by \citep[Eq 6]{tagorti2015rate} (called $n$ in their result), then $\ensembleq{i}$ returned by LSTD satisfies
\begin{equation*}
\mathbb{E}_{s\sim\mu a\sim\pi_b(\cdot|s)}[(\ensembleq{i}(s,a) - \targetq{i}(s,a))^2] \le O(1/\sqrt{T})
\end{equation*}
\end{corollary}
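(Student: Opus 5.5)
The plan is to specialize Theorem~1 / Corollary~1 of \citet{tagorti2015rate} to our realizable setting. The corollary has two parts: an estimation term that decays as $O(1/\sqrt{T})$, and an approximation term. The approximation term measures the distance between the true action-value function and the best linear fixed point (the LSTD/TD fixed point in the span of the features). The whole argument reduces to showing that this approximation term is exactly zero for the rewards of Equation~\eqref{eq:rvf_reward}.

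\textbf{Step 1: set up the state-action process.} Work with state-action features $\phi(s,a)$ and the Markov chain on pairs $(S_t,A_t)$ generated by the behavior policy $\pi_b$. Its stationary distribution is $\mu_b(s,a)=\mu(s)\pi_b(a|s)$, and the norm is the corresponding weighted norm $\|\cdot\|_{\mu_b}$. Because this is just a Markov chain on pairs, the state-value statement of \citet{tagorti2015rate} carries over verbatim. Their Assumptions~1 and~2 are imposed directly on this pair chain: linear independence of the features, and $\beta$-mixing of the chain.

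\textbf{Step 2: show the best linear solution is $\targetq{i}$.} By Proposition~\ref{prop_main}, the action-value of the random reward $R_{i,t+1}$ under the target policy $\pi$ is exactly $q^\pi_i=\targetq{i}$. Since $\targetq{i}=\phi^\top w^*_i$ lies in the linear span of the features, it is a fixed point of $\Pi_{\mu_b} T^\pi$, where $\Pi_{\mu_b}$ is the $\mu_b$-weighted projection and $T^\pi$ the Bellman operator:
\begin{equation*}
T^\pi \targetq{i}=\targetq{i}, \qquad \Pi_{\mu_b}\targetq{i}=\targetq{i}.
\end{equation*}
The LSTD solution is the unique solution of the projected fixed-point equation, because the matrix $A=\E[\phi(\phi-\gamma\phi')^\top]$ is invertible. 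Hence the LSTD limit with infinite data equals $\targetq{i}$, and the approximation term in their bound vanishes.

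\textbf{Step 3: plug into the finite-sample bound.} Substitute into their finite-sample bound for $T$ at least the threshold in \citep[Eq~6]{tagorti2015rate}. The bound $\|\ensembleq{i}-\targetq{i}\|^2_{\mu_b}\le O(1/\sqrt{T})$ (up to $\log(1/\delta)$ and mixing constants) is exactly the stated expectation over $s\sim\mu$, $a\sim\pi_b$. Two routine checks remain:
\begin{itemize}
\item The rewards are bounded, since $\targetq{i}$ is bounded on normalized features. This gives the reward-bound constant $R_{\max}$ that appears in their bound.
\item The reward depends on the next action $A_{t+1}\sim\pi$. The quantity $\phi(S_t,A_t)R_{i,t+1}$ is still a bounded function of the extended chain $(S_t,A_t,S_{t+1},A_{t+1})$, and this extended chain is also mixing.
\end{itemize}

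\textbf{Main obstacle.} The delicate point is the invertibility of $A$ in the off-policy case, $\pi\neq\pi_b$. On-policy, $A$ is positive definite; off-policy it need not be. If $A$ is singular, the LSTD solution is not unique and the approximation term is no longer controlled by the realizability argument alone. My first attempt would be to fold invertibility into the cited Assumptions, as \citet{tagorti2015rate} effectively do by requiring well-defined LSTD. Failing that, I would add it as an explicit hypothesis. Given invertibility, the realizability argument of Step~2 is immediate.
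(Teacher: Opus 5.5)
Your proposal follows the same route as the paper. It uses Proposition~\ref{prop_main} and the realizability of $f_i$ in the linear class to make the approximation term of Tagorti and Scherrer's bound vanish, leaving only the $O(1/\sqrt{T})$ estimation term, and it extends to action-values through state-action features and $\mu_b(s,a)=\mu(s)\pi_b(a|s)$. The obstacle you flag is real and the paper's sketch passes over it: the cited LSTD analysis is on-policy, and when $\pi\neq\pi_b$ their Assumptions 1 and 2 do not imply that $A$ is invertible, nor the bound on $\|A^{-1}\|$ used in the sample-size threshold. Adding invertibility as an explicit hypothesis is the right fix.
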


Now let us discuss how the work on neural TD applies to our setting \citep{cai2019neural}.
%If we satisfy the required conditions for $\ensembleq{i}$ to be able to converge to $\targetq{i}$, then the value bonuses will similarly converge to zero. There are recent results showing that temporal difference (TD) learning with an overparameterized neural network in the on-policy setting converges to the global solution \citep{cai2019neural}. 
The result is proved for neural networks with a single hidden layer using a ReLU activation for the hidden layer, with the additional condition that the stationary distribution for the policy has a bounded density over states and the stepsizes decrease at a rate of $1/\sqrt{t}$. This result immediately implies that our $\ensembleq{i}$ should converge to $\targetq{i}$, because the global solution for this problem is $\targetq{i}$ because it is in the value function class. We state this as a corollary of their result here, to be clear about how it applies.

\begin{corollary}[Corollary following from [Theorem 4.6]\citep{cai2019neural}] Assume that 1) the policy $\pi$ is fixed with stationary distribution $\mu$, where $\mu(s) \pi(a|s)$ has bounded density across the space $x = (s, a)$ 2) the function class $\mathcal{F} = \{ \frac{1}{\sqrt{m}} \sum_{j=1}^m b_j \max(x^\top w_j, 0) | W = (b_1, \ldots, b_m, w_1, \ldots, w_m), ||W - W(0) ||_2 \le B\}$ for $x = (s, a)$, $W(0)$ a point at which the weights are initialized in the algorithm and $B$ some constant, 3)  $\|x\|_2 = 1$ for all $x$ and the rewards are defined using a randomly sampled $\targetq{i}$ from $\mathcal{F}$ and the formula in Equation \eqref{eq:rvf_reward}, and 4) the Neural TD algorithm (Algorithm 1 in \citep{cai2019neural}) is run for $T$ steps with stepsize $\eta = \min((1-\gamma)/8,1/ \sqrt{T})$. Then the algorithm returns $\ensembleq{i}$ that satisfies
\begin{equation*}
\mathbb{E}_{W \sim , \mu\pi}[(\ensembleq{i}(s,a) - \targetq{i}(s,a))^2] \le \frac{O(B^2)}{(1-\gamma)^2 \sqrt{T}} + O(B^2 m^{-1/2} + B^{5/2} m^{-1/4})
\end{equation*}
\end{corollary}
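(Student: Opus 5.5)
The plan is to treat the statement as a direct specialization of \citep[Theorem 4.6]{cai2019neural}. Their global-convergence bound is stated relative to the fixed point of the projected Bellman operator within the function class $\mathcal{F}$; our task is to show that, in our construction, this fixed point coincides with $\targetq{i}$.

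\textbf{Step 1: a valid policy-evaluation instance.} First, I would check that the ensemble rewards define an instance satisfying their assumptions. For fixed $\pi$, the stochastic rewards $R_{i,t+1}$ from Equation~\eqref{eq:rvf_reward} depend only on the transition $(S_t,A_t,S_{t+1},A_{t+1})$. Their boundedness follows because $\targetq{i}\in\mathcal{F}$ with $\|x\|_2=1$ and $\|W-W(0)\|_2\le B$ gives $|\targetq{i}(x)|\le C(B)$. So the setting is a standard policy evaluation problem with bounded reward. Assumptions 1), 2) and 4) are exactly the remaining hypotheses of their theorem: the bounded density of $\mu\pi$, the two-layer ReLU class with projection radius $B$, and the stepsize schedule.

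\textbf{Step 2: the fixed point is $\targetq{i}$.} By Proposition~\ref{prop_main}, $q^\pi_i=\targetq{i}$, so the true action-value function is a fixed point of the Bellman operator $T^\pi$. Since $\targetq{i}\in\mathcal{F}$, projecting onto $\mathcal{F}$ leaves it unchanged, so $\targetq{i}=\Pi_{\mathcal{F}}T^\pi\targetq{i}$. Uniqueness follows because $\Pi_{\mathcal{F}}T^\pi$ is a $\gamma$-contraction in $L_2(\mu\pi)$, as used in \citep{cai2019neural}.

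\textbf{Step 3: plug into their bound.} Their theorem bounds $\mathbb{E}[(\hat Q_{\text{out}}-Q^*)^2]$, where $Q^*$ is the projected fixed point. The error has two parts: an optimization term $O(B^2)/((1-\gamma)^2\sqrt{T})$, and a linearization term $O(B^2m^{-1/2}+B^{5/2}m^{-1/4})$ coming from comparing the network to its local linearization at $W(0)$. Substituting $Q^*=\targetq{i}$ and $\hat Q_{\text{out}}=\ensembleq{i}$ gives the stated inequality.

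\textbf{Main obstacle.} The hardest step is Step 2, because \citep{cai2019neural} actually define the fixed point relative to the linearized class $\mathcal{F}_{B,m}$ around $W(0)$, not the nonlinear class $\mathcal{F}$. My first approach would be to note that their bound already includes the linearization gap to the nonlinear class; that gap is precisely the $m^{-1/2}, m^{-1/4}$ terms. Realizability of $\targetq{i}$ in the nonlinear class then transfers to an approximate fixed point of the linearized projection, with an error of that same order. This extra error can be absorbed into the second term of the bound.

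A secondary point is that $\targetq{i}$ must lie within the radius-$B$ ball around the predictor's initialization $W(0)$. I would ensure this by construction, either by sampling the target near $W(0)$ or by choosing $B$ large enough. The price is the $B^2$ dependence already present in the constants.
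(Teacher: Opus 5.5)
This is the paper's argument: its proof consists only of your Step 1 (rewards are bounded by construction) and the one-line version of Steps 2--3 (Theorem 4.6 of \citet{cai2019neural} controls the distance to the global optimum in the class, and $f_i$, lying in the class, is that optimum), so it never addresses the linearized-class issue you flag. If you write your absorption step out, note that it is not free: with $Q_0^\star$ the fixed point of $\Pi_{\mathcal{F}_{B,m}}T^\pi$ and $f_i = T^\pi f_i$, the contraction argument only gives $\|Q_0^\star - f_i\|_\mu \le (1-\gamma)^{-1}\|\Pi_{\mathcal{F}_{B,m}} f_i - f_i\|_\mu$, so the linearization error picks up an extra $(1-\gamma)^{-2}$ in squared error and does not fit into the stated second term; also take $f_i \in \mathcal{F}$ as the hypothesis rather than arranging it by enlarging $B$, since an independently initialized target lies at distance of order $\sqrt{m}$ from $W(0)$ and the $m$-dependent terms would then grow with $m$.
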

\begin{proof}
The result also requires that the reward magnitudes are all bounded, which they are by construction. Theorem 4.6 states that the outputted action-value function is bounded as above to the global optimum in the function class. Because $\targetq{i}(s,a)$ is in the function class, we know it is the global optimum. 
\end{proof}

\subsubsection*{Acknowledgments}
\label{sec:ack}
We would like to thank NSERC, CIFAR, and Amii for research funding and the Digital Research
Alliance of Canada for computational resources.

\clearpage
\bibliography{main}

@inproceedings{janz2019successor,
 author = {Janz, David and Hron, Jiri and Mazur, Przemys\l aw and Hofmann, Katja and Hern\'{a}ndez-Lobato, Jos\'{e} Miguel and Tschiatschek, Sebastian},
 booktitle = {Advances in Neural Information Processing Systems},
 title = {Successor Uncertainties: Exploration and Uncertainty in Temporal Difference Learning},
 year = {2019}
}

@inproceedings{osband2023approximate,
  title = {Approximate {{Thompson Sampling}} via {{Epistemic Neural Networks}}},
  booktitle = {Uncertainty in {{Artificial Intelligence}}},
  author = {Osband, Ian and Wen, Zheng and Asghari, Seyed Mohammad and Dwaracherla, Vikranth and Ibrahimi, Morteza and Lu, Xiuyuan and Van Roy, Benjamin},
  year = {2023}
}

@inproceedings{cai2019neural,
  title = {Neural {{Temporal-Difference Learning Converges}} to {{Global Optima}}},
  booktitle = {Advances in {{Neural Information Processing Systems}}},
  author = {Cai, Qi and Yang, Zhuoran and Lee, Jason D and Wang, Zhaoran},
  year = {2019}
}

@inproceedings{tagorti2015rate,
  title = {On the {{Rate}} of {{Convergence}} and {{Error Bounds}} for {{LSTD}}({$\lambda$})},
  booktitle = {Proceedings of the 32nd {{International Conference}} on {{Machine Learning}}},
  author = {Tagorti, Manel and Scherrer, Bruno},
  year = {2015},
  publisher = {{PMLR}},
}

@article{patterson2022generalized,
  title={A generalized projected bellman error for off-policy value estimation in reinforcement learning},
  author={Patterson, Andrew and White, Adam and White, Martha},
  journal={The Journal of Machine Learning Research},
  year={2022},
}

@inproceedings{dai2017learning,
  title={Learning from conditional distributions via dual embeddings},
  author={Dai, Bo and He, Niao and Pan, Yunpeng and Boots, Byron and Song, Le},
  booktitle={Artificial Intelligence and Statistics},
  year={2017},
  organization={PMLR}
}

@inproceedings{van2016deep,
  title={Deep reinforcement learning with double q-learning},
  author={Van Hasselt, Hado and Guez, Arthur and Silver, David},
  booktitle={Proceedings of the AAAI conference on artificial intelligence},
  year={2016}
}

@inproceedings{ostrovski2017count,
author = {Ostrovski, Georg and Bellemare, Marc G and van den Oord, A{\"a}ron and Munos, R{\'e}mi},
title = {Count-Based Exploration with Neural Density Models},
booktitle = {International Conference on Machine Learning},
year = {2017}
}

@article{Bellemare2016unifying,
author = {Bellemare, Marc G and Srinivasan, Sriram and Ostrovski, Georg and Schaul, Tom and Saxton, David and Munos, Remi},
title = {Unifying Count-Based Exploration and Intrinsic Motivation},
journal = {Advances in Neural Information Processing Systems},
year = {2016}
}

@inproceedings{grande2014sample,
author = {Grande, R and Walsh, T and How, J},
title = {Sample Efficient Reinforcement Learning with Gaussian Processes},
booktitle = {International Conference on Machine Learning},
year = {2014}
}

@inproceedings{osband2016generalization,
author = {Osband, I and Van Roy, B and Wen, Z},
title = {Generalization and Exploration via Randomized Value Functions},
booktitle = {International Conference on Machine Learning},
year = {2016}
}

@inproceedings{osband2016deep,
author = {Osband, Ian and Blundell, Charles and Pritzel, Alexander and Van Roy, Benjamin},
title = {Deep Exploration via Bootstrapped DQN},
booktitle = {Advances in Neural Information Processing Systems},
year = {2016}
}

@inproceedings{odonoghue2018uncertainty,
  title = {The {{Uncertainty Bellman Equation}} and {{Exploration}}},
  booktitle = {International {{Conference}} on {{Machine Learning}}},
  author = {O'Donoghue, Brendan and Osband, Ian and Munos, Remi and Mnih, Vlad},
  year = {2018}
}

@inproceedings{zhao2021faster,
  title={Faster non-asymptotic convergence for double Q-learning},
  author={Zhao, Lin and Xiong, Huaqing and Liang, Yingbin},
  booktitle={Advances in Neural Information Processing Systems},
  year={2021}
}

@inproceedings{wang2022convergent,
  title={Convergent and efficient deep Q network algorithm},
  author={Wang, Zhikang T and Ueda, Masahito},
  booktitle = {International {{Conference}} on {{Learning Representations}}},
  year={2022}
}

@inproceedings{white2017unifying,
author = {White, Martha},
title = {Unifying task specification in reinforcement learning},
booktitle = {International Conference on Machine Learning},
year = {2017}
}

@inproceedings{li2010acontextual,
author = {Li, Lihong and Chu, Wei and Langford, John and Schapire, Robert E},
title = {A contextual-bandit approach to personalized news article recommendation},
booktitle = {World Wide Web Conference},
year = {2010}
}

@article{abbasi2011improved,
  title={Improved algorithms for linear stochastic bandits},
  author={Abbasi-Yadkori, Yasin and P{\'a}l, D{\'a}vid and Szepesv{\'a}ri, Csaba},
  journal={Advances in neural information processing systems},
  volume={24},
  year={2011}
}

@article{abbasi2019exploration,
  title={Exploration-enhanced politex},
  author={Abbasi-Yadkori, Yasin and Lazic, Nevena and Szepesvari, Csaba and Weisz, Gellert},
  journal={arXiv preprint arXiv:1908.10479},
  year={2019}
}

@article{wang2019optimism,
  title={Optimism in reinforcement learning with generalized linear function approximation},
  author={Wang, Yining and Wang, Ruosong and Du, Simon S and Krishnamurthy, Akshay},
  journal={arXiv preprint arXiv:1912.04136},
  year={2019}
}

@article{kumaraswamy2018context,
  title={Context-dependent upper-confidence bounds for directed exploration},
  author={Kumaraswamy, Raksha and Schlegel, Matthew and White, Adam and White, Martha},
  journal={arXiv preprint arXiv:1811.06629},
  year={2018}
}

@inproceedings{
burda2018exploration,
title={Exploration by random network distillation},
author={Yuri Burda and Harrison Edwards and Amos Storkey and Oleg Klimov},
booktitle={International Conference on Learning Representations},
year={2019},
url={https://openreview.net/forum?id=H1lJJnR5Ym},
}

@article{osband2018randomized,
author = {Osband, Ian and Aslanides, John and Cassirer, Albin},
title = {{Randomized Prior Functions for Deep Reinforcement Learning.}},
journal = {NeurIPS},
year = {2018}
}

@article{choshen2018dora,
  author    = {Leshem Choshen and
               Lior Fox and
               Yonatan Loewenstein},
  title     = {{DORA} The Explorer: Directed Outreaching Reinforcement Action-Selection},
  journal   = {CoRR},
  year      = {2018}
}

@inproceedings{
ash2022anticoncentrated,
title={Anti-Concentrated Confidence Bonuses For Scalable Exploration},
author={Jordan T. Ash and Cyril Zhang and Surbhi Goel and Akshay Krishnamurthy and Sham M. Kakade},
booktitle={International Conference on Learning Representations},
year={2022},
url={https://openreview.net/forum?id=RXQ-FPbQYVn}
}

@article{osband2019deep,
  title={Deep Exploration via Randomized Value Functions.},
  author={Osband, Ian and Van Roy, Benjamin and Russo, Daniel J and Wen, Zheng and others},
  journal={J. Mach. Learn. Res.},
  volume={20},
  number={124},
  pages={1--62},
  year={2019}
}

@inproceedings{
ciosek2020conservative,
title={Conservative Uncertainty Estimation By Fitting  Prior Networks},
author={Kamil Ciosek and Vincent Fortuin and Ryota Tomioka and Katja Hofmann and Richard Turner},
booktitle={International Conference on Learning Representations},
year={2020},
url={https://openreview.net/forum?id=BJlahxHYDS}
}

@article{taiga2021bonus,
  title={On bonus-based exploration methods in the arcade learning environment},
  author={Taiga, Adrien Ali and Fedus, William and Machado, Marlos C and Courville, Aaron and Bellemare, Marc G},
  journal={arXiv preprint arXiv:2109.11052},
  year={2021}
}
\bibliographystyle{rlj}

\beginSupplementaryMaterials
\section{Experiment Details}\label{app_details}

\subsection{Environment Details}
\textbf{Mountain Car} is classic control problem of driving an underpowered car up a mountain. The original problem is set up as cost-to-goal, and here to frame it as a challenging exploration problem we offset the reward by 1, making it a sparse reward problem. The start state is sampled from the range \([-0.6, -0.4]\), which is the valley between two mountains, and the car starts with velocity zero.

\textbf{Puddle World} is a continuous state 2-dimensional world with \((x, y) \in [0, 1]^2\) with 2 intersecting puddles: (1) \([0.45, 0.4]\) to \([0.45, 0.8]\), and (2) \([0.1, 0.75]\)  to \([0.45, 0.75]\). The puddles have a radius of 0.1 and the goal is the region \((x, y) \in [0.95, 1.0], [0.95, 1.0]\). The problem is cost-to-goal with additional penalty for when the agent is either puddle. The penalty for being in a puddle is inversely proportional to the distance of the agent from the center of the puddle, i.e., higher negative reward for being closer to the center. The agent chooses a direction of movement, resulting in displacement equal to \(0.005 + \zeta, \zeta \sim N(\mu=0, \sigma=0.1)\) in the chosen direction. The starting positions for episodes is uniformly sampled from \((x, y) \in [0.1, 0.3], [0.45, 0.65]\). 
High variance transitions coupled with high magnitude penalties make this a challenging exploration problem.
%This domain highlights a common difficulty for traditional exploration methods: high magnitude negative rewards, which often cause the agent to erroneously decrease its value estimates too quickly. 

\textbf{River Swim} is a standard continuing exploration benchmark inspired by a fish trying to swim upriver, with high reward (+1) upstream which is difficult to reach and, a lower but still positive reward (+0.005), which is easily reachable downstream. The state space is continuous in \([0, 1]\), and the stochastic displacement is equal to \(0.1 + \zeta, \zeta \sim N(\mu=0, \sigma=0.01)\) in the direction of the chosen action up or down. As swimming upstream is difficult, action up is stochastically switched to down. We also flip the observation such that the high reward is at observation $0$ and the lower reward is at observation $1$. We do this because we noticed that using random initialization with ReLU activations would mostly result in a higher value for a higher input thus favouring the correct action in this case. The starting position is sampled uniformly in \([0.9, 1.0]\).

\textbf{Deepsea} is a finite-horizon episodic grid world environment, which poses a hard exploration challenge. In each state the agent can take two actions, left or right. Every action moves the agent down one row with column change being controlled by the chosen action. Collisions to the grid edges are handled by the agent staying in the same column but moving down one row. Given the transition structure, the agent can never access the states in the top-right triangle of the grid. Therefore, the total number of states are \(\frac{N \times (N+1)}{2}\). The most rewarding state is the state on the bottom-right corner of the grid. To reach this state successfully in an episode, the agent needs to take the action that moves it towards right at every step. However, there is a penalty of \(\frac{0.01}{N} \) for taking this action in every state, except for in the high rewarding state where the agent gets a reward of 1 for taking the right action. This transition and reward structure make it a very challenging environment. A policy that explores uniform randomly has a probability of \(2^{-N}\) of reaching the highly rewarding state in any episode.

\subsection{Algorithm Details for Classics Control}
\label{app_details_classic_control}
In the classic control experiments (Section \ref{sec:classic_control}), every agent uses the same neural architecture, containing 2 hidden non-linear layers with 50 nodes each and ReLU activation, followed by a linear output-layer. DQN-P, BDQN, VBE, ACB, and RND all use target networks which are updated periodically after every $\tau$ steps. For DQN-P and BDQN $\tau=4$ works best for all four classic environments, as used by the BDQN paper. VBE, ACB and RND use $\tau=4$ for Mountain Car, Puddle World and River Swim, and $\tau=64$ for Deepsea. We use a learning rate of $\alpha=0.001$ and a discount factor of $\gamma=0.99$. DQN-P, BDQN and VBE variants use an experience replay buffer that stores the most recent 50K transitions. The agent's parameters are updated after every step using a randomly sampled mini-batch of $128$. We sweep the agents on bonus scales $c=[1.0, 3.0, 10.0]$, and ensemble sizes $k=[1, 2, 8, 20]$. The PPO version of ACB uses an ensemble size of $k=128$, and RND uses a multi-layer neural network instead of an ensemble. Tables~\ref{tab:pe_best_params}, and ~\ref{tab:nn_best_params} show the best performing sets of ensemble size $k$ and bonus scale $c$ for results in Sections~\ref{sec:pure-exploration}, and ~\ref{sec:classic_control}, correspondingly.
\begin{table}[ht]
    \centering
    \begin{tabular}{ |c|c| }
        \hline
         & Deepsea\\
        \hline
        VBE & $k=1, c=1.0$ \\
        VBE-SL & $k=20, c=1.0$ \\
        DQN-P & $k=1, c=1.0$ \\
        BDQN & $k=20, c=1.0$ \\
        ACB & $k=20, c=1.0$ \\
        RND & $c=1.0$ \\
        \hline
    \end{tabular}
    \caption{Ensemble size $k$ and bonus scale $c$ for agents in Figure~\ref{fig:ds_pe_l_best}.}
    \label{tab:pe_best_params}
\end{table}
\begin{table}[ht]
    \centering
    \begin{tabular}{ |c|c|c|c|c| }
        \hline
         & River Swim & Puddle World & Mountain Car & Deepsea\\
        \hline
        VBE   & $k=20, c=1.0$ & $k=1, c=10.0$ & $k=2, c=1.0$ &$k=20, c=1.0$\\
        DQN-P & $k=1, c=10.0$ & $k=1, c=3.0$ & $k=1, c=1.0$ &$k=1, c=10.0$\\
        BDQN  & $k=8, c=10.0$ & $k=2, c=1.0$ & $k=20, c=1.0$ &$k=20, c=10.0$\\
        ACB & $k=20, c=10.0$ & $k=1, c=1.0$ & $k=8, c=1.0$ &$k=20, c=1.0$\\
        RND & $c=10.0$ & $c=1.0$ & $c=1.0$ &$c=1.0$\\
        \hline
    \end{tabular}
    \caption{Ensemble size $k$ and bonus scale $c$ for agents in Figure~\ref{fig:nns_best}.}
    \label{tab:nn_best_params}
\end{table}

\subsection{Algorithm Details for Atari}
\label{app_details_atari}
All the agents used in Atari experiments use the same architecture: A representation network with 3 CNN layers followed by a value function head containing one hidden layer with Relu activation and a final linear layer. We use a DDQN update for all agents and update every 4 steps. We update the target networks every 10000 steps. We use a replay buffer that stores 1 million of the most recent transitions and discount factor of 0.99. Finally, we use Adam optimizer with a learning rate of 0.0000625.

\section{Comparing VBE with PPO based variants of ACB and RND}\label{app_ppo}
In this section we compare VBE with PPO-based variants of ACB and RND as they were originally implemented and evaluated -- we call these algorithm ACB-PPO and RND-PPO, respectively. We use the opensource implementation of PPO-based ACB and RND as provided by \citet{ash2022anticoncentrated}\footnote{See https://github.com/JordanAsh/acb/tree/main}. To ensure that each algorithm uses the same amount of data, we use a single agent interacting with the environment for ACB-PPO and RND-PPO, instead of parallel agents interacting with parallel instances of the environment. Using a softmax policy and parallel agents interacting with the environments can also help with exploration, so to be fair we use only one instance of agent-environment interaction, as is done for VBE. 

\subsection{Pure Exploration}
In this section we first compare VBE with PPO-based variants of ACB and RND in the pure exploration setting. Similar to in Section \ref{sec:pure-exploration}, we use one-hot encoding and linear function approximation for all the agents. The agents observe no environment reward and are behaving only with respect to the exploration strategy employed by each of these algorithms. In Figure \ref{fig:pe_ppo_linear_best} we see that the PPO-based variants of ACB and RND cover much more of the state-space compared to their Value-based (VB) counterparts in Figure \ref{fig:ds_pe_l_best}. This may be due to the fact that PPO-based variants of ACB and RND use a softmax behaviour policy, which can cause the agent to take the non-greedy actions at random. In Figure \ref{fig:ds_pe_ppo_l_50} we see that ACB-PPO almost flatlines and stops visiting new states after sometime. However, RND-PPO keeps on visiting new states. Although this is a significant improvement, however, both ACB-PPO and RND-PPO still fail to cover the state-space.

\begin{figure}[ht]
\vspace{-0.3cm}
\captionsetup[subfigure]%
     {justification=centering}
     \centering
    \begin{subfigure}{0.9\textwidth}
        \includegraphics[width=1\hsize]{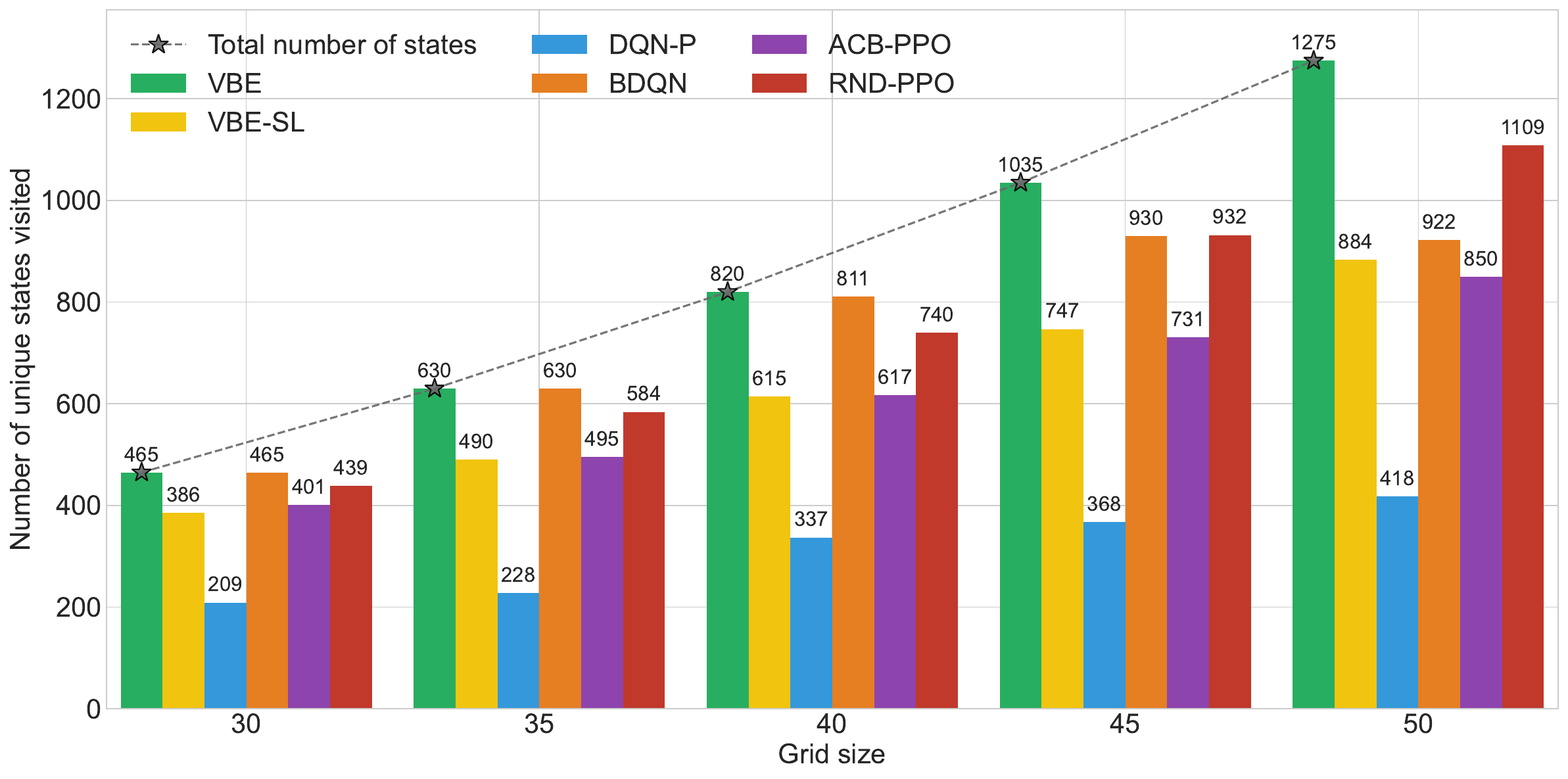}
        \subcaption{State coverage in Deepsea of different grid sizes}
        \label{fig:ds_pe_ppo_l_best}
    \end{subfigure}
    \hfill%
    \begin{minipage}{0.5\textwidth}
        \includegraphics[width=1\linewidth]{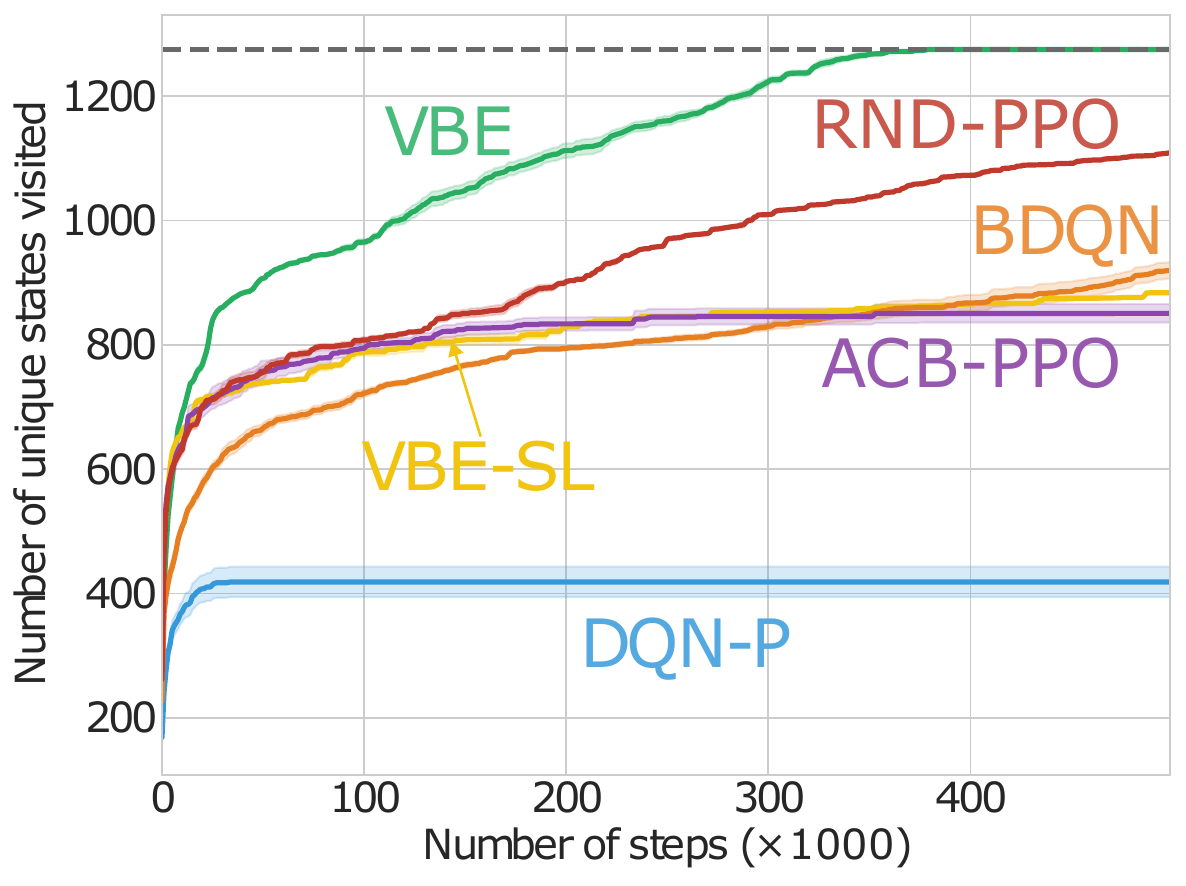}
        \subcaption{Progression of unique states visited (grid size 50)}
        \label{fig:ds_pe_ppo_l_50}
    \end{minipage}
    \hfill%
    \begin{minipage}{0.49\textwidth}
        \caption{Contrasting the state coverage abilities of exploration algorithms in DeepSea. In (a) each bar corresponds to the total number of unique states visited by an agent after completing 10,000 episodes. The black stars indicate the total number of unique states for each grid size. Notably, VBE covers the entire state space, even for the larger grid sizes. (b) displays the progression of unique states visited by agents over the course of learning for Deepsea with grid size 50. The dotted line represents the total number of unique states (1275) in this environment. It provides evidence that VBE consistently explores new states at a significantly higher rate.}
        \label{fig:pe_ppo_linear_best}
    \end{minipage}
\end{figure}

\subsection{Classic Control}
\label{app_ppo_control}
The chosen hyperparmaters of ACB-PPO and RND-PPO as compared to VBE for classic control domains are shown in Table~\ref{tab:rb_nn_best_params}. Figure~\ref{fig:rb_nns_best} shows the result comparing these algorithms. In general, across the domains the PPO-variants' perform similar to, or poorer than, their VB variants in Figure~\ref{fig:nns_best}. VBE, and BDQN, outperform them across domains. Even DQN-P is competitive with them, or better in all domains except Deepsea. Compared to their VB variants, the performance of the PPO variants drops in River Swim, and Puddle World, and continues to be the same in Mountain Car. In Deepsea RND-PPO improves upon its VB variant RND, while ACB-PPO continues to perform similarly to ACB.

\begin{figure*}[ht]
    \vspace{-0.1cm}
    \centering
    \begin{subfigure}{0.24\textwidth}
        \includegraphics[width=\hsize]{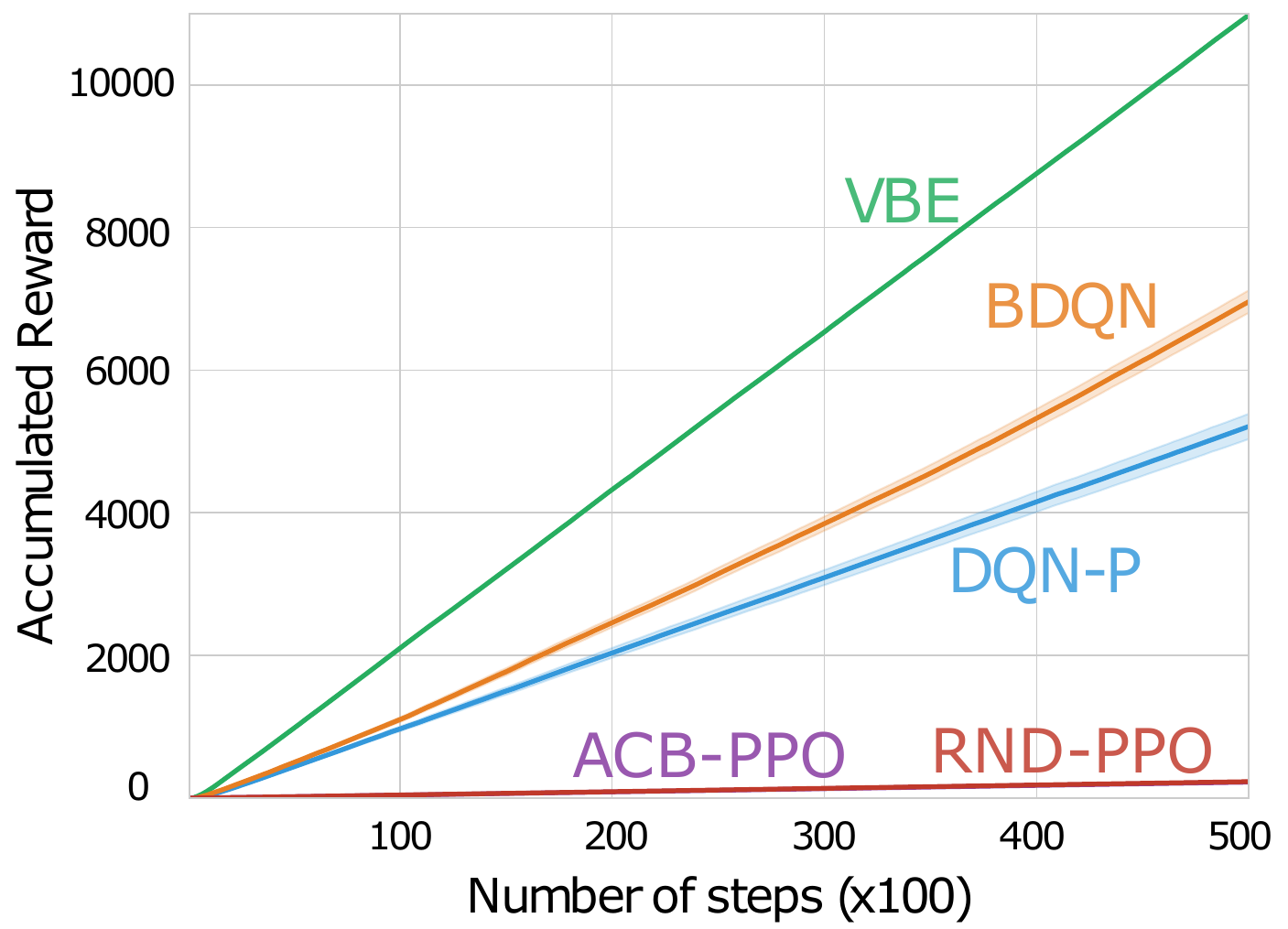}
        \caption{River Swim}
        \label{fig:rs_rb_nn_best}
    \end{subfigure}
    \hfill%\hfil
    \centering
    \begin{subfigure}{0.24\textwidth}
        \includegraphics[width=\hsize]{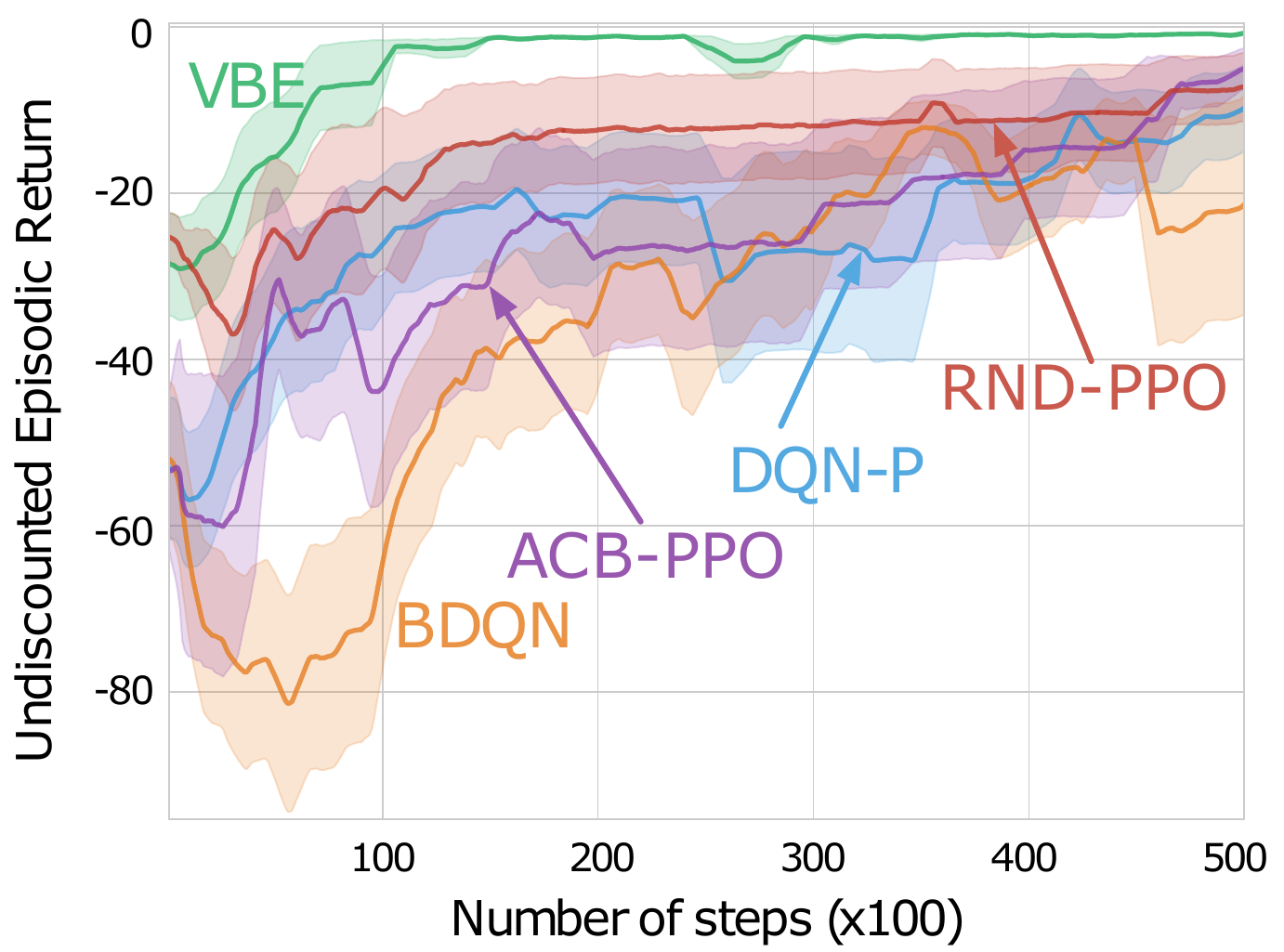}
        \caption{Puddle World}
        \label{fig:pw_rb_nn_best}
    \end{subfigure}
    \hfill%\hfil
    \centering
    \begin{subfigure}{0.24\textwidth}
        \includegraphics[width=\hsize]{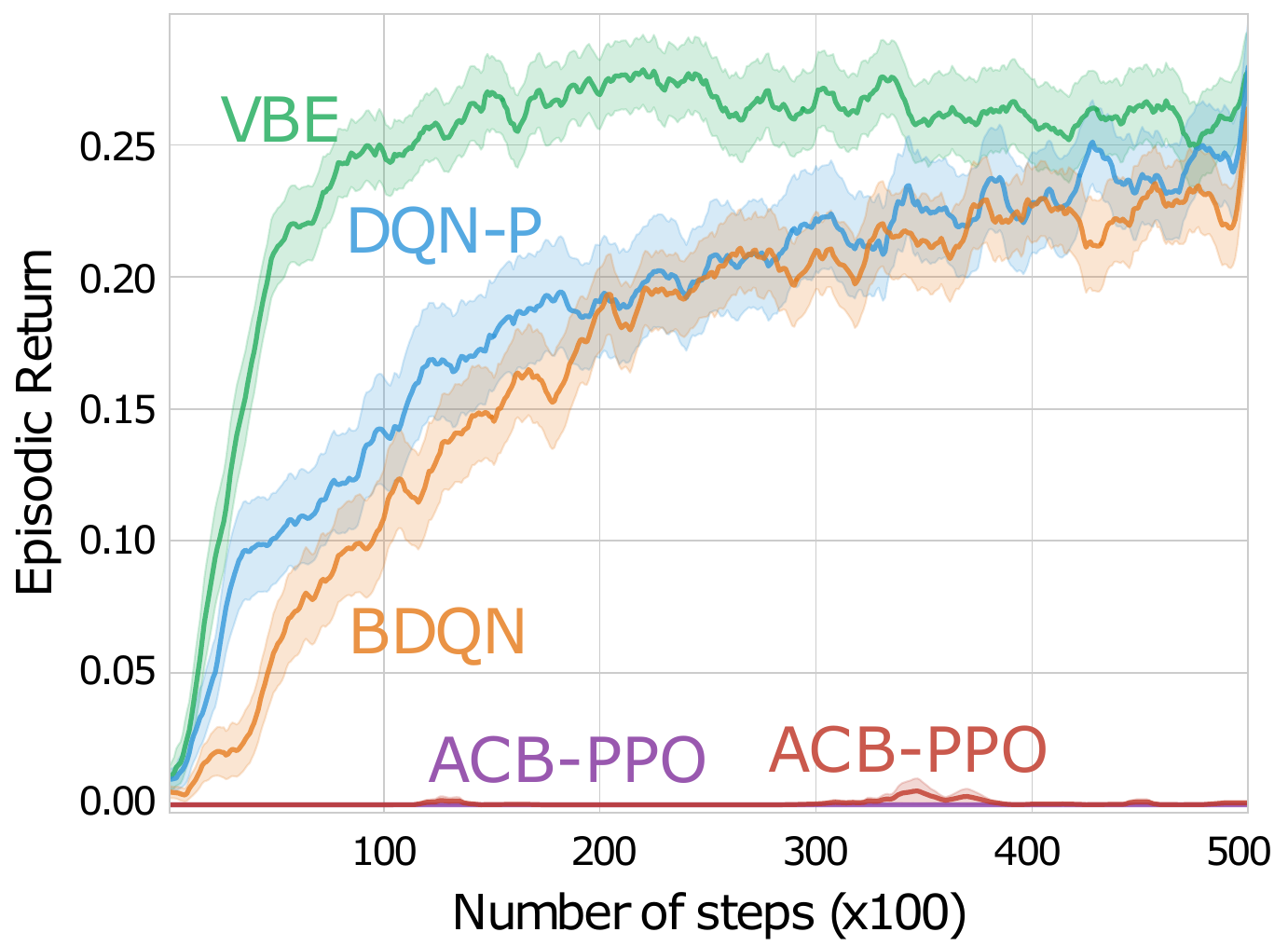}
        \caption{Mountain Car}
        \label{fig:mc_rb_nn_best}
    \end{subfigure}
    \hfill%\hfil
    \centering
    \begin{subfigure}{0.24\textwidth}
        \includegraphics[width=\hsize]{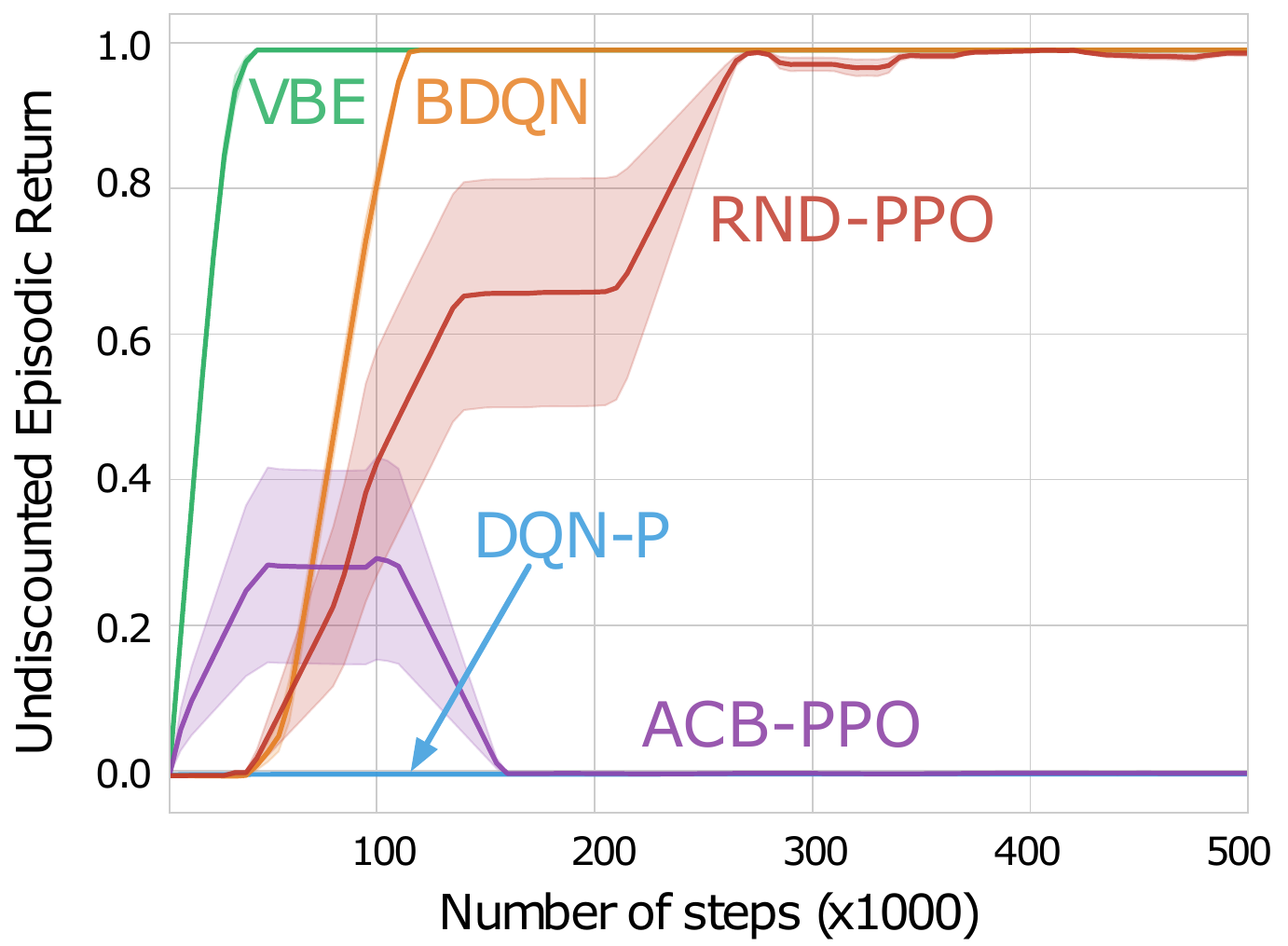}
        \caption{Deepsea}
        \label{fig:ds_rb_nn_best}
    \end{subfigure}
        \vspace{-0.1cm}
    \caption{Online performance of PPO-based variants of ACB and RND in River Swim, Puddle World, Mountain Car, and Deepsea. Higher on the y-axis is better. The x-axis denotes the number of interaction steps with the environment. The shaded region corresponds to standard errors.
    %The statistic visualized reflects online performance, where higher is better on the y-axis, and the x-axis denotes the number of interactions with the environment.
    }
    \label{fig:rb_nns_best}
\end{figure*}

\begin{table}[ht]
    \centering
    \begin{tabular}{ |c|c|c|c|c| }
        \hline
         & River Swim & Puddle World & Mountain Car & Deepsea\\
        \hline
        VBE   & $k=20, c=1.0$ & $k=1, c=10.0$ & $k=2, c=1.0$ &$k=20, c=1.0$\\
        ACB ($k=128$)   & $c=1.0$ & $c=3.0$ & $c=1.0$ &$c=10.0$\\
        RND   & $c=1.0$ & $c=10.0$ & $c=10.0$ &$c=1.0$\\
        \hline
    \end{tabular}
    \caption{Ensemble size $k$ and bonus scale $c$ for agents in Figure~\ref{fig:rb_nns_best}.}
    \label{tab:rb_nn_best_params}
\end{table}

\section{An alternate variant of VBE for Atari}
\label{sec:apdx_atari_vbe_vbe_cnn}
\begin{figure*}[!ht]
    \centering
    \begin{subfigure}{0.3\textwidth}
        \includegraphics[width=\hsize]
        {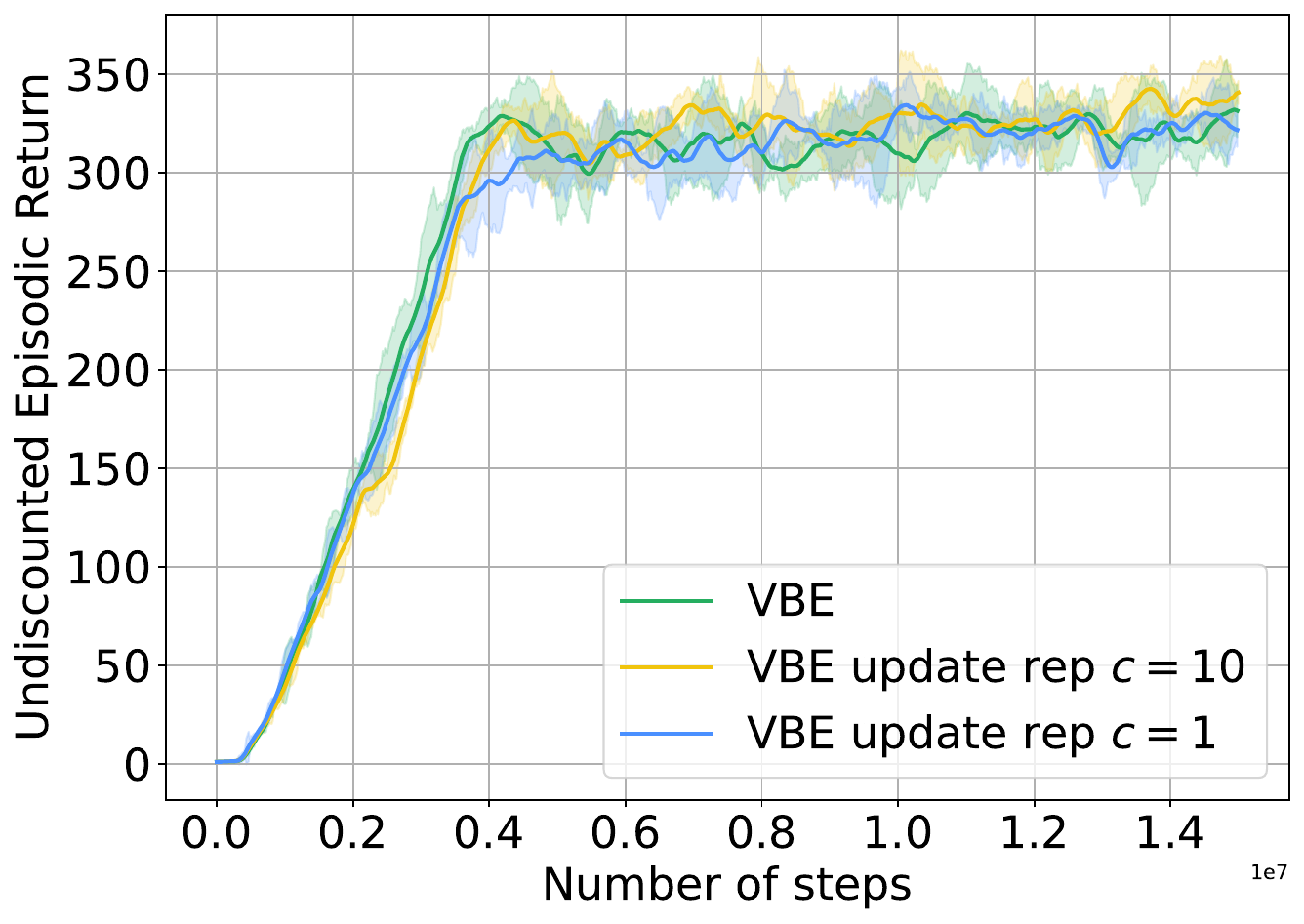}
        \caption{Breakout}
        \label{fig:vbe_cnn_breakout}
    \end{subfigure}
    \begin{subfigure}{0.3\textwidth}
        \includegraphics[width=\hsize]{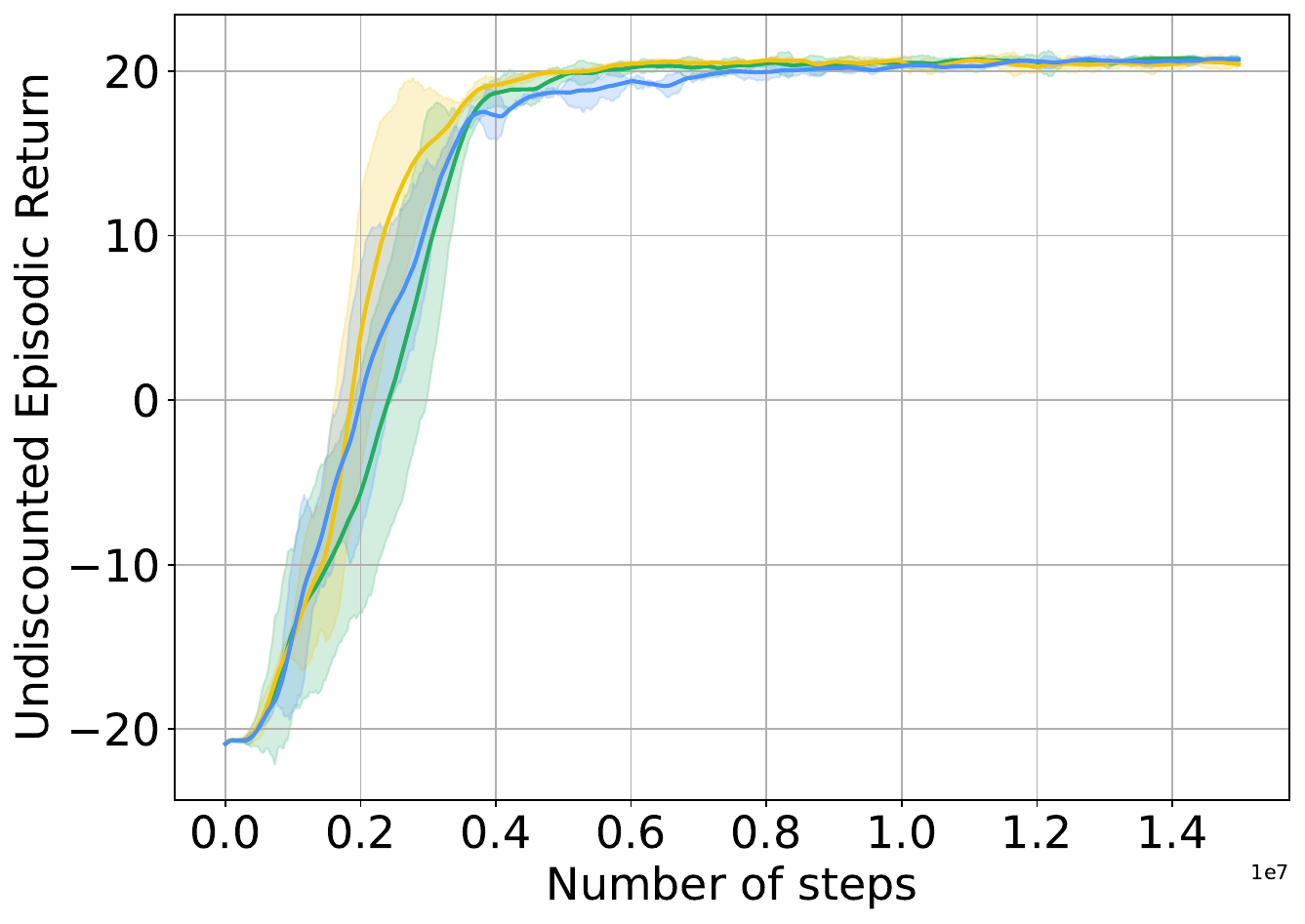}
        \caption{Pong}
        \label{fig:vbe_cnn_pong}
    \end{subfigure}
    \begin{subfigure}{0.3\textwidth}
        \includegraphics[width=\hsize]{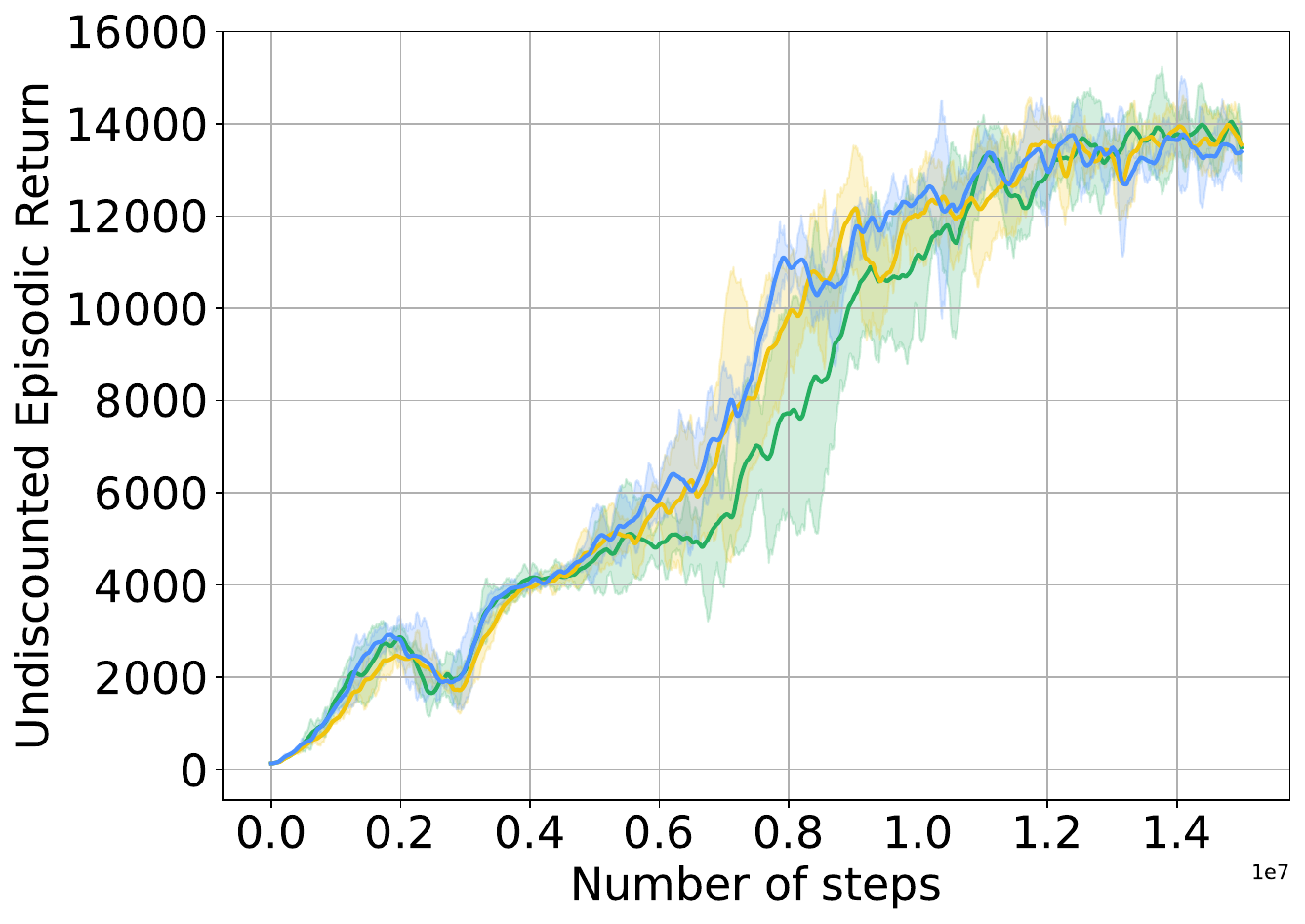}
        \caption{Q$^*$bert}
        \label{fig:vbe_cnn_Qbert}
    \end{subfigure}
    \begin{subfigure}{0.3\textwidth}
        \includegraphics[width=\hsize]{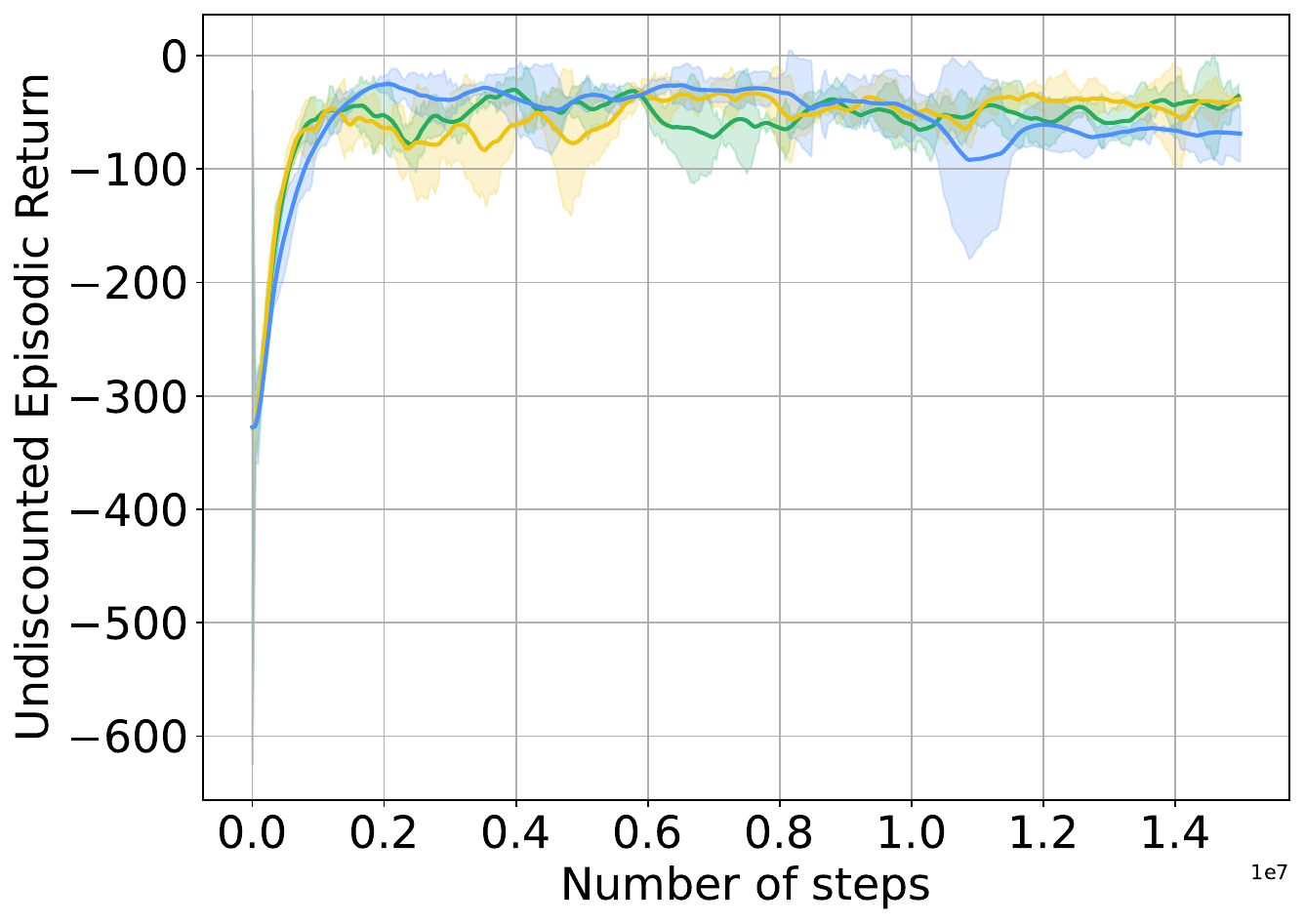}
        \caption{Pitfall}
        \label{fig:vbe_cnn_pitfall}
    \end{subfigure}
    \begin{subfigure}{0.3\textwidth}
        \includegraphics[width=\hsize]{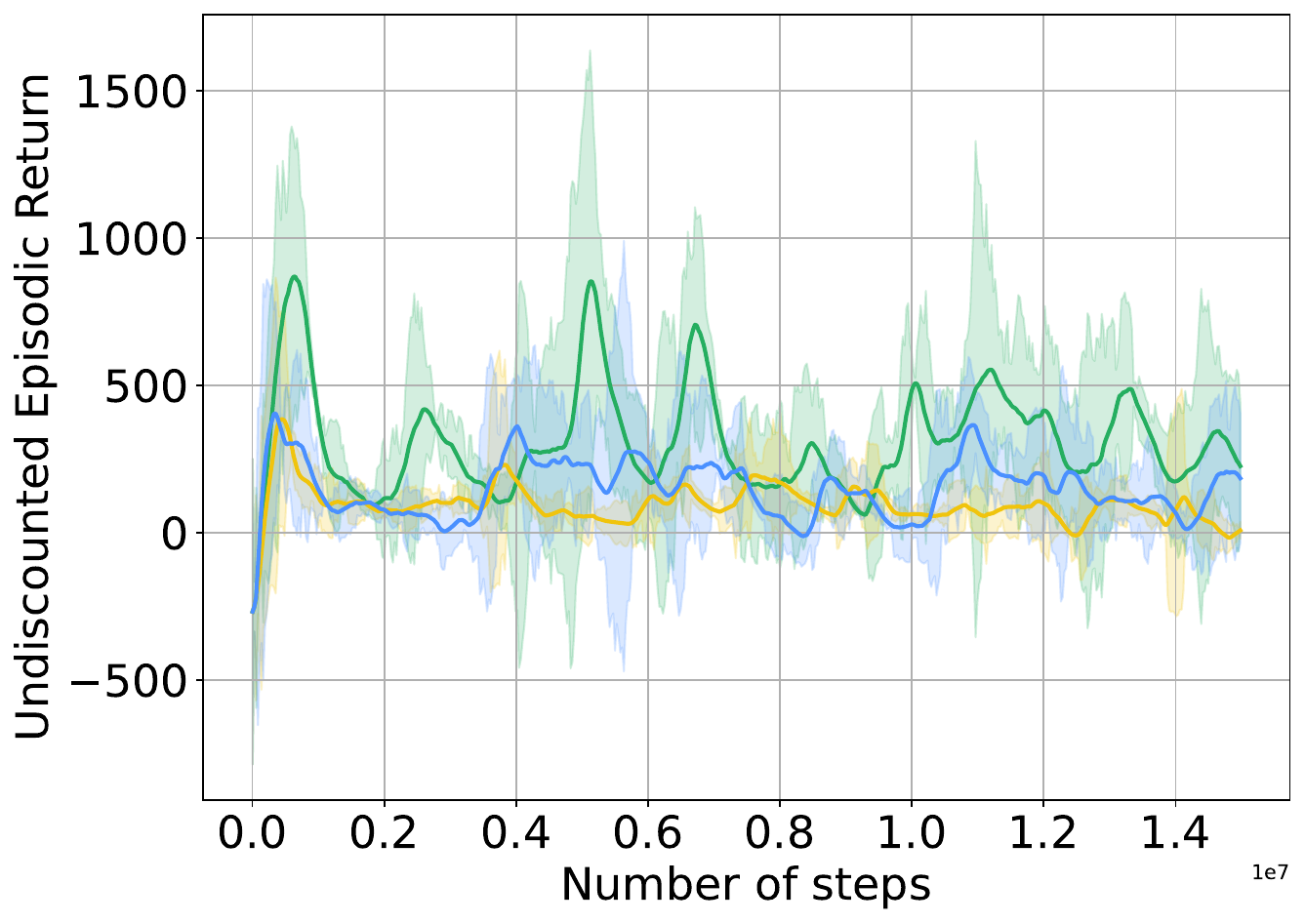}
        \caption{Private-Eye}
        \label{fig:vbe_cnn_privateeye}
    \end{subfigure}
    \begin{subfigure}{0.3\textwidth}
        \includegraphics[width=\hsize]{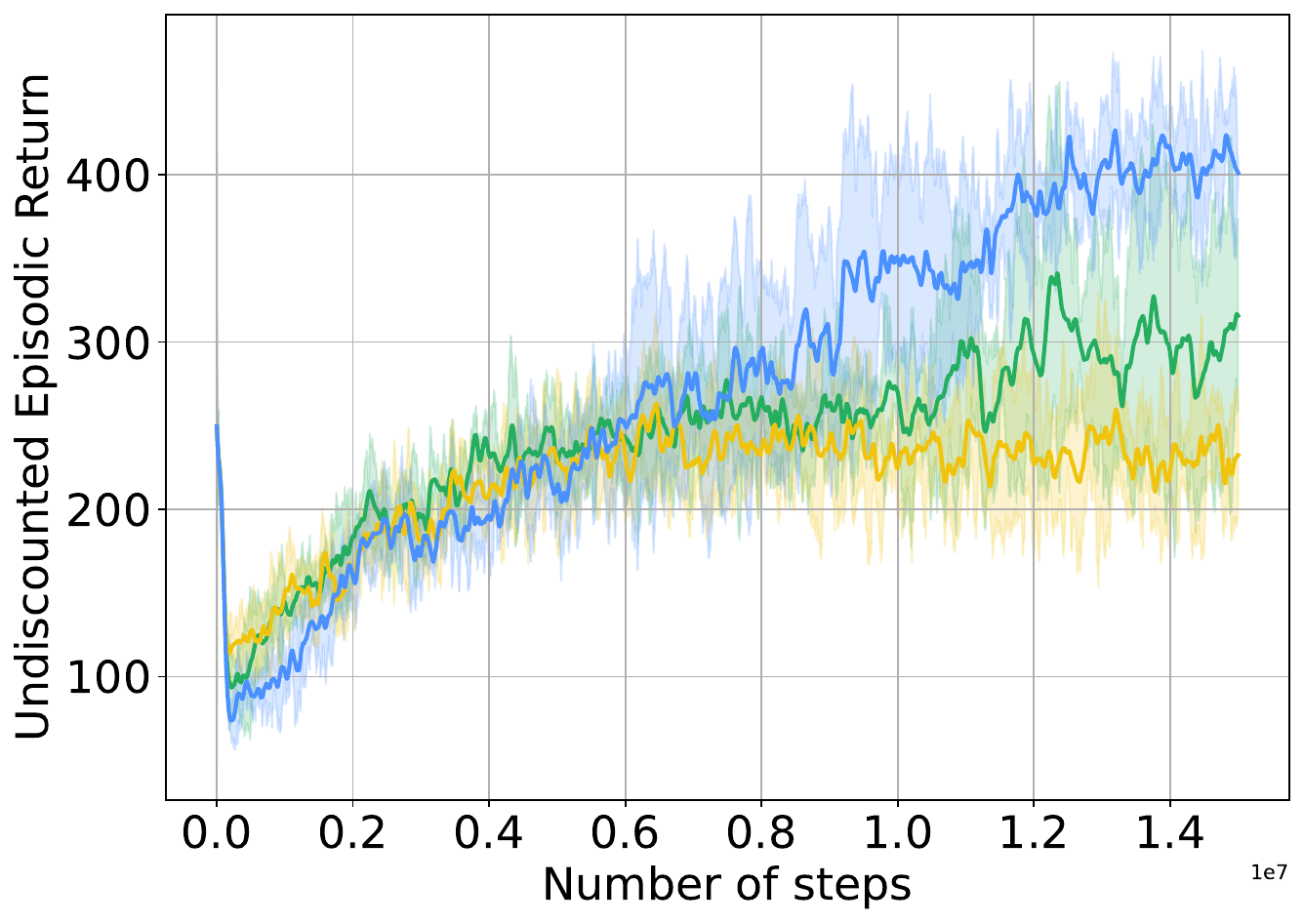}
        \caption{Gravitar}
        \label{fig:vbe_cnn_gravitar}
    \end{subfigure}    
    \caption{Comparing the online performance of VBE and two of its variants on six Atari environments. We do 3 runs for each agent for 15 Million steps.}
    \label{fig:vbe_cnn_best}
\end{figure*}

The VBE in Section \ref{sec:atari} uses a shared representation for RQFs and only updates the RQF head to reduce computational complexity. In this section, we test another variant of VBE that has a separate representation network for each RQF and updates the entire network. We also test this new variant with a smaller bonus-scale $\bscale$ = 1. The purpose of this experiment was to test whether reducing the representational ability of VBE by using a shared representation and not updating it can hinder its learning capabilities. However, we observe that the base variant of VBE can keep up with its more parameterized counterpart. In Breakout and Pong, the variant with learnable representation networks shows slight improvements compared to the base VBE. We also note that using a smaller $\bscale$ can affect the performance in hard exploration environments. In Pitfall, VBE with $\bscale$ = 1 performs well initially but then takes a dip after 10 million steps. Consistent with all other agents, VBE's variants are unable to learn on Private Eye, however, base VBE and the variant with $\bscale$ = 1 collects higher rewards more often. In Gravitar, VBE with $\bscale$ = 1 does exceptionally well compared to the other variants and matches the performance of RND in Figure \ref{fig:vbe_gravitar}. This experiment reveals that our design choices for using a shared representation for RQFs and only updating the RQF heads is valid and allows the agent to explore. It further reveals that more investigation is needed to understand the role of bonus-scale in these environments. With more compute resources, and search over the hyperparameters can potentially identify a variant of VBE that can perform even better on these environments. 
\section{Linear Function Approximation}
\label{sec:apdx_lfa_control}
In this section we test VBE and the baseline agents on the same four classic environments as in Section~\ref{sec:classic_control}, with tile-coded features and linear function approximation. We use the following tile-coding parameters --  River Swim :($tiles=4, tiling=32, features=128$), Puddle World: ($tiles=5, tiling=5, features=128$), and Mountain Car: ($tiles=4, tiling=16, features=512$). The results in Figure~\ref{fig:linear_best} are similar to their neural network counterpart results in Figure~\ref{fig:nns_best}, in that VBE does best or is competitive across all domains. RND and ACB perform similarly in all domains -- being competitive with VBE in Puddle World and Mountain Car, but failing in River Swim and Deepsea. BDQN outperforms VBE in Deepsea marginally. DQN-P is competitive with BDQN in all domains except Deepsea -- outperforming it in in Puddle World and Mountain Car. We also compare VBE, BDQN, and DQN-P to PPO-based variants of ACB and RND in Figure~\ref{fig:rb_l_best}. ACB-PPO's performance drops with respect to ACB in all domains. RND-PPO on the other hand improves in Deepsea to be competitive with BDQN and VBE, has a high rate of improvement later towards later learning in River Swim, while dropping in perfomance in both Puddle World and Mountain Car with respect to its value-based variant.

% In River Swim, RND does well and even surpasses BDQN in terms of performance. ACB, however, still fails on River Swim. In Puddle World RND is comparable towards the end of training, and ACB is much slower. DQN-P outperforms BDQN in Puddle World, and Mountain Car, whereas both ACB and RND fail in Mountain Car. In Deepsea we see that DQN-P and ACB fail to learn the optimal policy. RND learns relatively quickly but then fails to stick to the optimal policy and thus collects less reward per episode throughout. In Figure~\ref{fig:rb_l_best} we compare VBE with VB ACB and RND in the linear setting. In River Swim VB ACB does better than its PPO counter part in Figure~\ref{fig:rs_l_best}. VB ACB and RND perform comparable to VBE in Puddle World and Mountain Car. Both VB ACB and RND fail in Deepsea. 

\begin{figure*}[th]
    \centering
    \begin{subfigure}{0.24\textwidth}
        \includegraphics[width=\hsize]{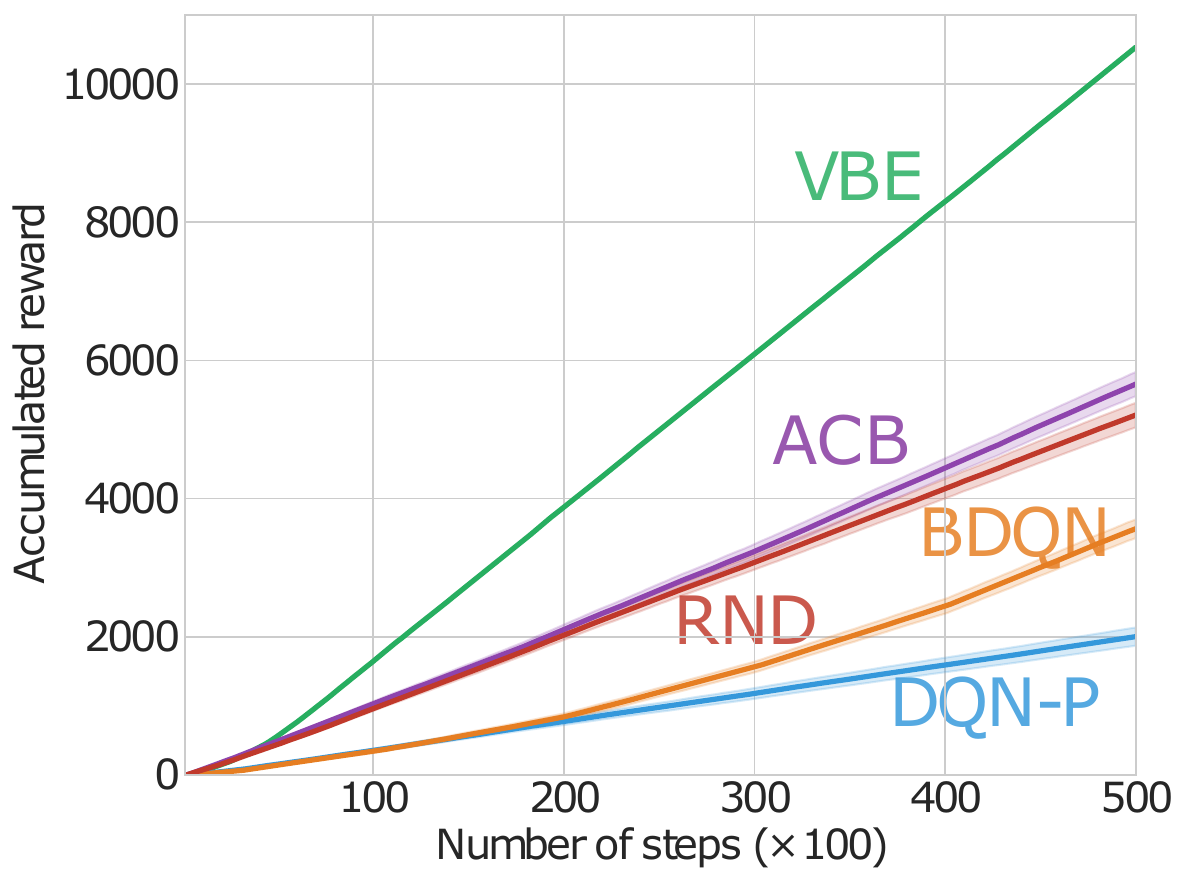}
        \caption{River Swim}
        \label{fig:rs_l_best}
    \end{subfigure}
    \hfill%\hfil
    \centering
    \begin{subfigure}{0.24\textwidth}
        \includegraphics[width=\hsize]{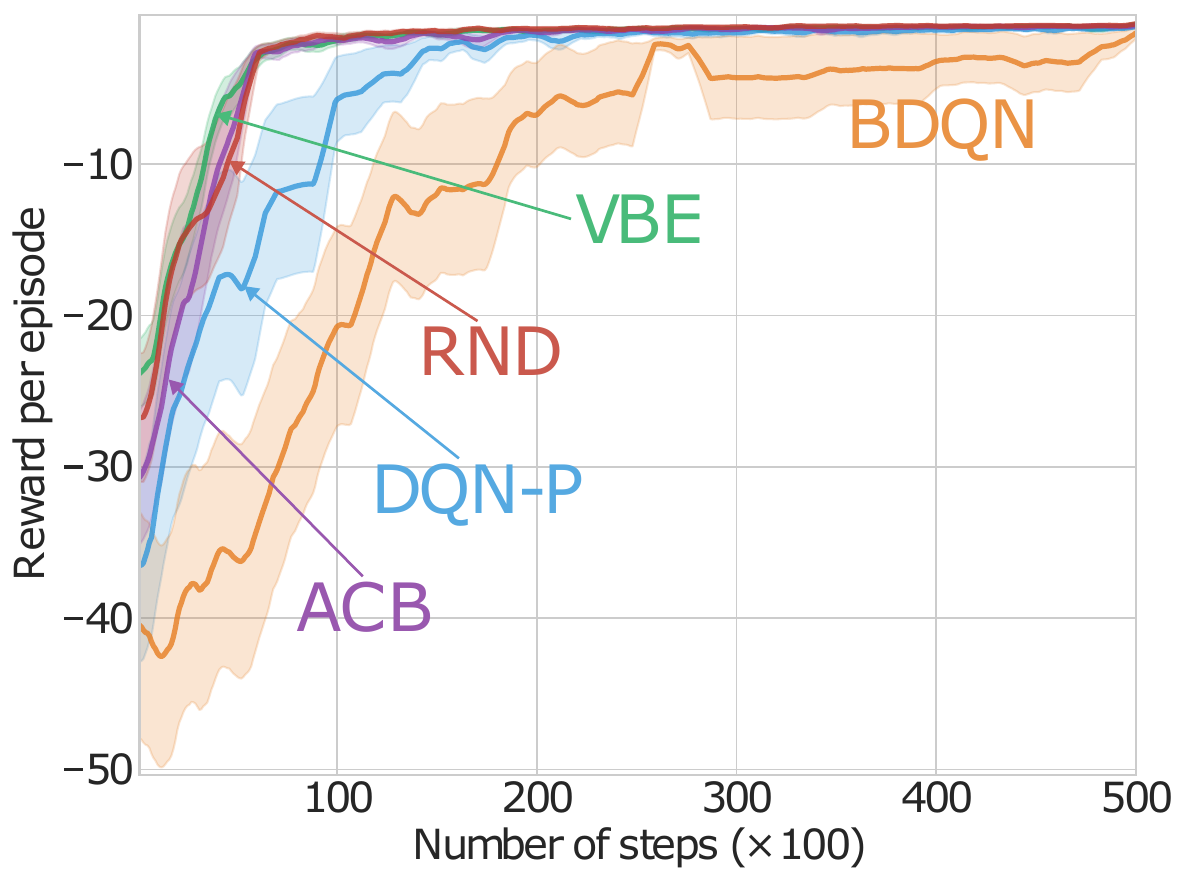}
        \caption{Puddle World}
        \label{fig:pw_l_best}
    \end{subfigure}
    \hfill%\hfil
    \centering
    \begin{subfigure}{0.24\textwidth}
        \includegraphics[width=\hsize]{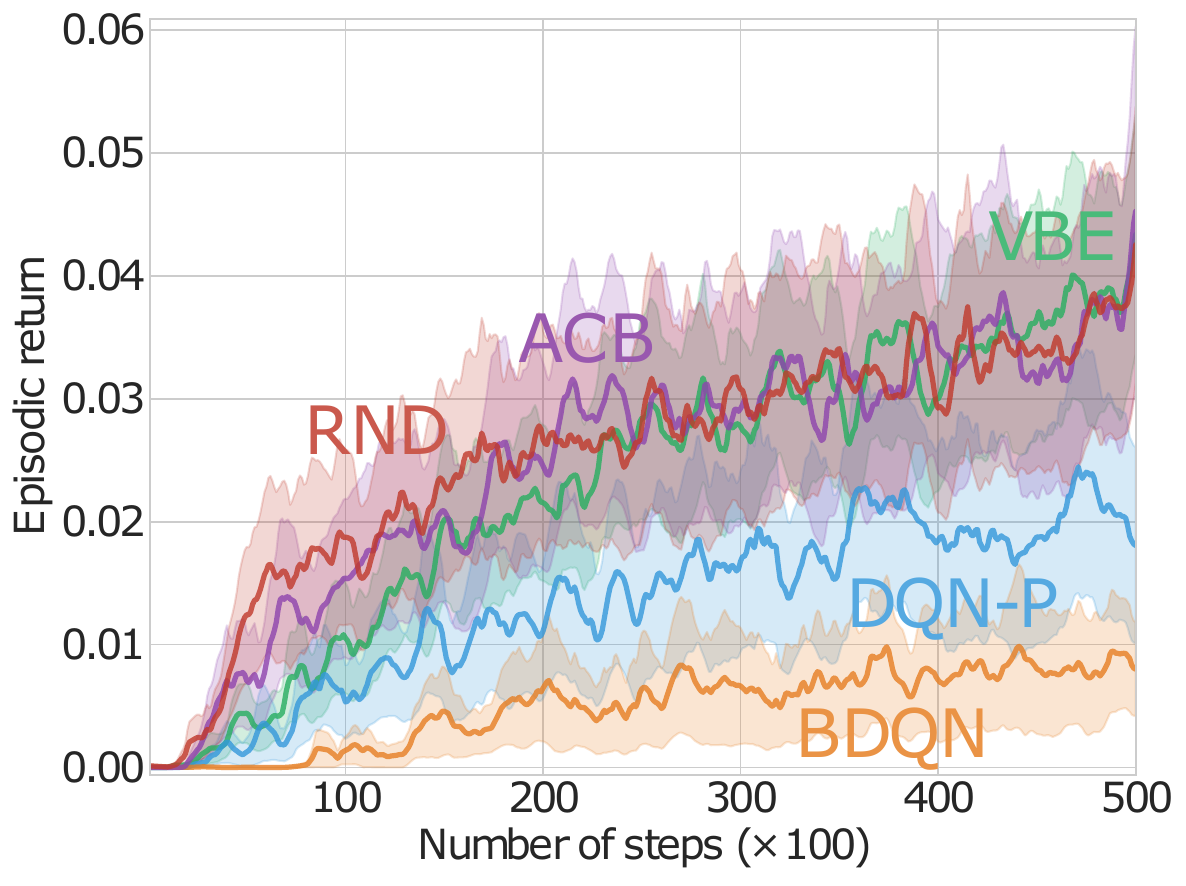}
        \caption{Mountain Car}
        \label{fig:mc_l_best}
    \end{subfigure}
    \hfill%\hfil
    \centering
    \begin{subfigure}{0.24\textwidth}
        \includegraphics[width=\hsize]{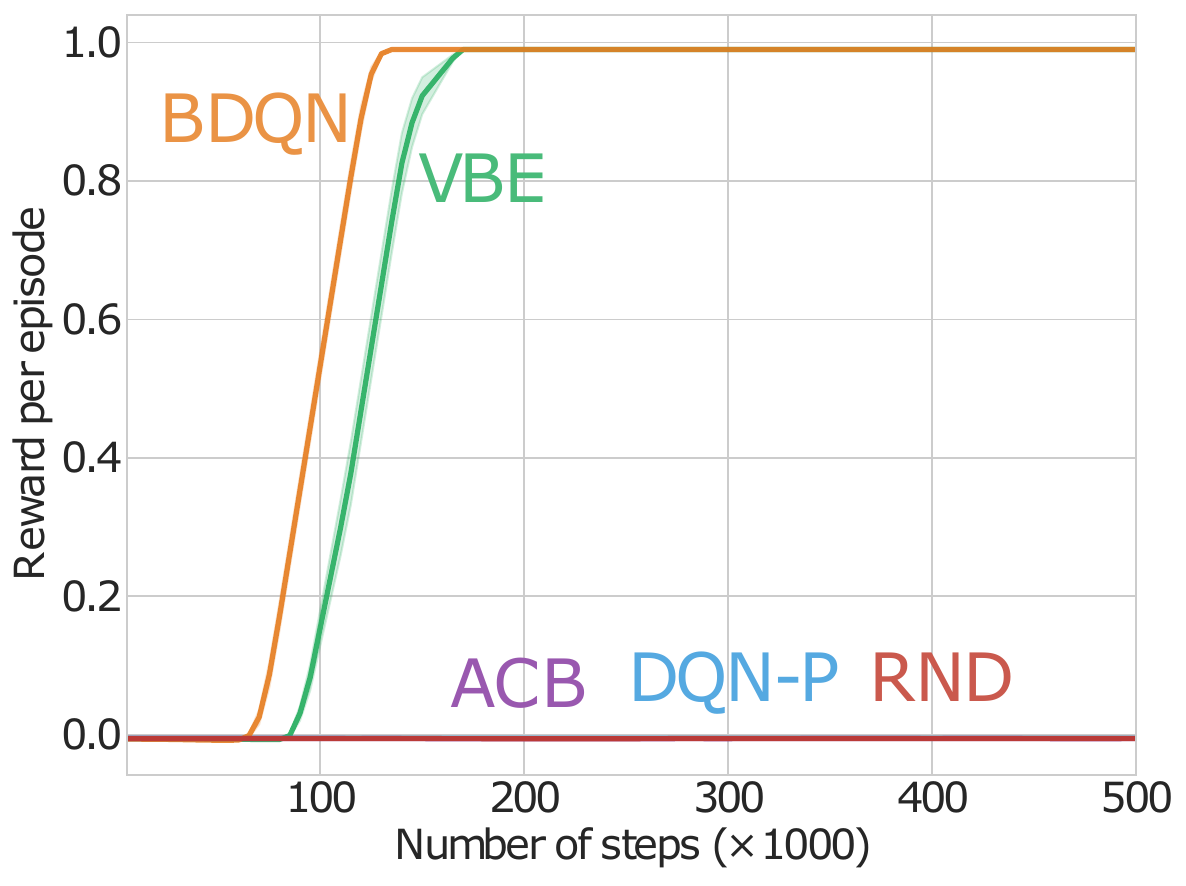}
        \caption{Deepsea}
        \label{fig:ds_l_best}
    \end{subfigure}
    \caption{Online performance in River Swim, Puddle World, Mountain Car, and Deepsea, with tile-coded features and linear function approximation. Higher on the y-axis is better. The x-axis denotes the number of interaction steps with the environment. The shaded region corresponds to standard errors.}
    \label{fig:linear_best}
\end{figure*}
\begin{figure*}[ht]
    \centering
    \begin{subfigure}{0.24\textwidth}
        \includegraphics[width=\hsize]{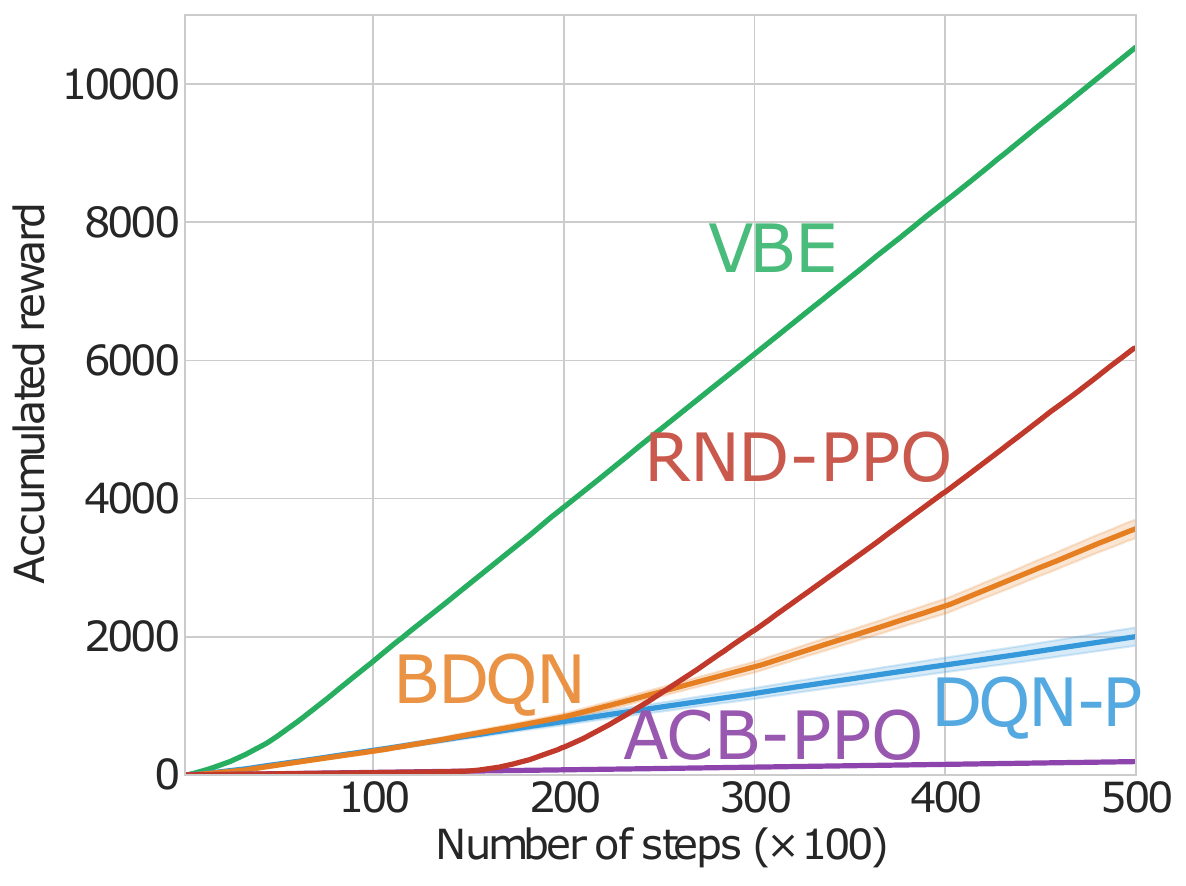}
        \caption{River Swim}
        \label{fig:rs_rb_l_best}
    \end{subfigure}
    \hfill%\hfil
    \centering
    \begin{subfigure}{0.24\textwidth}
        \includegraphics[width=\hsize]{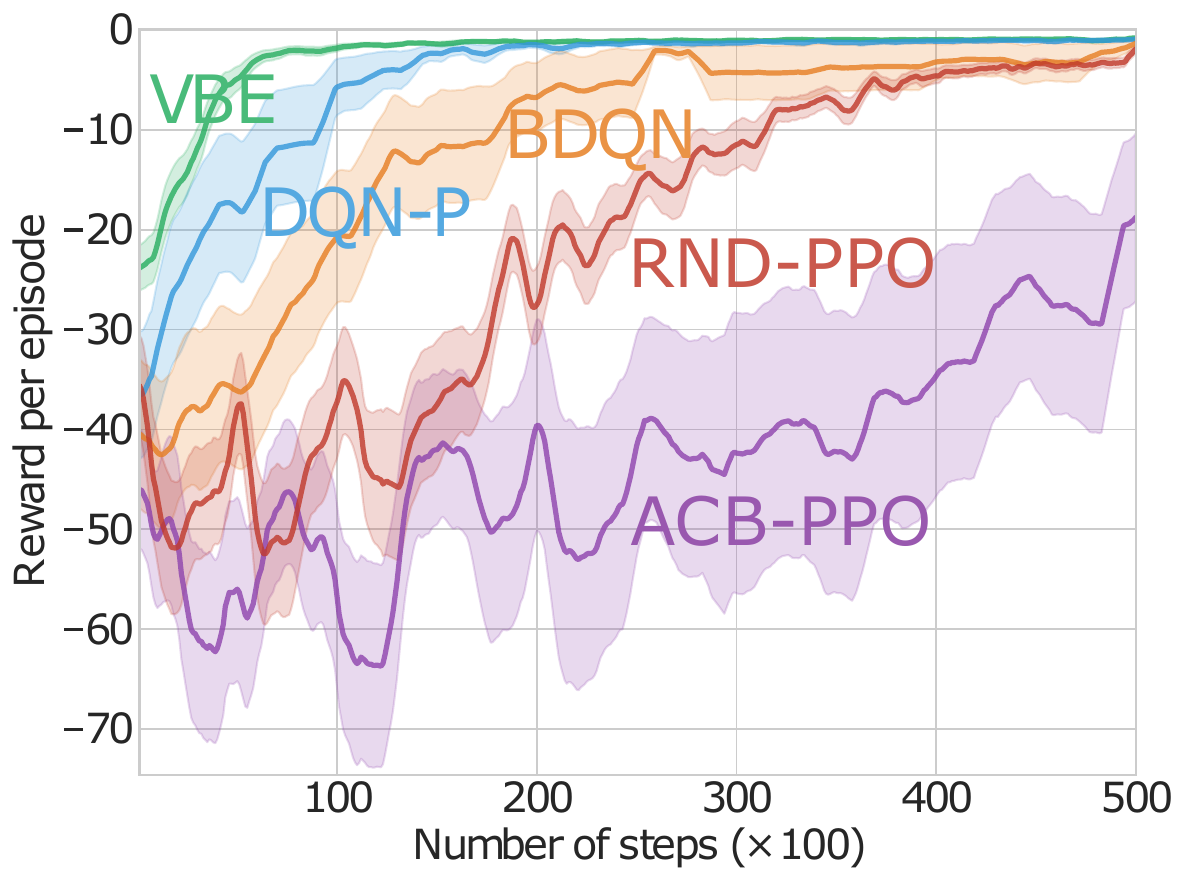}
        \caption{Puddle World}
        \label{fig:pw_rb_l_best}
    \end{subfigure}
    \hfill%\hfil
    \centering
    \begin{subfigure}{0.24\textwidth}
        \includegraphics[width=\hsize]{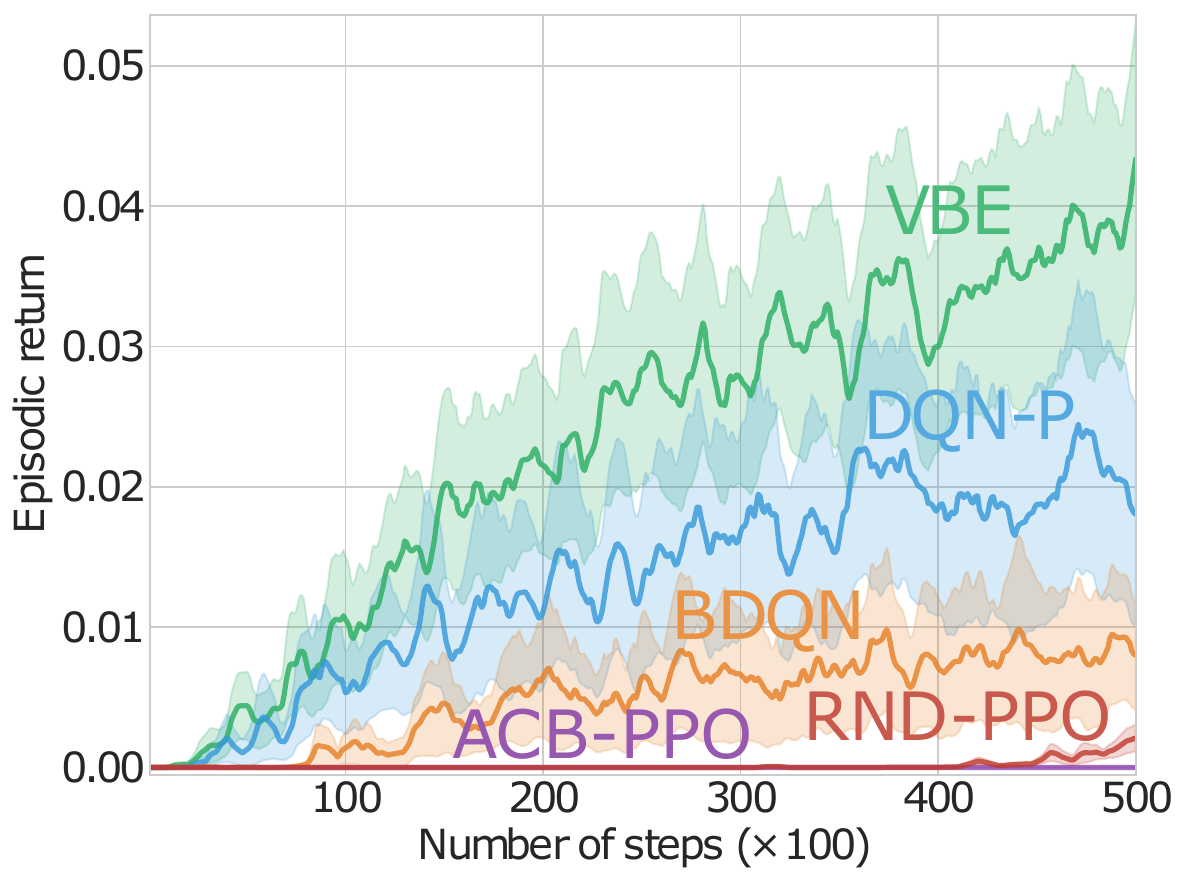}
        \caption{Mountain Car}
        \label{fig:mc_rb_l_best}
    \end{subfigure}
    \hfill%\hfil
    \centering
    \begin{subfigure}{0.24\textwidth}
        \includegraphics[width=\hsize]{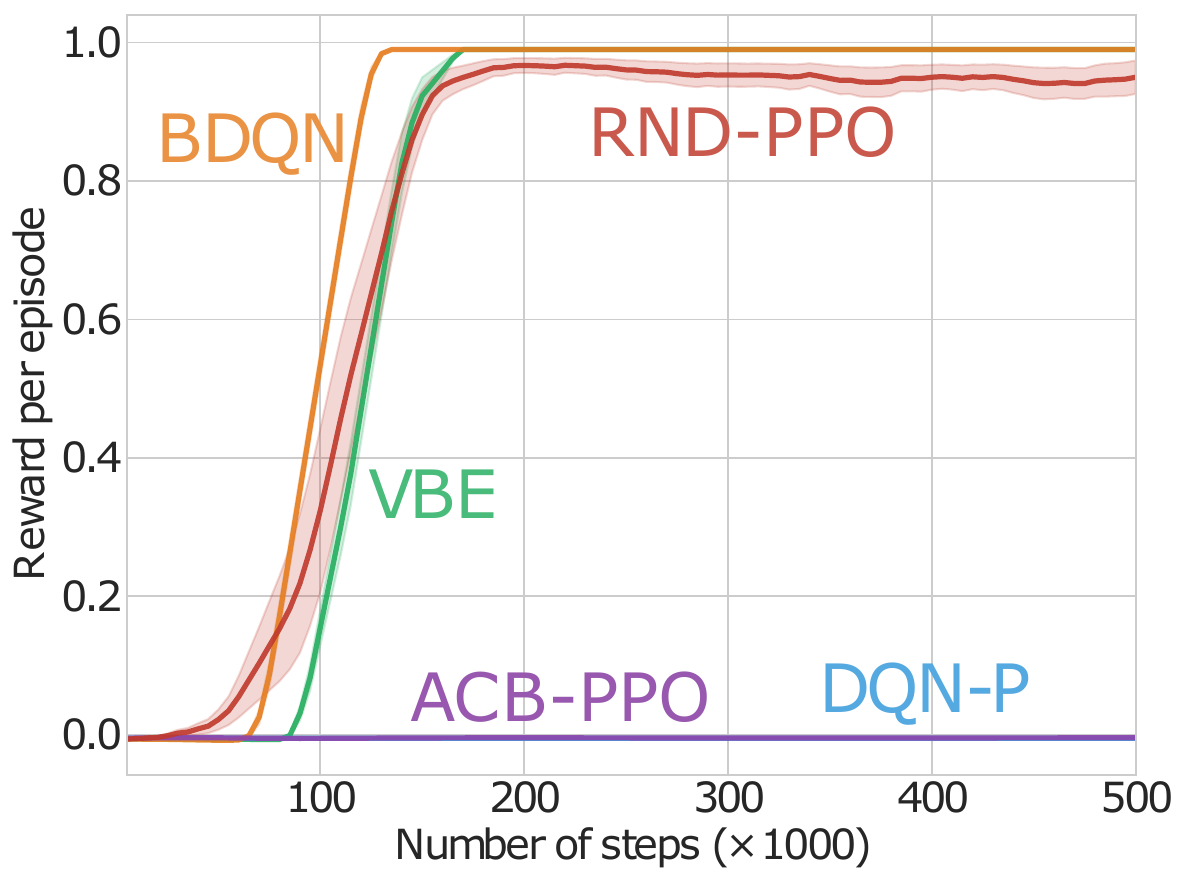}
        \caption{Deepsea}
        \label{fig:ds_rb_l_best}
    \end{subfigure}
    \caption{Online performance of PPO-based variants of ACB and RND in River Swim, Puddle World, Mountain Car, and Deepsea, with tile-coded features and linear function approximation. Higher on the y-axis is better. The x-axis denotes the number of interaction steps with the environment. The shaded region corresponds to standard errors.}
    \label{fig:rb_l_best}
\end{figure*}
% \begin{table}[ht]
%     \centering
%     \begin{tabular}{|c|c|c|c|}
%         \hline
%          & tiles & tiling & features\\
%          \hline
%         River Swim & 4 & 32 & 128\\
%         \hline
%         Puddle World & 5 & 5 & 128\\
%         \hline
%         Mountain Car & 4 & 16 & 512\\
%         \hline
%     \end{tabular}
%     \caption{Caption}
%     \label{tab:linear_best_tc_params}
% \end{table}

\section{Parameter Sensitivity of VBE: Ensemble Size $\times$ Bonus Scale}
\label{es_x_bs}
In this section we show the effect that bonus scales and ensemble sizes have on the performance of the VBE in each of the four classic control environments. In Figure~\ref{fig:bs_es_nn} we show the sensitivity of VBE used in Section~\ref{sec:classic_control} to its two parameters -- ensemble size and bonus scale --- in each environment. For River Swim we see that the performance improves as the bonus scale and the ensemble size is increased. This makes sense as River Swim is a hard exploration environment and requires more aggressive exploration. In Puddle World and Mountain Car, we observe that increasing the bonus scale and the ensemble size harms the performance, since they do not require too much exploration. For Deepsea we only test a bonus scale of 1 with different ensemble sizes on different grid sizes. We can see that only an ensemble size of 20 works well on all grid sizes. Figure~\ref{fig:bs_es_l} shows the parameter sensitivity results for the linear function approximation case. We observe a very similar pattern to its neural network counterpart across all the environments. 
\begin{figure*}[ht]
    \centering
    \begin{subfigure}{0.24\textwidth}
        \includegraphics[width=\hsize]{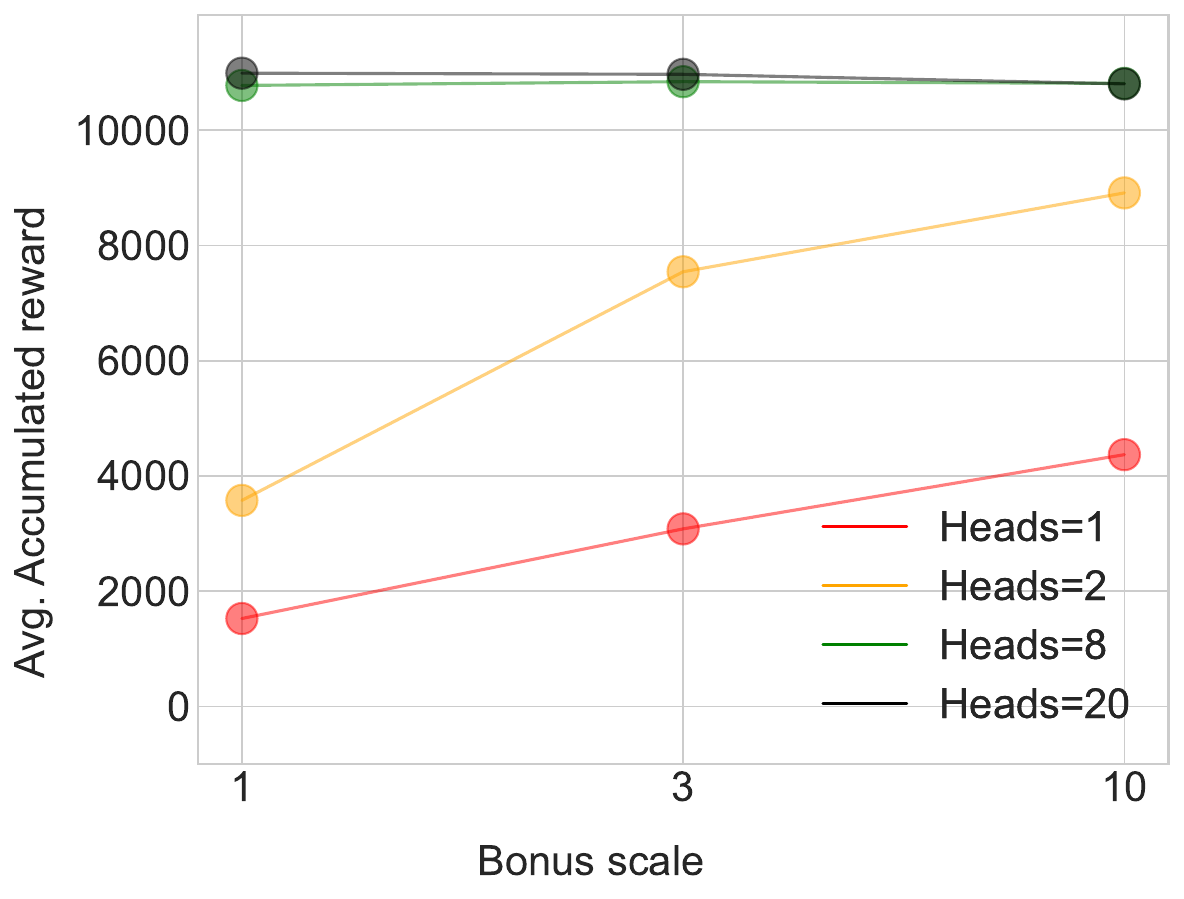}
        \caption{River Swim}
        \label{fig:rs_bs_es_nn}
    \end{subfigure}
    \hfill%\hfil
    \centering
    \begin{subfigure}{0.24\textwidth}
        \includegraphics[width=\hsize]{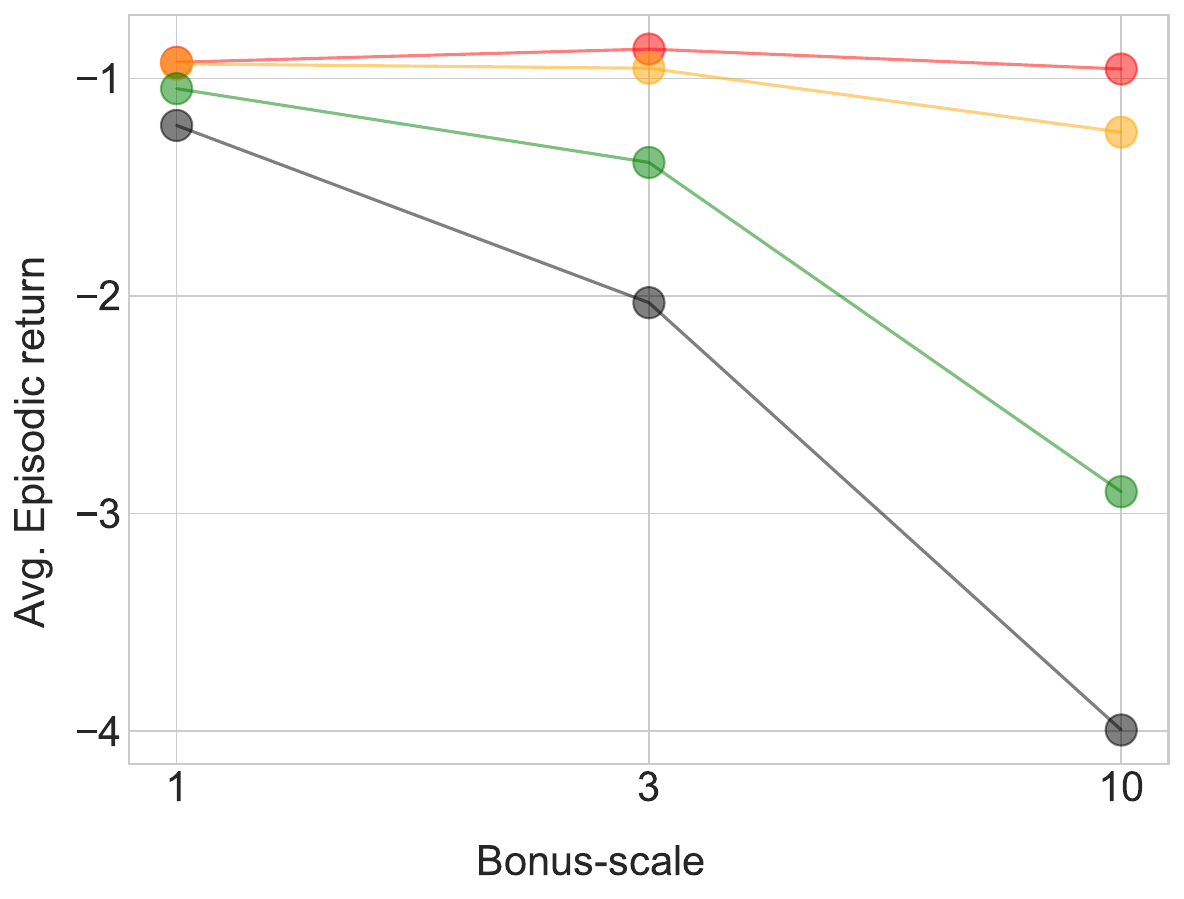}
        \caption{Puddle World}
        \label{fig:pw_bs_es_nn}
    \end{subfigure}
    \hfill%\hfil
    \centering
    \begin{subfigure}{0.24\textwidth}
        \includegraphics[width=\hsize]{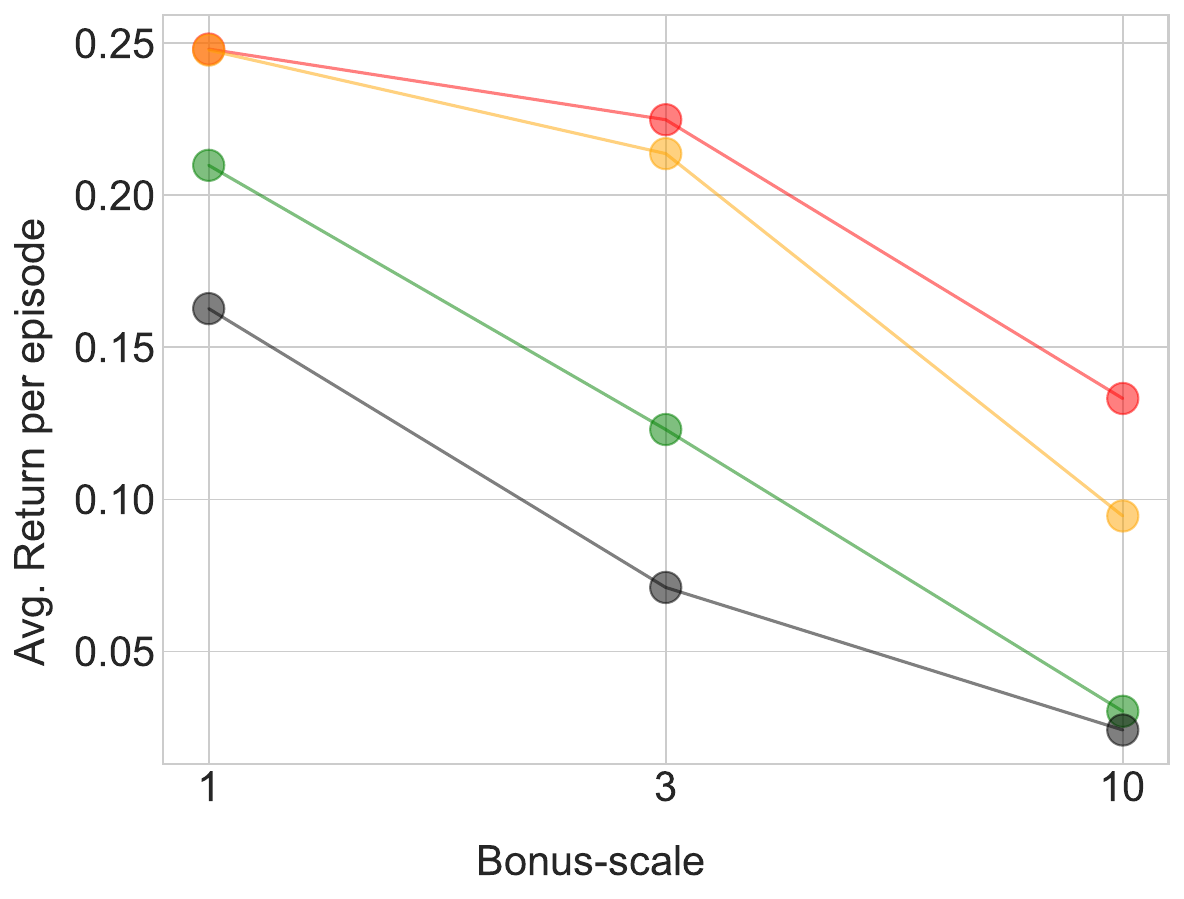}
        \caption{Mountain Car}
        \label{fig:mc_bs_es_nn}
    \end{subfigure}
    \hfill%\hfil
    \centering
    \begin{subfigure}{0.24\textwidth}
        \includegraphics[width=\hsize]{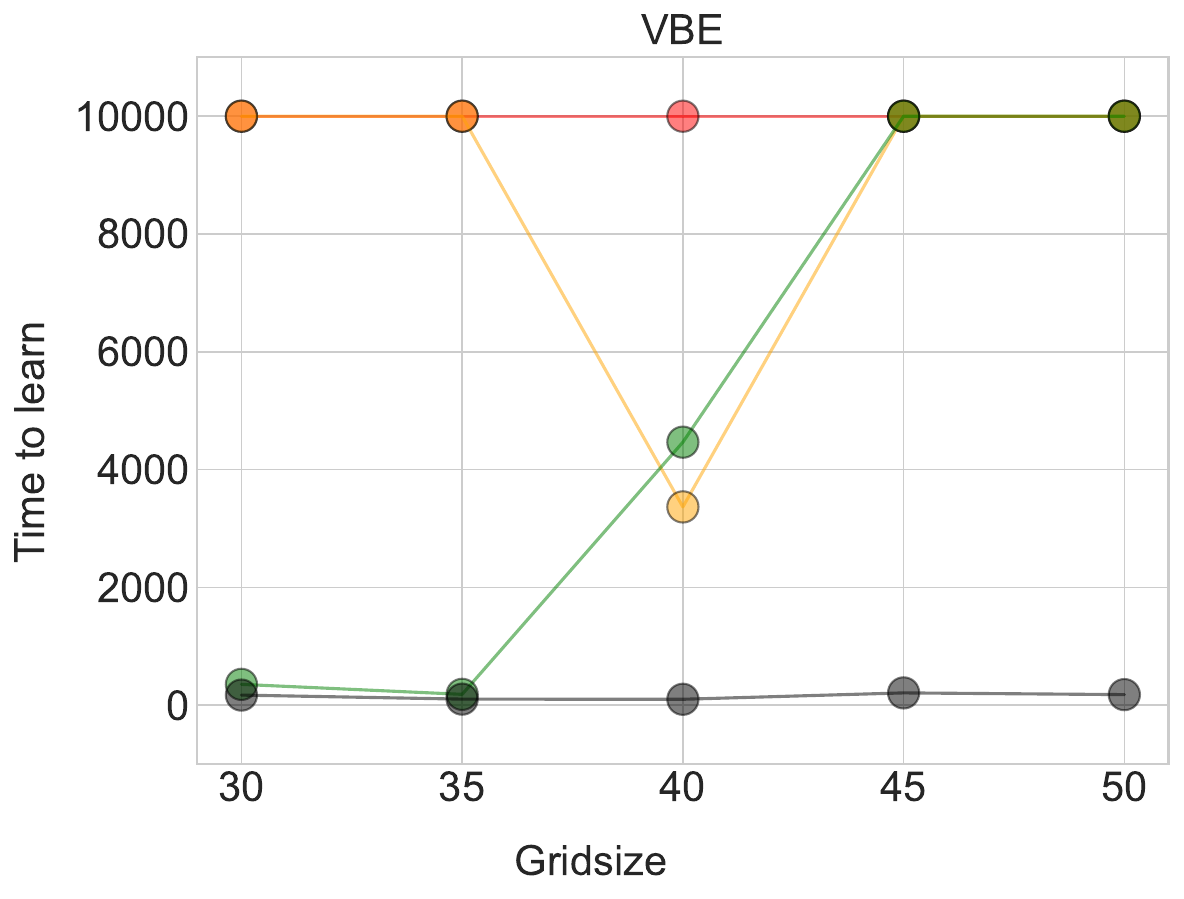}
        \caption{Deepsea}
        \label{fig:ds_bs_es_nn}
    \end{subfigure}
    \caption{Shows the effect of different bonus scales and ensemble sizes across the classic control environments. For Deepsea, we only use a bonus scale of 1 and test different ensemble sizes.}
    \label{fig:bs_es_nn}
\end{figure*}
\begin{figure*}[ht]
    \centering
    \begin{subfigure}{0.24\textwidth}
        \includegraphics[width=\hsize]{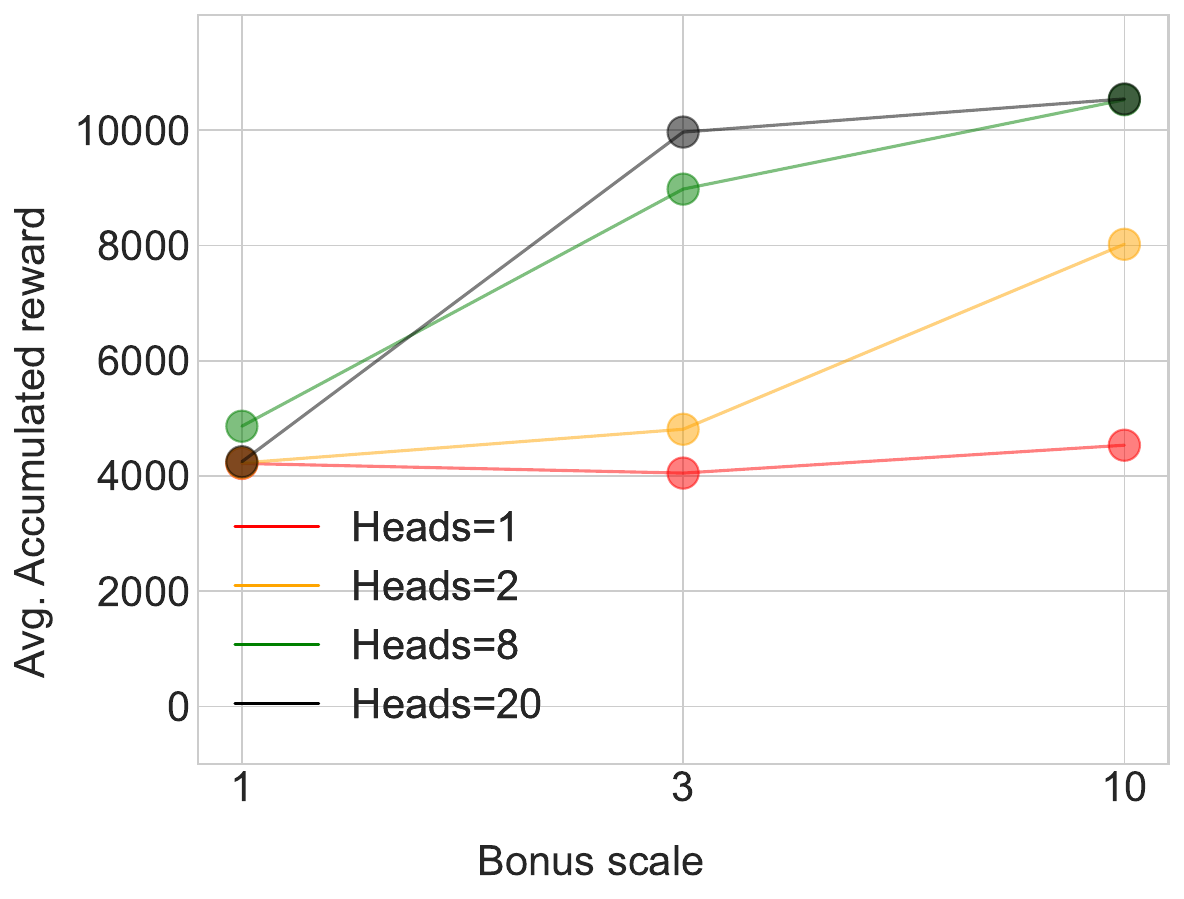}
        \caption{River Swim}
        \label{fig:rs_bs_es_l}
    \end{subfigure}
    \hfill%\hfil
    \centering
    \begin{subfigure}{0.24\textwidth}
        \includegraphics[width=\hsize]{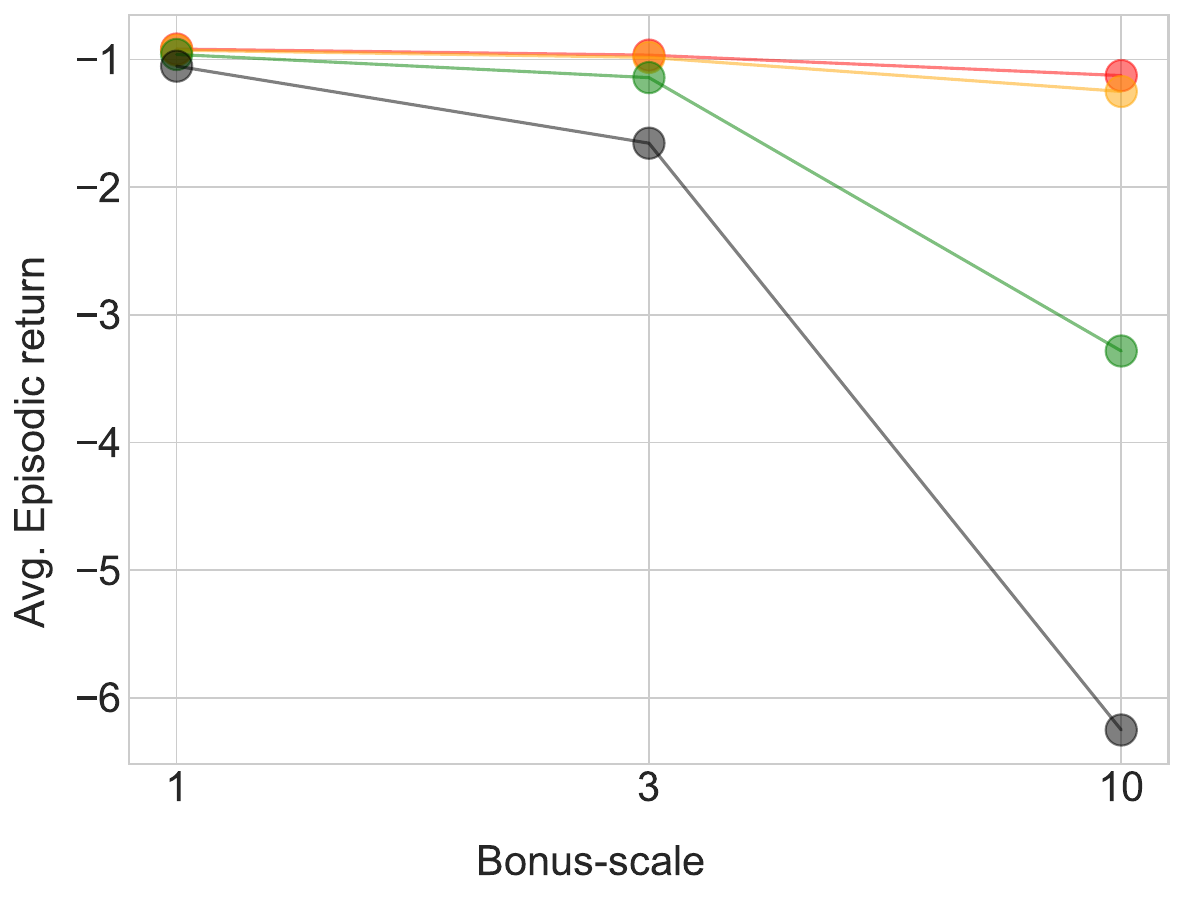}
        \caption{Puddle World}
        \label{fig:pw_bs_es_l}
    \end{subfigure}
    \hfill%\hfil
    \centering
    \begin{subfigure}{0.24\textwidth}
        \includegraphics[width=\hsize]{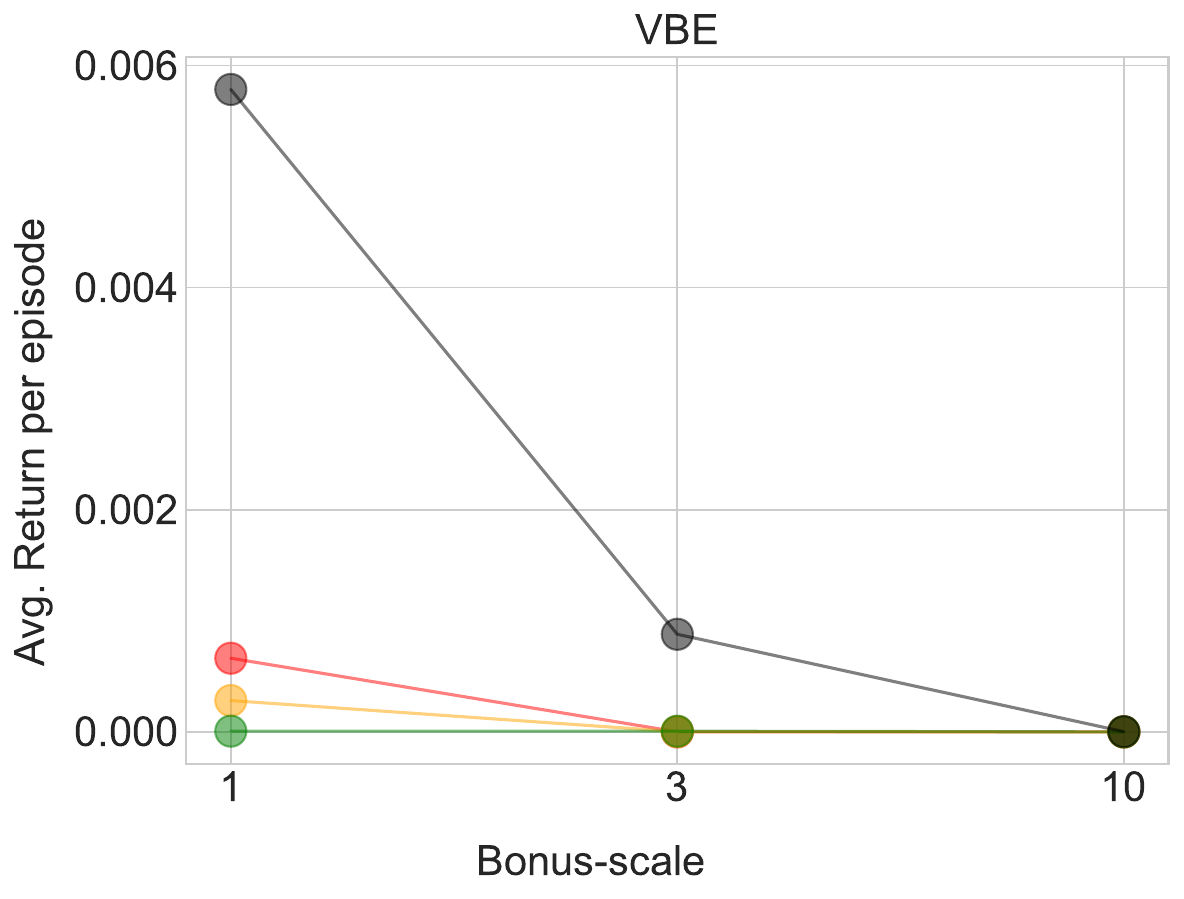}
        \caption{Mountain Car}
        \label{fig:mc_bs_es_l}
    \end{subfigure}
    \hfill%\hfil
    \centering
    \begin{subfigure}{0.24\textwidth}
        \includegraphics[width=\hsize]{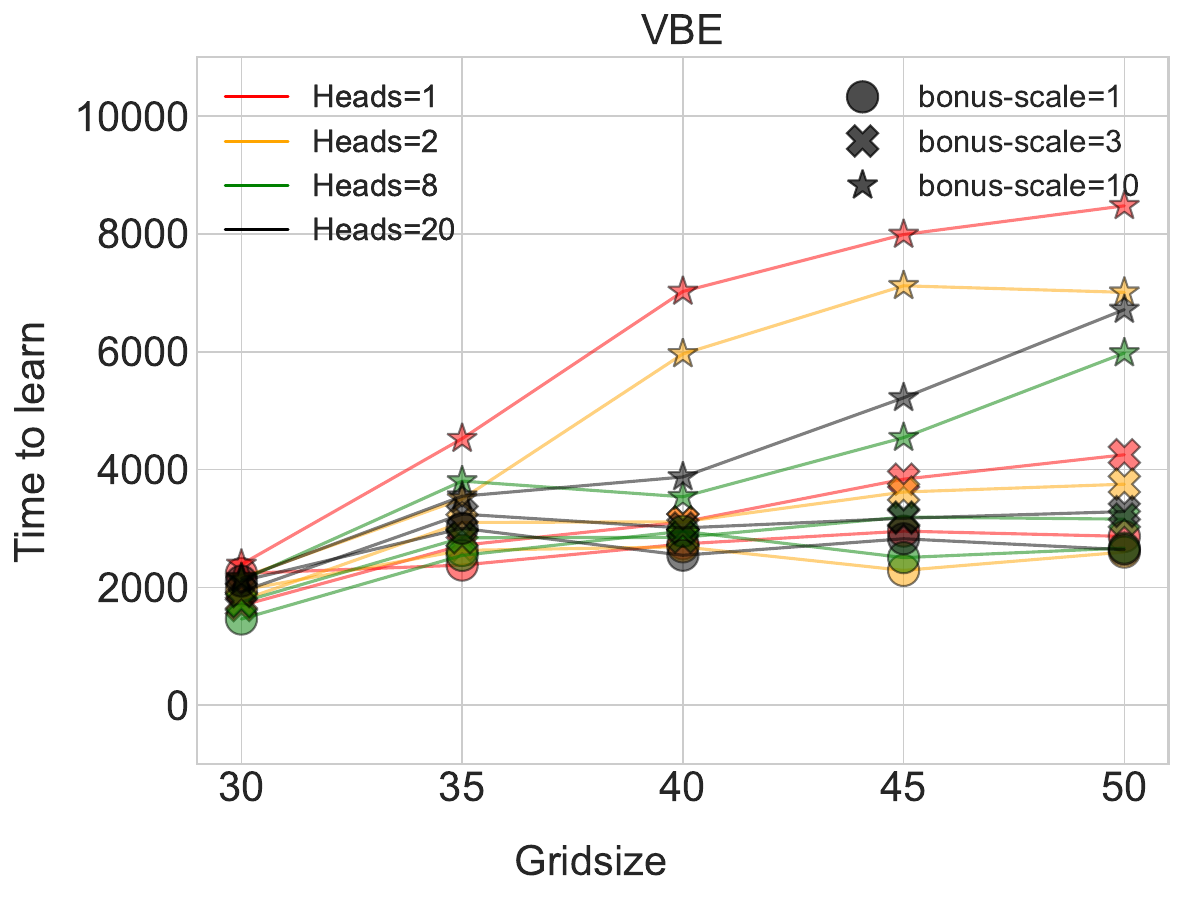}
        \caption{Deepsea}
        \label{fig:ds_bs_es_l}
    \end{subfigure}
    \caption{Shows the effect of different bonus scales and ensemble sizes across the classic control environments. These results correspond to the linear function approximation case.}
    \label{fig:bs_es_l}
\end{figure*}

\section{Target Policy Experiments}
\label{sp_tp_exp}

In this framework where we have access to two forms of value functions -- the action-values, and their corresponding bonuses -- there are two fundamental learning updates that we can employ -- on-policy, or off-policy. As we are in the control setting, we choose the off-policy q-learning update that tries to estimate $q^*$ directly in VBE. In this section we explore the impact of this choice by comparing the two updates: (1) on-policy updates where the target policy is the same as the behaviour policy, and (2) off-policy updates, which is the one employed by VBE, where the target policy is different from the behaviour policy. The former uses an optimistic target policy that maximizes over $q(s,.) + b(s,.)$, whereas the latter uses a greedy  target policy that maximizes over $q(s,.)$ alone.

\begin{figure*}[t]
    \includegraphics[width=\hsize]{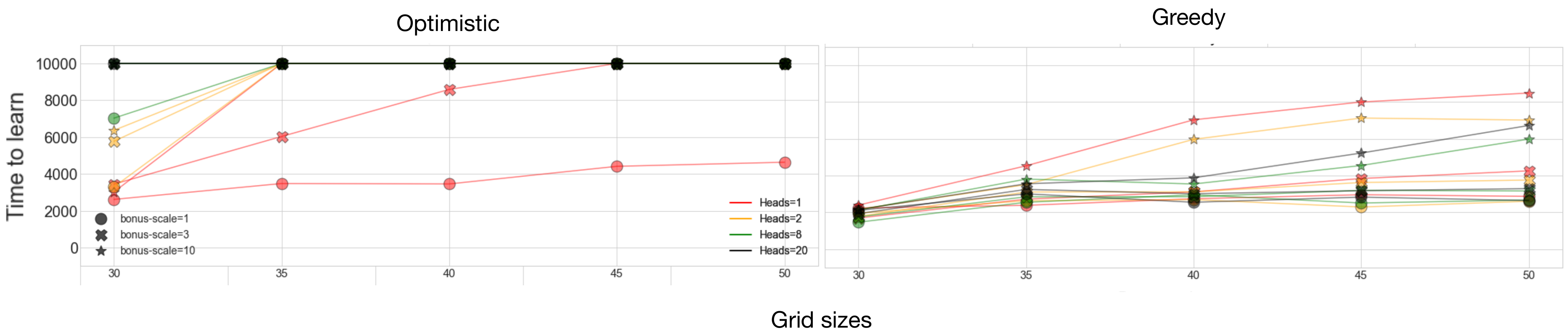}
    \caption{Comparing Optimistic target policy (On-policy) with Greedy target policy (Off-policy) on different grid sizes of Deepsea.}
    \label{fig:tp_best}
\end{figure*}

In Figure~\ref{fig:tp_best} we show the performance of the agents resulting from the two different updates, run on different grid sizes of Deepsea, with different ensemble size represented by the different colors, and different bonus scales represented by different shapes. The agents use linear function approximation for learning the action-value function and RQFs. We can see that the greedy agent generally performs better than the optimistic agent, which makes sense as the optimistic target policy can cause more exploration. The greedy agent performs well even with multiple RQFs, whereas optimistic agent fails to learn the optimal policy in this fixed budget of steps when the number of RQFs are increased. \begin{figure*}[t]
    \vspace{-0.3cm}
    \centering
    \begin{subfigure}{0.49\textwidth}
        \includegraphics[width=\hsize]{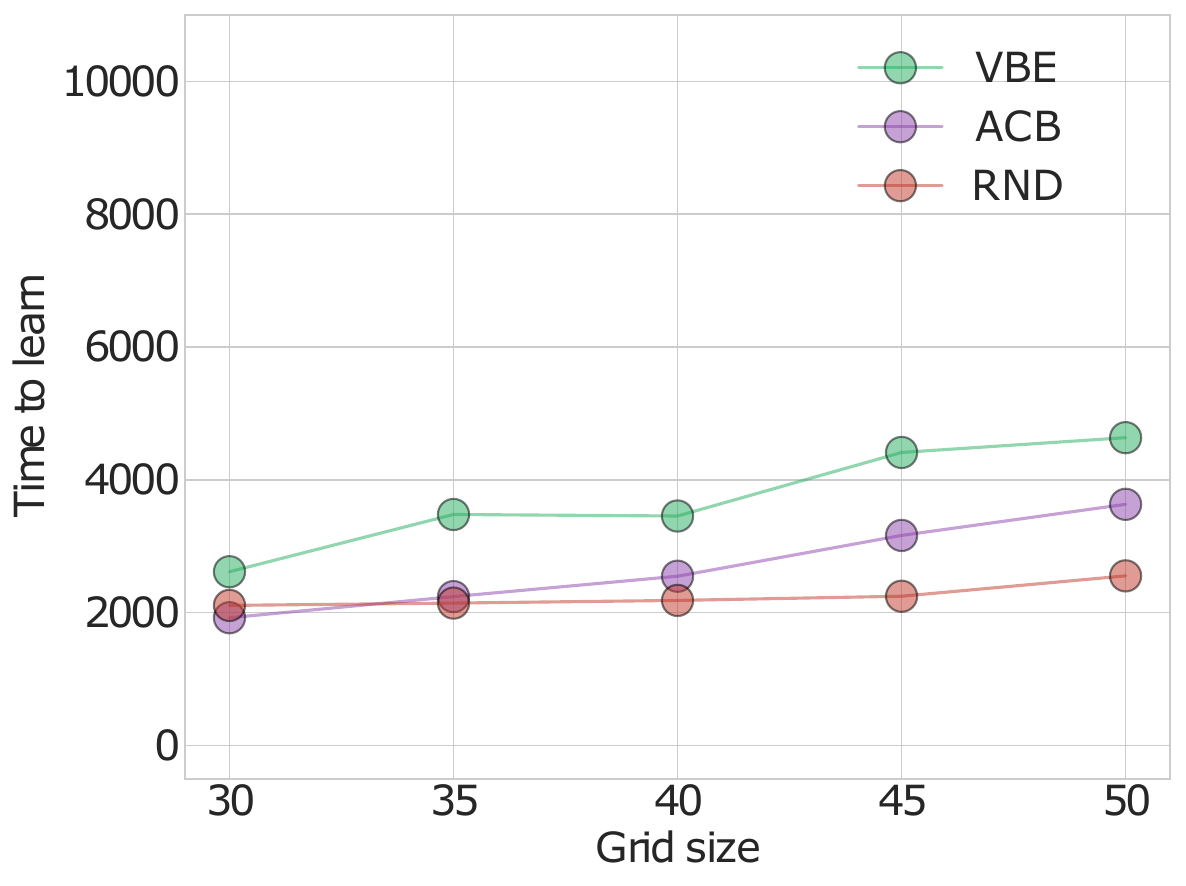}
        \caption{Optimistic target policy}
        \label{fig:ds_rb_l_optimistic_best}
    \end{subfigure}
    \hfill%\hfil
    \centering
    \begin{subfigure}{0.49\textwidth}
        \includegraphics[width=\hsize]{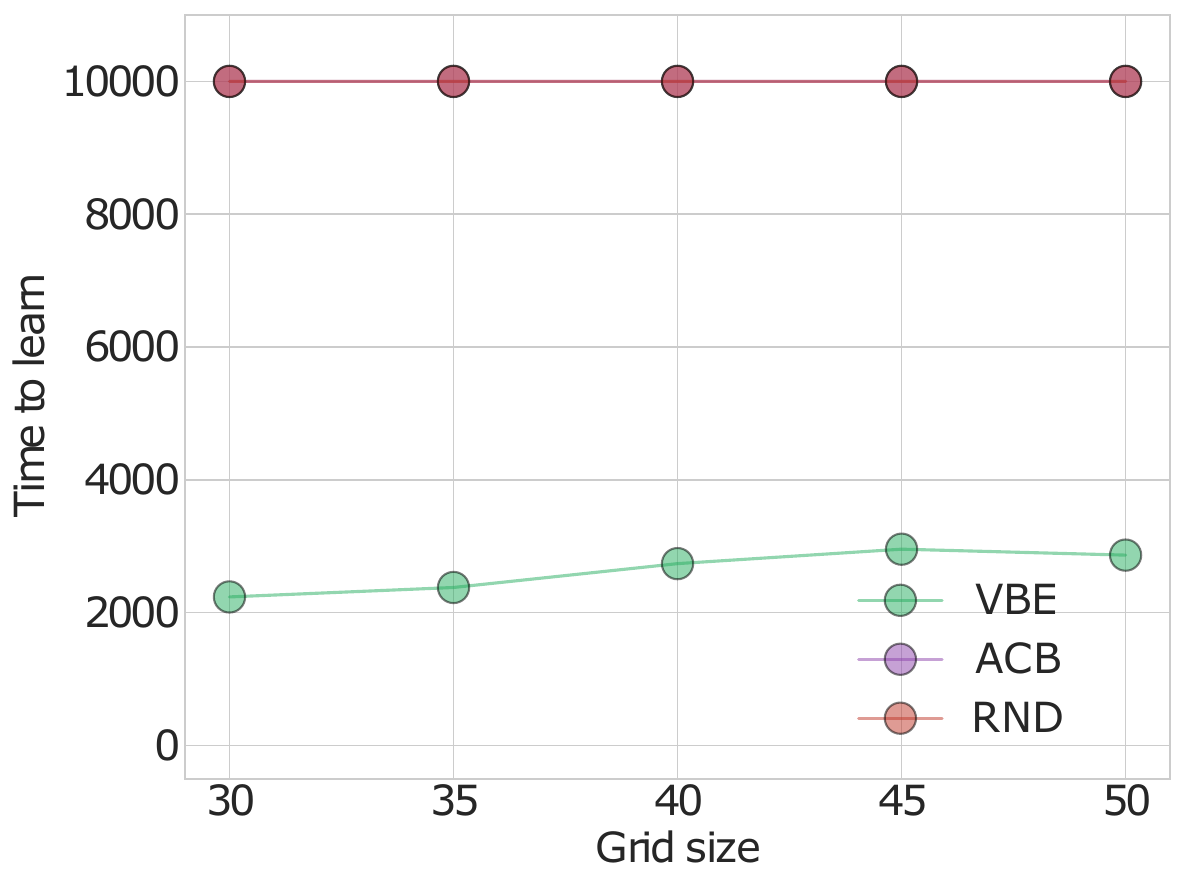}
        \caption{Greedy target policy}
        \label{fig:ds_rb_l_greedy_best}
    \end{subfigure}
    \caption{This figure compare optimistic target policy to Greedy target policy: (a) the agents use an on-policy (optimistic) updates, and (b) the agents use an off-policy (greedy) updates. ACB and RND fail with the greedy policy, whereas with optimistic target policy they outperform VBE. VBE however, does well with a greedy policy compared to the optimistic one. These agents use a single linear-layer with bias term.}
    \vspace{-0.3cm}
    \label{fig:ds_rb_l_greedy_optimistic}
\end{figure*}

Additionally, we investigated into the role of the target policy in ACB and RND. During these experiments with ACB and RND we noticed an interesting phenomenon: using an optimistic target policy allows ACB and RND to learn the optimal policy quickly on various grid-sizes of the Deepsea environment (Figure~\ref{fig:ds_rb_l_greedy_optimistic}), and using a greedy target policy for ACB and RND would cause the agents to fail to learn the optimal policy (Figure~\ref{fig:ds_l_best},~\ref{fig:ds_nn_best}). This is interesting as in either case we do not expect ACB and RND to be able to cover the entire state space based on random initialization because these reward bonus methods do not provide optimism for unseen state-action pairs -- that is, they do not provide first-visit optimism. Upon investigating the phenomenon, we found out that this happens because of the bias term in the linear layer, the momentum term in the optimizer and because the intrinsic value function in these methods are designed to be non-episodic. When the optimistic target policy is used along with momentum during optimization, the bias term  of the approximator consistently increases. Since the bias-term is a shared parameter, the increase in its value causes the intrinsic action values to start increasing, providing the optimism for unseen action-values as well -- allowing for the agent to cover the state space and thus learn the optimal policy. In case of the greedy target policy this phenomenon does not arise; the intrinsic values do not increase, and thus the agent fails to cover the state space. In Section~\ref{sec:pure-exploration}, we show that if we use tabular features with a linear-layer without any bias term then ACB and RND fail to cover the state space, reflecting that they do not provide first-visit optimism. 

\end{document}